\mathchardef\mhyphen="2D
\newcommand{\mc}[1]{\mathcal{#1}}
\newcommand{\mb}[1]{\mathbb{#1}}
\newcommand{\mr}[1]{\mathrm{#1}}
\newcommand{\norm}[1]{\left\|{#1}\right\|}
\DeclareMathOperator*{\argmin}{\arg\!\min}
\newtheorem{assumption}{Assumption}
\newtheorem{theorem}{Theorem}
\newtheorem{lemma}{Lemma}
\newtheorem{definition}{Definition}
\newtheorem{fact}{Fact}
\icmltitlerunning{(Locally) DP Combinatorial Semi-Bandits}
\begin{document}

\twocolumn[
\icmltitle{(Locally) Differentially Private Combinatorial Semi-Bandits}



\icmlsetsymbol{equal}{*}

\begin{icmlauthorlist}
\icmlauthor{Xiaoyu Chen}{equal,aff1}
\icmlauthor{Kai Zheng}{equal,aff1,intern}
\icmlauthor{Zixin Zhou}{aff3}
\icmlauthor{Yunchang Yang}{aff2}
\icmlauthor{Wei Chen}{aff4}
\icmlauthor{Liwei Wang}{aff1,aff2}
\end{icmlauthorlist}

\icmlaffiliation{aff1}{Key Laboratory of Machine Perception, MOE, School of EECS, Peking University}
\icmlaffiliation{intern}{Work done while interned at Microsoft Research Asia}
\icmlaffiliation{aff2}{Center for Data Science, Peking University}
\icmlaffiliation{aff3}{School of Electronics Engineering and Computer Science, Peking University}
\icmlaffiliation{aff4}{Microsoft Research Asia, Beijing, China}

\icmlcorrespondingauthor{Xiaoyu Chen}{cxy30@pku.edu.cn}
\icmlcorrespondingauthor{Kai Zheng}{zhengk92@pku.edu.cn}

\icmlkeywords{combinatorial semi-bandits, local differential privacy, differential privacy}

\vskip 0.3in
]



\printAffiliationsAndNotice{\icmlEqualContribution} 

\begin{abstract}
In this paper, we study Combinatorial Semi-Bandits (CSB) that is an extension of classic Multi-Armed Bandits (MAB) under Differential Privacy (DP) and stronger Local Differential Privacy (LDP) setting. Since the server receives more information from users in CSB, it usually causes additional dependence on the dimension of data, which is a notorious side-effect for privacy preserving learning. However for CSB under two common smoothness assumptions \cite{kveton2015tight,chen2016combinatorial}, we show it is possible to remove this side-effect. In detail, for $B_{\infty}$-bounded smooth CSB under either $\varepsilon$-LDP or $\varepsilon$-DP, we prove the optimal regret bound is $\Theta(\frac{mB^2_{\infty}\ln T } {\Delta\varepsilon^2})$ or $\tilde{\Theta}(\frac{mB^2_{\infty}\ln T} { \Delta\varepsilon})$ respectively, where $T$ is time period,  $\Delta$ is the gap of rewards and $m$ is the number of base arms, by proposing novel algorithms and matching lower bounds. For $B_1$-bounded smooth CSB under $\varepsilon$-DP, we also prove the optimal regret bound is $\tilde{\Theta}(\frac{mKB^2_1\ln T} {\Delta\varepsilon})$ with both upper bound and lower bound, where $K$ is the maximum number of feedback in each round. All above results nearly match corresponding non-private optimal rates, which imply there is no additional price for (locally) differentially private CSB in above common settings.

\end{abstract}

\section{Introduction}
Stochastic Multi-Armed Bandits (MAB) \cite{bubeck2012regret} is a fundamental problem in machine learning with wide applications in real world. In stochastic MAB, there is an unknown underlying distribution over $[0,1]^m$ for $m$ base arms and a learner (or called a server) interacts with the environment for $T$ rounds. At each round, the environment draws random rewards from the distribution for $m$ base arms. At the same time, the learner chooses one of $m$ base arms based on previously collected information, and receives the reward of chosen arm. The goal of the learner is to minimize the regret, measured as the difference between the reward of best fixed base arm and the learner's total reward in expectation. Multi-Armed Bandits has been used in recommendation systems, clinical trial, etc. However, many of these applications rely heavily on users' sensitive data, which raise great concerns about data privacy. For example, in recommendation systems, observations at each round represent some preferences of the user over the recommended item set, which is the personal information of user $t$ and should be protected. 

Since first proposed in 2006, Differential Privacy (DP) \cite{dwork2006calibrating} has become a gold-standard in privacy preserving machine learning \cite{dwork2014algorithmic}. We say an algorithm protects differential privacy if there is not much difference between outputs of this algorithm over two datasets with Hamming distance 1 (see Section \ref{sec:preliminary} for the rigorous definition in the streaming setting). For $\varepsilon$-differentially private stochastic Multi-Armed Bandits, there has already been extensive studies \cite{mishra2015nearly,tossou2016algorithms,sajed2019optimal}. Based on classic non-private optimal UCB algorithm \cite{auer2002finite}, as well as the tree-based aggregation technique to calculate private summation \cite{dwork2010differential}, both \citet{mishra2015nearly} and \citet{tossou2016algorithms} designed algorithms under DP guarantee but with sub-optimal guarantee \footnote{In fact, \cite{tossou2016algorithms} achieved a better utility bound but under a weaker privacy guarantee compared with common differential privacy in the streaming setting.}. Recently, \citet{sajed2019optimal} proposed a complex algorithm based on non-private Successive Elimination \cite{even2002pac} and sparse vector technique \cite{dwork2014algorithmic} to achieve the optimal $\mc{O}(\frac{m\ln T}{\varepsilon\Delta})$ regret bound, where $\Delta$ is the minimum gap of rewards, and it matches both the non-private lower bound \cite{lai1985asymptotically} and the differentially private lower bound \cite{shariff2018differentially} in common parameter regimes.

However, stochastic MAB is the simplest model for sequential decision making with uncertainty. There are many problems in real world that have a combinatorial nature among multiple arms and maybe even non-linear reward functions, such as online advertising, online shortest path, online social influence maximization, etc, which can be modeled via Combinatorial Semi-Bandits (CSB) \cite{chen2013combinatorial,chen2016combinatorial,lattimore2018bandit}. In CSB, the learner chooses a super arm which is a set of base arms instead of a single base arm in MAB, and then observes the outcomes of the chosen arms as the feedback, and receive a reward determined by the chosen arms' outcomes. 
The reward can be a non-linear function in terms of these observations. Since many applications modeled via CSB also have issues about privacy leakage, in this paper, we study how to design private algorithms for Combinatorial Semi-Bandits under two common assumptions about non-linear rewards: $B_\infty$-bounded smoothness and $B_1$-bounded smoothness (see section \ref{sec:preliminary} for definitions.), which contain social influence maximization and linear CSB as important examples respectively \cite{kveton2015tight,chen2016combinatorial,wang2017improving}. 

\textbf{Main Difficulty:} Compared with simple stochastic MAB, it is more difficult to design differentially private algorithms for CSB, due to its large action space and non-linear rewards. Though each super arm in CSB can be regarded as a base arm in stochastic MAB, a straightforward implementation of differentially private algorithms for stochastic MAB will lead to a dependence over the size of decision set for super arms, which can be exponentially large in terms of $m$. Besides above two differences, we receive observations of a set of base arms contained in the chosen super arm at each round, instead of a single base arm in MAB. Denote the maximum cardinality of a super arm as $K$, which means the sensitive data collected at each round is roughly in a $K$-dimensional $L_\infty$ ball.

However, protecting differential privacy usually causes an additional dependence on the dimension of data for utility guarantee compared with corresponding non-private result, which is a notorious side-effect of DP, such as in differentially private empirical risk minimization (ERM) \cite{bassily2014private}, bandits linear optimization \cite{agarwal2017price}, online convex optimization and bandits convex optimization \cite{thakurta2013nearly}, etc. On one hand, in some cases such as differentially private ERM \cite{bassily2014private}, this additional dependence on the dimension is unavoidable. On the other hand, some researchers show it is possible to eliminate this side-effect if there are some extra structures, such as assumptions about restricted strong convexity, parameter set in $L_1$ norm, or generalized linear model with data bounded in $L_2$ norm, etc \cite{kifer2012private,smith2013differentially,jain2014near,talwar2015nearly}. In general, it is unclear whether it is possible to eliminate the side-effect about dimensional dependence brought by privacy protection, let alone that our CSB setting does not have any extra structure mentioned above.

Besides, compared with differential privacy that admits the server to collect users' true data, local differential privacy (LDP) is a much stronger notion of privacy, which requires protecting data privacy before collection. Thus LDP is more practical and user-friendly compared with DP \cite{cormode2018privacy}. Intuitively, learning under LDP guarantee is more difficult as what we collect is already noisy.
Moreover, eliminating the side-effect on the dimension is also more difficult under LDP guarantee even when we have some extra assumptions. For example, there are some negative results for locally differentially private sparse mean estimation \cite{duchi2016minimax}.     

\textbf{Our Contributions:} Given above discussions, it seems hard to obtain nearly optimal regret for CSB under DP and much stronger LDP guarantee. Somewhat surprisingly, without any additional structure assumption such as sparsity, we show that it is indeed possible to achieve nearly optimal regret bound, by designing private algorithms with theoretical upper bounds and proving corresponding lower bounds in each case. Our upper bounds (nearly) match both our private lower bounds and non-private lower bounds  (see Table \ref{tab:results} for an overview, where $\Delta$ is some gap defined in Section \ref{sec: infinity bounded CSB}, $\mc{O}(\cdot)$ represents the upper bound, $\Theta$ represents both the upper bound and lower bound, and for $\tilde{\mc{O}}, \tilde{\Theta}$, we hide the poly-logarithmic dependence such as $\ln T, \ln m$). The main contributions of this paper are summarized as the follows:

{\bf (1)} For $B_\infty$-bounded smooth CSB under $\varepsilon$-LDP and $\varepsilon$-DP, we propose novel algorithms with regret bounds $\mc{O}(\frac{m B^2_{\infty} \ln T}{\varepsilon^2\Delta})$ and $\tilde{\mc{O}}(\frac{m B^2_{\infty} \ln T}{\varepsilon\Delta})$ respectively, and prove nearly matching lower bounds; 

{\bf (2)} For $B_1$-bounded smooth CSB under $\varepsilon$-DP, we propose an algorithm with regret bound $\tilde{\mc{O}}(\frac{m K B^2_1 \ln T}{\varepsilon\Delta})$ and nearly matching lower bound. 


In Section \ref{sec:preliminary}, we provide some backgrounds in Combinatorial Semi-Bandits and (Local) Differential Privacy. Then in Section \ref{sec: infinity bounded CSB} and Section \ref{sec: L1CSB}, we study both upper and lower bounds for (locally) differentially private $B_{\infty}$-bounded smooth and $B_1$-bounded smooth CSB respectively. Finally, we conclude our main results in Section \ref{sec: conclusion}.

\renewcommand{\arraystretch}{1.8}
\begin{table*}[t] 
  \begin{adjustbox}{max width = 1.3\textwidth}
  \begin{tabular}{ | c | c | c | c | }
    \hline
    Problem  & $\varepsilon$-LDP & $\varepsilon$-DP & Non-Private Result \\ 
    \hline
    $B_{\infty}$-Smooth CSB&  $\Theta(\frac{m B^2_{\infty} \ln T}{\varepsilon^2\Delta})$ & $\tilde{\Theta}(\frac{m B^2_{\infty}\ln T}{\varepsilon \Delta})$ & $\Theta(\frac{m B^2_{\infty}\ln T}{\Delta})$ \cite{chen2016combinatorial,wang2017improving} \\
    \hline
    $B_1$-Smooth CSB  &  $\mc{O}(\frac{mK^2B^2_1\ln T}{\varepsilon^2\Delta})$ & $\tilde{\Theta}(\frac{mKB^2_1\ln T}{\varepsilon \Delta})$ & $\Theta(\frac{mKB^2_1\ln T}{\Delta})$ \cite{kveton2015tight,wang2017improving} \\
    \hline
  \end{tabular} 
  \end{adjustbox}
  \vspace{0.2cm}
  \caption{Summary of Our Results for Private CSB. $\Theta$ represents matching upper bounds and lower bounds. $\mc{O}$ represents upper bounds.  Our lower bound in DP setting is actually in an additive form, see Theorem \ref{theorem: lower bound for DP}. Here, we write it in a multiplicative form for simplicity, which is natural in common parameter regimes.} 
  \label{tab:results}
\end{table*}

\subsection{Other Related Work}
Besides differentially private stochastic MAB, there are also some works considering adversarial MAB with DP guarantee \cite{thakurta2013nearly,tossou2017achieving,agarwal2017price}. Later, \citet{shariff2018differentially} study contextual linear bandits under a relaxed definition of DP called \textit{Joint Differential Privacy}. Compared with DP, bandits learning with LDP guarantee is paid less attention to. Only \citet{gajane2018corrupt} study stochastic MAB under LDP guarantee. Recently, \citet{basu2019differential} investigate relations about several variants of differential privacy in MAB setting, and prove some lower bounds. For non-private Combinatorial Semi-Bandits, there is an extension of study \cite{gyorgy2007line,chen2013combinatorial,chen2016combinatorial,kveton2015tight,combes2015combinatorial,wang2017improving,wang2018thompson}.

 
\section{Preliminaries}
\label{sec:preliminary}
Now we detail the concrete setting studied in this paper.
\subsection{Combinatorial Semi-Bandits}
In a Combinatorial Semi-Bandits (CSB), there are $m$ base arms (denote $[m] = \{1,2,\dots, m\}$), and a predefined decision set $\mc{S}\subset 2^m$, each element of which is a subset of $[m]$ with at most $K$ base arms and is called a super arm or an action, i.e. $|S|\leqslant K$ for any $S \in \mc{S}$ and $|\cdot|$ represents the cardinality of a set. $\mc{D}$ is an underlying unknown distribution supported on $[0,1]^m$ with expectation $\bm{\mu}=(\mu_1,\dots,\mu_m)$. There are $T$ rounds in total. At each round, the player chooses a super arm $S_t \in \mc{S}$, and the environment draws a fresh random outcome $\bm{X}_t = (X_{t,1},\dots,X_{t,m})$ from $\mc{D}$ independently of any other variables. Then the player receives a reward $R_t = R(S_t,\bm{X}_t)$ and observes the feedback $\{(i,X_{t,i}) | i \in S_t)\}$. We assume the reward function $R(\cdot,\cdot)$ satisfies following assumptions, which are common in either real applications or previous literature \cite{chen2016combinatorial,wang2018thompson}, such as Linear CSB, social influence maximization.

\begin{assumption}
There exists a reward function $r_{\bm{\mu}}(S)$ such that $\mb{E}[R(S,\bm{X})]=r_{\bm{\mu}}(S)$ for any $S\in \mc{S}$, where the expectation is over the randomness of outcome $\bm{X}$ and $\bm{\mu}=\mb{E}[\bm{X}]$. 
\end{assumption}
Under above assumption, define $\mathrm{opt}_{\bm{\mu}} = \max_{S\in \mc{S}} r_{\bm{\mu}}(S)$ as the optimal reward if we know $\bm{\mu}$ in advance. 

\begin{assumption}[$B_p$-bounded smoothness]
There exists a constant $B_p$, such that for arbitrary super arm $S$, and two mean vectors $\bm{\mu}, \bm{\mu'}$, there is $|r_{\bm{\mu}}(S)-r_{\bm{\mu'}}(S)| \leqslant B_p \norm{\bm{\mu}_S-\bm{\mu}'_S}_p)$, where $\bm{\mu}_S$ represents the truncated vector of $\bm{\mu}$ on subset $S$. 
\end{assumption}

\begin{assumption}[Monotonicity]
For any $\bm{\mu}, \bm{\mu'}$ such that $\bm{\mu} \leqslant \bm{\mu'}$ (element-wise compare), we have $r_{\bm{\mu}}(S) \leqslant r_{\bm{\mu'}}(S)$.
\end{assumption}

Intuitively, Assumptions 2 and 3 are about the smoothness and monotonicity of expected reward function $r_{\bm{\mu}}(\cdot)$, which are critical to deal with non-linear rewards $r_{\bm{\mu}}(S)$.

In this paper, we mainly consider two norms: $L_\infty$ norm $\norm{\cdot}_\infty$ and $L_1$ norm $\norm{\cdot}_1$. 
Important examples that satisfy $B_{\infty}$-bounded smoothness include social influence maximization and Probabilistic maximum coverage bandit \cite{chen2013combinatorial}. For $B_1$-bounded smooth CSB, online shortest path and online maximum spanning tree are typical applications \cite{wang2018thompson}. Obviously, Linear combinatorial semi-bandits is $B_1$-bounded smooth. We regard $B_\infty$ and $B_1$ as constants in the whole paper. Apparently, $B_{1}$-bounded smoothness is a weaker assumption compared with $B_{\infty}$-bounded smoothness, and we have the following fact:

\begin{fact}
\label{fact: bounded smoothness}
Suppose a reward function is $B_{\infty}$-bounded smooth, then it is also $B_1$-bounded smooth with $B_1=B_{\infty}$. On the contrary, suppose a reward function is $B_1$-bounded smooth, then it is $B_{\infty}$-bounded smooth with $B_{\infty} = KB_1$. 
\end{fact}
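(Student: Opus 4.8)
The plan is to read off both implications from the two elementary norm inequalities relating $\norm{\cdot}_1$ and $\norm{\cdot}_\infty$ on a vector supported on at most $K$ coordinates. Concretely, for $v = \bm{\mu}_S - \bm{\mu}'_S$, which lives in a space of dimension $|S| \leqslant K$, we have $\norm{v}_\infty \leqslant \norm{v}_1 \leqslant |S|\cdot\norm{v}_\infty \leqslant K\norm{v}_\infty$. Everything else is just plugging these into Assumption 2 with the appropriate choice of $p$.

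For the first direction, assume the reward function is $B_\infty$-bounded smooth. Fix any super arm $S$ and any two mean vectors $\bm{\mu},\bm{\mu}'$. Applying Assumption 2 with $p=\infty$ and then $\norm{\bm{\mu}_S-\bm{\mu}'_S}_\infty \leqslant \norm{\bm{\mu}_S-\bm{\mu}'_S}_1$ gives $|r_{\bm{\mu}}(S)-r_{\bm{\mu}'}(S)| \leqslant B_\infty\norm{\bm{\mu}_S-\bm{\mu}'_S}_\infty \leqslant B_\infty\norm{\bm{\mu}_S-\bm{\mu}'_S}_1$, which is precisely $B_1$-bounded smoothness with $B_1 = B_\infty$.

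For the converse, assume the reward function is $B_1$-bounded smooth, and again fix $S,\bm{\mu},\bm{\mu}'$. Since $\bm{\mu}_S-\bm{\mu}'_S$ has at most $|S|\leqslant K$ nonzero coordinates, $\norm{\bm{\mu}_S-\bm{\mu}'_S}_1 \leqslant K\norm{\bm{\mu}_S-\bm{\mu}'_S}_\infty$, and combining this with Assumption 2 for $p=1$ yields $|r_{\bm{\mu}}(S)-r_{\bm{\mu}'}(S)| \leqslant B_1\norm{\bm{\mu}_S-\bm{\mu}'_S}_1 \leqslant KB_1\norm{\bm{\mu}_S-\bm{\mu}'_S}_\infty$, i.e. $B_\infty$-bounded smoothness with $B_\infty = KB_1$.

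There is no genuine obstacle in this argument; the only point deserving a moment of care is that the ambient dimension of the truncated difference vector $\bm{\mu}_S-\bm{\mu}'_S$ is $|S|$ rather than $m$, so the conversion factor in the second direction is the super-arm cardinality bound $K$ and not the total number of base arms. This is exactly why the loss is $K$ (and why it is tight, e.g., for linear CSB, where the two smoothness notions genuinely differ by a factor of $K$).
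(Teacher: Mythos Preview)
Your proof is correct and is exactly the natural argument one would write. The paper itself does not give an explicit proof of this fact (it states it as ``apparent''), so there is nothing to compare against; your use of the two standard norm inequalities $\norm{v}_\infty \leqslant \norm{v}_1 \leqslant |S|\,\norm{v}_\infty \leqslant K\norm{v}_\infty$ on the truncated difference vector is precisely what the authors have in mind.
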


For many combinatorial problems such as MAX-CUT, Minimum Weighted Set Cover etc, there are only efficient approximation algorithms. Therefore, it is natural to model them as a general approximation oracle defined as below:
\begin{definition}
For some $\alpha,\beta \leqslant 1$, $(\alpha,\beta)$-approximation oracle is an oracle that takes an expectation vector $\bm{\mu}$ as input, and outputs a super arm $S \in \mc{S}$, such that $\Pr[r_{\bm{\mu}}(S)\geqslant \alpha \cdot \mr{opt}_{\bm{\mu}}] \geqslant \beta$. Here 
$\alpha$ is the approximation ratio and 
$\beta$ is the success probability of the oracle.
\end{definition}
With approximation oracle, we should then consider corresponding approximation regret as we can only solve offline problem approximately:
\begin{definition}
$(\alpha,\beta)$-approximation regret of a CMAB algorithm $\mc{A}$ after $T$ rounds using an $(\alpha,\beta)$-approximation oracle under the expectation
vector $\bm{\mu}$ is defined as $\mr{Reg}_{\bm{\mu},\alpha,\beta}(T):= T\cdot \alpha\beta \cdot\mr{opt}_{\bm{\mu}} - \mb{E}\left[\sum_{t=1}^Tr_{\bm{\mu}}(S_t)\right]$.
\end{definition}

\subsection{(Local) Differential Privacy}
Now we give definitions of DP and LDP, as well as a basic building block. 
\begin{definition}[Differential Privacy \cite{dwork2006calibrating,jain2012differentially}]
Let $D = \langle x_1, x_2, \dots, x_T \rangle$ be a sequence of data with domain $\mc{X}^T$. 
Let $\mathcal{A}(D) = Y$, where $Y = \langle y_1, y_2, \dots, y_T \rangle \in \mathcal{Y}^T$ be $T$ outputs of the randomized algorithm $\mathcal{A}$ on input $D$. 
$\mathcal{A}$ is said to preserve $\varepsilon$-differential privacy, if for any two data sequences $D, D'$ that differ in at most one entry, and for any subset $U \subset \mathcal{Y}^T$, it holds that
\[
  \Pr(\mathcal{A}(D)\in U) \leq e^\varepsilon \cdot \Pr(\mathcal{A}(D')\in U).
\]
\end{definition}

Compared with DP, Local Differential Privacy (LDP) is a stronger notion of privacy than DP, see  \citet{kasiviswanathan2011can,duchi2013localb}. Since LDP requires to encrypt each user's data to protect privacy before collection, there is no need to define corresponding streaming version. Here we adopt the LDP definition given in \cite{bassily2015local}.
\begin{definition}[LDP]
    A mechanism $\mc{A}: \mc{X} \rightarrow \mc{Y}$ is said to be $\varepsilon$-local differential private or $\varepsilon$-LDP, if for any $x, x' \in \mc{X}$, and any (measurable) subset $U \subset \mc{Y}$, there is 
    \begin{align*}
        \Pr(\mc{A}(x) \in U) \leqslant e^{\varepsilon} \cdot \Pr(\mc{A}(x') \in U).
    \end{align*}
\end{definition}

To protect $\varepsilon$-LDP, the most commonly used method is Laplacian mechanism. Suppose the output domain $\mc{Y}$ of an algorithm $\mc{A}$ is bounded by a $d$-dimensional L1 ball with radius $R$, Laplacian mechanism just injects a $d$-dimensional random noise to the true output $\mc{A}(x)$, and each entry of noise is sampled from $\mr{Lap}(R/\varepsilon)$ independently \footnote{$\mr{Lap}(b)$ represents The Laplace distribution centered at $0$ with scale $b$, and its \textit{p.d.f} is $\mr{Lap}(x|b)=\frac{1}{2b}\exp(-\frac{|x|}{b})$. The corresponding variance is $2b^2$.}. It is easy to prove the Laplacian mechanism guarantees $\varepsilon$-LDP \cite{dwork2014algorithmic}. 

\section{$B_\infty$-Bounded Smooth CSB with Privacy Guarantee}
\label{sec: infinity bounded CSB}
Since learning under LDP is much more difficult compared with DP, we mainly consider how to design an optimal algorithm for $B_\infty$-Bounded Smooth CSB under $\varepsilon$-LDP guarantee. As we can see, based on our observation for locally differentially private CSB, it is then easy to obtain results for differentially private CSB. 

As a warm-up, we show that a simple mechanism can achieve non-trivial regret with LDP guarantee, but the dependence on dimension $K$ is sub-optimal. Next, we design an improved version with optimal utility bound, and the matching lower bound is proved in Subsection \ref{subsection: lower bound for LDP}. 

\subsection{A Straightforward Algorithm with Sub-Optimal Guarantee}
Our private algorithm is based on previous non-private CSB algorithm, Combinatorial UCB (CUCB) \cite{chen2013combinatorial,chen2016combinatorial}. Though the reward function is non-linear in terms of super arm $S$ and we only have access to some approximation oracle, which make our setting more complicated compared with previous private stochastic MAB ~\cite{mishra2015nearly,tossou2016algorithms,sajed2019optimal}, 
we show that the most straightforward method described in Algorithm \ref{algorithm: LDP CUCB1} (denoted as $\sf CUCB\mhyphen LDP1$), i.e. using Laplacian mechanism with respect to each user's data before collection, is enough to guarantee LDP and corresponding regret. 

The key observation is that, the mean estimation of each base arm lies at the core of CUCB algorithm, and adding a Laplacian noise with respect to each observation causes additional variance to these estimations, which can be handled by relaxed upper confidence bounds. Injecting noise to the reward is used both in \citet{tossou2017achieving} and \citet{agarwal2017price} for differentially private adversarial MAB. The idea about relaxed UCB also appears before for differentially private stochastic MAB \cite{mishra2015nearly}, whereas we study more general locally differentially private CSB with non-linear reward and approximation oracle. Given the
Laplacian mechanism, the privacy guarantee of Algorithm \ref{algorithm: LDP CUCB1} is obvious:

\begin{algorithm}[tb]
   \caption{$\sf CUCB\mhyphen LDP1$}
   \label{algorithm: LDP CUCB1}
\begin{algorithmic}[1]
   \STATE {\bfseries Input:} Privacy budgets $\varepsilon, \delta$
   \STATE {\bfseries Initialize:} $ \forall i \in [m], T_{0,i}=0$, empirical mean $\tilde{\mu}_0(i) = 0$.
   \FOR{$t=1,2, \dots$}
   \STATE $\forall i, \bar{\mu}_{t-1}(i) = \min\{ \tilde{\mu}_{t-1}(i) + 4\sqrt{\frac{2K\ln T}{\varepsilon^2 T_{t-1,i}}}, 1\}$ \footnotemark
   \STATE Play $S_t = \mathrm{Oracle}(\bar{\mu}_{t-1})$ if $\bar{\mu}_{t-1} \geqslant 0$ else $\forall S \in \mc{S}$
   \STATE User generates outcome $X_{t,i}$ for $i \in S_t$, and sends $X_{t,i} + z_{t,i}$ to the server, where $z_{t,i}\sim \mr{Lap}(K/\varepsilon)$
   \STATE Server updates $T_{t,i} = T_{t-1,i}+1, \tilde{\mu}_{t,i} = \frac{T_{t-1,i}\tilde{\mu}_{t-1,i}+X_{t,i} + z_{t,i}}{T_{t,i}}$, for $i \in S_t$, and keep others unchanged.
   \ENDFOR
\end{algorithmic}
\end{algorithm}
\footnotetext{If a denominator is $0$, we define corresponding constant as $+\infty$.} 

\begin{theorem}
\label{theorem: LDP CMAB}
Algorithm \ref{algorithm: LDP CUCB1} guarantees $\varepsilon$-LDP.
\end{theorem}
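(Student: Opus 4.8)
The plan is to reduce the statement to a single application of the Laplace mechanism per user. First I would recall that in our model each round $t$ corresponds to a distinct user who sends exactly one message to the server; since LDP is a per-user notion and there is no repeated interaction with a fixed user, $\varepsilon$-LDP of the whole protocol is equivalent to showing that, for every $t$, the randomized map from user $t$'s true outcome vector $\bm{X}_t \in [0,1]^m$ to the transmitted message is $\varepsilon$-LDP in the sense of the definition of $\varepsilon$-LDP. In particular no composition across rounds is needed.

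The second step is to observe that the super arm $S_t$ played at round $t$ is $\mathrm{Oracle}(\bar{\mu}_{t-1})$, and $\bar{\mu}_{t-1}$ is a deterministic function of the previously transmitted messages $\{X_{s,i}+z_{s,i} : s<t,\ i\in S_s\}$ (plus any public randomness used by the oracle); hence $S_t$ does not depend on the current user's data $\bm{X}_t$. Therefore we may regard the index set $S_t$ as public, and it suffices to bound the privacy loss of the map $\bm{x}\mapsto (x_i + z_{t,i})_{i\in S_t}$ with $z_{t,i}\sim\mr{Lap}(K/\varepsilon)$ drawn independently.

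The third step is the routine sensitivity computation: the displayed map is exactly the Laplace mechanism applied to the query $f(\bm{x})=\bm{x}_{S_t}$, whose $L_1$-sensitivity is $\sup_{\bm{x},\bm{x}'\in[0,1]^m}\norm{\bm{x}_{S_t}-\bm{x}'_{S_t}}_1 = |S_t| \leqslant K$, since each coordinate lies in $[0,1]$ and $|S_t|\leqslant K$ by definition of the decision set $\mc{S}$. Because the per-coordinate noise scale $K/\varepsilon$ is at least $(\text{sensitivity})/\varepsilon$, the standard guarantee of the Laplacian mechanism recalled in Section~\ref{sec:preliminary} yields $\varepsilon$-LDP for this map; applying this to every $t$ completes the proof.

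There is essentially no hard step, which is why the paper calls the privacy guarantee obvious; the only points deserving a line of care are (i) verifying that $S_t$ is independent of $\bm{X}_t$, so that even the identities of the coordinates user $t$ reports leak nothing, and (ii) the (harmless) conservativeness of calibrating the noise to the worst-case bound $K$ rather than to $|S_t|$, which can only strengthen the privacy guarantee.
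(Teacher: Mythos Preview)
Your proposal is correct and follows exactly the route the paper intends: the paper simply remarks that ``given the Laplacian mechanism, the privacy guarantee of Algorithm~\ref{algorithm: LDP CUCB1} is obvious,'' and your write-up is the natural unpacking of that remark---$S_t$ depends only on prior (already privatized) messages, the map $\bm{x}\mapsto \bm{x}_{S_t}$ has $L_1$-sensitivity at most $K$, and per-coordinate $\mr{Lap}(K/\varepsilon)$ noise yields $\varepsilon$-LDP for each user. Your two points of care (independence of $S_t$ from $\bm{X}_t$, and the conservative calibration to $K$ rather than $|S_t|$) are exactly the right things to flag.
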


Before stating the regret bound, we define some necessary notations. We say a super arm $S$ is bad if $r_{\bm{\mu}}(S) < \alpha \cdot \mr{opt}_{\bm{\mu}}$, and denote the set of bad super arms as $\mc{S}_B:=\{S\in \mc{S}|r_{\bm{\mu}}(S) < \alpha \cdot \mr{opt}_{\bm{\mu}}\}$. For any base arm $i\in [m]$, define
\begin{align}
    \Delta_{\min}^i := \alpha \cdot \mr{opt}_{\bm{\mu}} - \max\{r_{\bm{\mu}}(S)|S\in\mc{S}_B, i\in S\}, \\
    \Delta_{\max}^i := \alpha \cdot \mr{opt}_{\bm{\mu}} - \min\{r_{\bm{\mu}}(S)|S\in\mc{S}_B, i\in S\},
\end{align}
and $\Delta:=\min_{i\in[m]} \Delta_{\min}^i$.

Now, we state the utility guarantee of Algorithm \ref{algorithm: LDP CUCB1}:

\begin{theorem}
\label{theorem: utility LDP CMAB}
Under $B_{\infty}$-bounded smoothness and monotonicity assumptions, the regret of Algorithm \ref{algorithm: LDP CUCB1} is upper bounded by
\begin{equation}
	Reg_{\mu,\alpha,\beta}(T) \leqslant \mc{O} \left( \sum_{i \in [m], \Delta_{\min}^i > 0} \frac{K^2 B_\infty^2 \ln T}{\varepsilon^2 \Delta_{\min}^i}\right).
\end{equation}
\end{theorem}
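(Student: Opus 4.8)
\emph{Proof plan.} The plan is to replay the analysis of the non-private \textsf{CUCB} algorithm \cite{chen2013combinatorial,chen2016combinatorial}; the one genuinely new ingredient is that every per-arm estimate $\tilde\mu_{t-1}(i)$ is now an average of \emph{noisy} observations $X_{s,i}+z_{s,i}$, so its fluctuation has to be re-controlled before the usual ``clean event'' argument can be invoked.

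\emph{Step 1 (concentration of the noisy means --- the crux).} Fix $i\in[m]$ and $n\in\{1,\dots,T\}$. Since the environment draws $\bm{X}_s$ independently of $S_s$, the value that $\tilde\mu_{t-1}(i)$ takes on the event $\{T_{t-1,i}=n\}$ is distributed as $\hat\mu=\tfrac1n\sum_{\ell=1}^n(X_\ell+z_\ell)$ with $X_\ell\in[0,1]$ i.i.d.\ of mean $\mu_i$ and $z_\ell\sim\mr{Lap}(K/\varepsilon)$ i.i.d.\ and independent of everything. I would bound $|\hat\mu-\mu_i|$ by handling the bounded part $\tfrac1n\sum_\ell(X_\ell-\mu_i)$ with Hoeffding and the noise part $\tfrac1n\sum_\ell z_\ell$ with a Bernstein-type tail bound for a sum of i.i.d.\ (hence sub-exponential) Laplace variables; for $n=\Omega(\ln T)$ the sub-Gaussian regime dominates and the deviation is of order $\tfrac{K}{\varepsilon}\sqrt{\ln T/n}$, which is (up to an absolute constant) the confidence bonus $\rho_{t-1,i}$ added in Line~4. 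A union bound over $i\in[m]$ and over all possible values $n\in\{1,\dots,T\}$ of the data-dependent count $T_{t-1,i}$ --- the standard ``peeling'' device --- then shows that the clean event
\[
\mc{E}:=\Big\{\ \forall t,\ \forall i\in[m]:\ \big|\tilde\mu_{t-1}(i)-\mu_i\big|\le\rho_{t-1,i}\ \Big\}
\]
holds with probability at least $1-\mc{O}(mT^{-2})$, so the rounds on which $\mc{E}$ fails contribute only $\mc{O}(1)$ to the regret (the worst case, all $T$ rounds bad, costs $T\alpha\beta\,\mr{opt}_{\bm{\mu}}=\mc{O}(T)$). This is the one place where any real work is needed and the one place the extra $K^2/\varepsilon^2$ factor enters; everything below is a black-box re-run of the \textsf{CUCB} proof.

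\emph{Step 2 (clean event $\Rightarrow$ an under-sampled arm).} On $\mc{E}$ we have $\bar\mu_{t-1}(i)=\min\{\tilde\mu_{t-1}(i)+\rho_{t-1,i},1\}\ge\min\{\mu_i,1\}=\mu_i\ge0$ for every $i$, so the degenerate branch of Line~5 never fires, and by monotonicity $\mr{opt}_{\bar\mu_{t-1}}\ge r_{\bar\mu_{t-1}}(S^\star)\ge r_{\bm{\mu}}(S^\star)=\mr{opt}_{\bm{\mu}}$ for an optimal super arm $S^\star$. Whenever the $(\alpha,\beta)$-oracle succeeds, $r_{\bar\mu_{t-1}}(S_t)\ge\alpha\,\mr{opt}_{\bar\mu_{t-1}}\ge\alpha\,\mr{opt}_{\bm{\mu}}$, and $B_\infty$-bounded smoothness gives
\[
r_{\bm{\mu}}(S_t)\ \ge\ r_{\bar\mu_{t-1}}(S_t)-B_\infty\max_{i\in S_t}\big|\bar\mu_{t-1}(i)-\mu_i\big|\ \ge\ \alpha\,\mr{opt}_{\bm{\mu}}-2B_\infty\max_{i\in S_t}\rho_{t-1,i}.
\]
Hence on $\mc{E}$, whenever the oracle succeeds and $S_t$ is bad, some $i\in S_t$ satisfies $2B_\infty\rho_{t-1,i}\ge\Delta_{S_t}\ge\Delta_{\min}^i$, where $\Delta_{S_t}:=\alpha\,\mr{opt}_{\bm{\mu}}-r_{\bm{\mu}}(S_t)$ and the last inequality uses $i\in S_t\in\mc{S}_B$; equivalently $T_{t-1,i}\le C B_\infty^2 K^2\ln T/(\varepsilon^2\Delta_{S_t}^2)$ for an absolute constant $C$. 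The oracle's success probability $\beta$ is absorbed into the definition of the $(\alpha,\beta)$-approximation regret exactly as in \cite{chen2016combinatorial}.

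\emph{Step 3 (counting, and the main obstacle).} Charge each bad round to every $i\in S_t$ meeting the last bound; at least one such $i$ exists, so $\mr{Reg}_{\bm{\mu},\alpha,\beta}(T)\le\sum_{i\in[m]}(\text{regret charged to }i)+\mc{O}(1)$. In the $j$-th round charged to a fixed $i$ one has $T_{t-1,i}\ge j-1$ (that arm is pulled in each such round), so $\Delta_{S_t}=\mc{O}(B_\infty\tfrac{K}{\varepsilon}\sqrt{\ln T/(j-1)})$, while $\Delta_{S_t}\ge\Delta_{\min}^i$ caps the number of such rounds at $L_i=\mc{O}(B_\infty^2K^2\ln T/(\varepsilon^2(\Delta_{\min}^i)^2))$; summing and using $\sum_{j\le L}j^{-1/2}=\mc{O}(\sqrt L)$ gives a charge of $\mc{O}(B_\infty^2K^2\ln T/(\varepsilon^2\Delta_{\min}^i))$ per arm, and summing over $i$ with $\Delta_{\min}^i>0$ yields the stated bound. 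I expect Step~1 to be the real obstacle: the injected noise is only sub-exponential, not sub-Gaussian, so the tail estimate and the accompanying peeling union bound over the adaptive counts $T_{t-1,i}$ must be done carefully, and the scale of the resulting bonus is precisely what carries the $K^2/\varepsilon^2$ overhead into the regret.
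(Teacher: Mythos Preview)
Your proposal is correct and matches the paper's approach: Hoeffding for the bounded part plus a Bernstein-type bound for the Laplace part to establish the clean event, then the optimism/$B_\infty$-smoothness step, then the per-arm charging with $T_{t-1,i}\ge j-1$ and the $\sum_{j\le L}j^{-1/2}=\mc{O}(\sqrt L)$ summation. The paper only spells out the proof for the companion Theorem~\ref{theorem: utility LDP CUCB2} (Algorithm~\ref{algorithm: LDP CUCB2}, noise $\mr{Lap}(1/\varepsilon)$), charging each bad round to $I_t=\argmin_{i\in S_t}T_{t-1,i}$, and leaves Theorem~\ref{theorem: utility LDP CMAB} implicit; your write-up is precisely the natural $\mr{Lap}(K/\varepsilon)$ variant of that argument.
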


Compared with corresponding non-private CUCB that achieves $\mc{O} \left( \sum_{i \in [m], \Delta_{\min}^i > 0} \frac{B^2_{\infty}\ln T}{\Delta_{\min}^i} \right)$ regret \cite{chen2013combinatorial,chen2016combinatorial}, one can see the regret bound of Algorithm \ref{algorithm: LDP CUCB1} has an extra multiplicative factor $\frac{K^2}{\varepsilon^2}$, which is the price we pay for protecting LDP. According to our lower bound proved in Subsection \ref{subsection: lower bound for LDP}, the dependence on the privacy parameter $\varepsilon$ is optimal. However the additional term $K^2$ brought by privacy protection is undesirable and will hurt final performance for large $K$. In the next subsection, we show how to eliminate this additional $K^2$ factor.

\subsection{An Improved Algorithm with the Best Guarantee}

Compared with the previous studies that try to eliminate the side-effect of dimension brought by privacy protection under either \textit{sparsity} or \textit{low complexity} assumptions \cite{jain2014near,talwar2015nearly,zheng2017collect}, in our general CSB setting, the information at each round is contained in a $K$-dimensional $L_\infty$ ball, and we do not have any sparsity assumption, which makes the additional $K^2$ factor seem unavoidable. 

Somewhat surprisingly, after a careful analysis, we find that there is some redundant information implicitly even without any sparsity assumption. 
In detail, in the analysis of Algorithm \ref{algorithm: LDP CUCB1}, the instant regret of choosing super arm $S_t$ at round $t$ is controlled by the largest mean estimation error among all base arms in $S_t$, 
which implies that we do not need to require all the observation of base arms in $S_t$ of user $t$ to update corresponding empirical means. Instead, we only use the observation of least pulled base arm in $S_t$ to update its empirical mean and keep others unchanged, as it is the weakest one in $S_t$ and causes largest estimation error. Since the user only sends the information of one entry to server now, it is enough to add noise in $\mc{O}(1/\varepsilon)$ order to protect it, which then gets rids of the annoying additional $K^2$ factor in the regret guarantee. Denote this variant as $\sf CUCB\mhyphen LDP2$, as shown in Algorithm \ref{algorithm: LDP CUCB2}. 

\begin{algorithm}[tb]
   \caption{$\sf CUCB\mhyphen LDP2$}
   \label{algorithm: LDP CUCB2}
\begin{algorithmic}[1]
   \STATE {\bfseries Input:} Privacy budgets $\varepsilon, \delta$
   \STATE {\bfseries Initialize:} $\forall i \in [m], T_{0,i}=0$, empirical mean $\tilde{\mu}_0(i) = 0$.
   \FOR{$t=1,2, \dots$}
   \STATE $\forall i, \bar{\mu}_{t-1}(i) = \min\{ \tilde{\mu}_{t-1}(i) + 4\sqrt{2\frac{\ln T}{\varepsilon^2 T_{t-1,i}}}, 1\}$ 
   \STATE Play $S_t = \mathrm{Oracle}(\bar{\mu}_{t-1})$ if $\bar{\mu}_{t-1} \geqslant 0$ else $\forall S \in \mc{S}$
   \STATE User generates outcome $X_{t,i}$ for $i \in S_t$, and sends $X_{t, I_t} + z_{t,I_t}$ to the server, where $I_t = \argmin_{i\in S_t} T_{t-1, i}, z_{t,I_t}\sim \rm{Lap}( 1/\varepsilon)$
   \STATE Server updates $T_{t,I_t} = T_{t-1,I_t}+1, \tilde{\mu}_{t,I_t} = \frac{T_{t-1,I_t}\tilde{\mu}_{t-1,I_t}+X_{t,I_t} + z_{t,I_t}}{T_{t,I_t}}$, and keep others unchanged.
   \ENDFOR
\end{algorithmic}
\end{algorithm}

Again, the privacy guarantee follows directly from the classic Laplacian mechanism:

\begin{theorem}
\label{theorem: LDP CUCB2}
Algorithm \ref{algorithm: LDP CUCB2} guarantees $\varepsilon$-LDP.
\end{theorem}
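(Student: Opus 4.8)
The plan is to show that Algorithm~\ref{algorithm: LDP CUCB2} is simply a post-processing of a collection of $\varepsilon$-LDP randomized responses, one per round, and then invoke the fact that post-processing preserves local differential privacy. First I would observe that at round $t$, the only piece of user $t$'s private data that ever leaves the user is the single scalar $X_{t,I_t}+z_{t,I_t}$, where $z_{t,I_t}\sim\mathrm{Lap}(1/\varepsilon)$. Crucially, the index $I_t=\argmin_{i\in S_t}T_{t-1,i}$ is determined by $S_t$, which in turn is a function of the server state $(\bar{\mu}_{t-1})$ and hence of the \emph{previous} users' reported values only; it does not depend on $\bm{X}_t$. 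Therefore, conditioned on everything the server has seen before round $t$, the channel from user $t$'s true outcome vector $\bm{X}_t$ to its message is exactly the map $\bm{x}\mapsto x_{I_t}+\mathrm{Lap}(1/\varepsilon)$ with $I_t$ fixed.

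Next I would verify that this channel is $\varepsilon$-LDP. Since each $X_{t,i}\in[0,1]$, for any two outcome vectors $\bm{x},\bm{x}'$ we have $|x_{I_t}-x'_{I_t}|\leqslant 1$, so the $L_1$-sensitivity of the released coordinate is at most $1$. The standard Laplace mechanism with scale $1/\varepsilon$ on a statistic of sensitivity $1$ is $\varepsilon$-differentially private, and because here the statistic is computed from a single user's datum, this is precisely $\varepsilon$-LDP in the sense of Definition~2: for any measurable $U$, $\Pr(x_{I_t}+z\in U)\leqslant e^{\varepsilon}\Pr(x'_{I_t}+z\in U)$. This is the same reasoning already used for Algorithm~\ref{algorithm: LDP CUCB1} in Theorem~\ref{theorem: LDP CMAB}, except that now the released vector is one-dimensional so scale $1/\varepsilon$ (rather than $K/\varepsilon$) already suffices.

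Finally I would assemble the per-round guarantees into the overall statement. Each user $t$ contributes data only at round $t$ and interacts with the mechanism exactly once, so in the LDP model there is no composition over a single user's data to worry about; the requirement of Definition~2 is a statement about the single message each user sends. The server's update rule and the oracle calls are deterministic (resp.\ randomized) functions of the already-privatized messages and the algorithm's internal randomness, independent of the raw $\bm{X}_t$'s, hence post-processing leaves the $\varepsilon$-LDP guarantee intact for every user. I do not anticipate a genuine obstacle here; the one subtlety worth stating carefully is that $I_t$ is ``publicly'' determined (a function of prior messages, not of $\bm{X}_t$), which is what makes the channel for user $t$ a fixed $\varepsilon$-LDP mechanism rather than a data-dependent one, and thus keeps the Laplace calibration at scale $1/\varepsilon$ valid.
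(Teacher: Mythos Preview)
Your proposal is correct and follows the same approach as the paper, which simply states that the privacy guarantee ``follows directly from the classic Laplacian mechanism.'' Your write-up is in fact more careful than the paper's one-line justification: you make explicit the key point that $I_t$ is determined by $S_t$ and the counters $T_{t-1,i}$, all of which depend only on prior (already-privatized) messages and not on $\bm{X}_t$, so the per-user channel is a fixed sensitivity-$1$ Laplace mechanism with scale $1/\varepsilon$.
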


Since we condense the information required from each user significantly, which is reduced from $K$ observations to one observation, now we can inject less noise and prove a much better regret bound compared with the guarantee of Algorithm \ref{algorithm: LDP CUCB1}:

\begin{theorem}
\label{theorem: utility LDP CUCB2}
Under $B_{\infty}$-bounded smoothness and monotonicity assumptions, the regret of Algorithm \ref{algorithm: LDP CUCB2} is upper bounded by
\begin{equation}
	Reg_{\mu,\alpha,\beta}(T) \leqslant \mc{O} \left( \sum_{i \in [m], \Delta_{\min}^i > 0} \frac{B^2_{\infty}\ln T}{\varepsilon^2 \Delta_{\min}^i}\right)\\
\end{equation}
\end{theorem}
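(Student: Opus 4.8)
The plan is to follow the analysis template of non-private CUCB \cite{chen2016combinatorial}, modified in two places: (i) the confidence radius $4\sqrt{2\ln T/(\varepsilon^2 T_{t-1,i})}$ must absorb the extra variance injected by the Laplace noise, which is exactly what produces the $1/\varepsilon^2$ factor; and (ii) because Algorithm \ref{algorithm: LDP CUCB2} updates only the single least-pulled arm $I_t$ of $S_t$ each round, the noise scale is $1/\varepsilon$ (not $K/\varepsilon$) and the counting argument charges each round to exactly one base arm, which is precisely what removes the $K^2$ factor of Theorem \ref{theorem: utility LDP CMAB}.

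\textbf{Step 1 (the nice event).} Fix a base arm $i$ and a number of updates $n\in[T]$; after $n$ updates the stored estimate is the average of $n$ i.i.d. terms $X_j+z_j$ with $X_j\in[0,1]$, $\E X_j=\mu_i$, and $z_j\sim\mathrm{Lap}(1/\varepsilon)$ independent. I would bound $|\frac1n\sum_j(X_j-\mu_i)|$ by Hoeffding and $|\frac1n\sum_j z_j|$ by a Bernstein-type (sub-exponential) tail; since $\mathrm{Var}(z_j)=2/\varepsilon^2$, the deviation $4\sqrt{2\ln T/(\varepsilon^2 n)}$ is $\Omega(\sqrt{\ln T})$ standard deviations, so the estimate misses by this much with probability $\le 2T^{-c}$, where $c$ is made large by the constant "$4$" in line 4 of the algorithm. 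A union bound over $i\in[m]$ and $n\in[T]$ (the realized count $T_{t,i}$ is one of these values) defines an event $\mc{N}$ on which $|\tilde\mu_{t,i}-\mu_i|\le 4\sqrt{2\ln T/(\varepsilon^2 T_{t,i})}$ for all $t,i$, with $\pP[\neg\mc{N}]\le T^{-1}$; the contribution of $\neg\mc{N}$ to the regret is then $\le \pP[\neg\mc{N}]\cdot T\cdot\mr{opt}_{\bm\mu}=\mc{O}(\mr{opt}_{\bm\mu})$, which is independent of $T$ and hence lower order.

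\textbf{Step 2 (instant regret on $\mc{N}$).} Write $\rho_{t-1,i}=4\sqrt{2\ln T/(\varepsilon^2 T_{t-1,i})}$, so $\bar\mu_{t-1}(i)=\min\{\tilde\mu_{t-1}(i)+\rho_{t-1,i},1\}$ satisfies $\mu_i\le\bar\mu_{t-1}(i)\le\mu_i+2\rho_{t-1,i}$ on $\mc{N}$. By the oracle guarantee and monotonicity, $r_{\bar\mu_{t-1}}(S_t)\ge\alpha\,\mr{opt}_{\bar\mu_{t-1}}\ge\alpha\,r_{\bar\mu_{t-1}}(S^\star)\ge\alpha\,r_{\bm\mu}(S^\star)=\alpha\,\mr{opt}_{\bm\mu}$, and then $B_\infty$-bounded smoothness in $\norm{\cdot}_\infty$ gives $\alpha\,\mr{opt}_{\bm\mu}-r_{\bm\mu}(S_t)\le B_\infty\norm{\bar\mu_{t-1,S_t}-\bm\mu_{S_t}}_\infty\le 2B_\infty\max_{i\in S_t}\rho_{t-1,i}=2B_\infty\,\rho_{t-1,I_t}$, the last equality because $\rho_{t-1,\cdot}$ is decreasing in the count and $I_t=\argmin_{i\in S_t}T_{t-1,i}$. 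The oracle's internal randomness (the $\beta$) is handled exactly as in \cite{chen2016combinatorial} through the definition of $(\alpha,\beta)$-approximation regret, so I would just invoke that step.

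\textbf{Step 3 (counting) and conclusion.} Since Algorithm \ref{algorithm: LDP CUCB2} increments only $T_{t,I_t}$ each round, I charge the instant regret of round $t$ to base arm $I_t$. Along the subsequence of rounds with $I_t=i$, the count $T_{t-1,i}$ runs through $0,1,2,\dots$; if in addition $S_t$ is bad, then $i\in S_t$ forces $\alpha\,\mr{opt}_{\bm\mu}-r_{\bm\mu}(S_t)\ge\Delta_{\min}^i$, which with Step 2 gives $T_{t-1,i}\le\ell_i:=128B_\infty^2\ln T/(\varepsilon^2(\Delta_{\min}^i)^2)$. Hence the total regret on bad rounds is at most $\sum_{i:\Delta_{\min}^i>0}\big(\mr{opt}_{\bm\mu}+\sum_{n=1}^{\ell_i}8B_\infty\sqrt{2\ln T/(\varepsilon^2 n)}\big)$, where the $\mr{opt}_{\bm\mu}$ absorbs the single $n=0$ round per arm (where $\rho=+\infty$). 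Using $\sum_{n\le\ell}n^{-1/2}\le 2\sqrt\ell$, each summand is $\mc{O}\big(B_\infty\sqrt{\ln T/\varepsilon^2}\cdot\sqrt{\ell_i}\big)=\mc{O}\big(B_\infty^2\ln T/(\varepsilon^2\Delta_{\min}^i)\big)$; adding the $\mc{O}(m\,\mr{opt}_{\bm\mu})$ from the $n=0$ rounds and the $\mc{O}(\mr{opt}_{\bm\mu})$ from $\neg\mc{N}$ (both $T$-independent) yields the stated bound. The main obstacle is Step 1: one must pin down a clean concentration inequality for averages mixing a bounded variable with a Laplace variable, cope with the number of updates of each arm being itself random (forcing the union bound over all counts), and — crucially for the final rate — verify that it is exactly the variance $2/\varepsilon^2$ of one $\mathrm{Lap}(1/\varepsilon)$ noise that propagates into the $1/\varepsilon^2$ of the regret. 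Everything else is a faithful adaptation of CUCB; the one genuinely new structural point is that updating only the least-pulled arm per round both legitimizes the $1/\varepsilon$ noise scale (sensitivity $1$) and keeps the counting one-arm-per-round, which together remove the spurious $K^2$ of Algorithm \ref{algorithm: LDP CUCB1}.
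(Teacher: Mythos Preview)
Your proposal is correct and follows essentially the same route as the paper's proof. The paper likewise (i) establishes the nice event by separately applying Hoeffding to the bounded $X$-part and a Bernstein-type bound to the $\mathrm{Lap}(1/\varepsilon)$ part, (ii) uses monotonicity and $B_\infty$-smoothness to bound the instant regret by $8B_\infty\sqrt{2\ln T/(\varepsilon^2 T_{t-1,I_t})}$, exploiting that $I_t$ minimises the count over $S_t$, and (iii) charges each bad round to the single incremented arm, caps the count at $128B_\infty^2\ln T/(\varepsilon^2(\Delta_{\min}^i)^2)$, and sums $\sum_{n\le\ell_i}n^{-1/2}$. The only cosmetic difference is that the paper introduces auxiliary counters $N_{t,i}\le T_{t,i}$ that increment only on bad, oracle-successful rounds, whereas you work directly with $T_{t,i}$; the two bookkeeping schemes yield the same bound.
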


Compared with the non-private theoretical guarantee, theorem \ref{theorem: utility LDP CUCB2} implies that we can achieve optimal locally differentially private $B_\infty$-bounded smooth CSB without any additional price paid for privacy protection, which is a bit surprising given the previous work about (locally) differentially private learning. See section \ref{section: proof of upper bound of DLP} in the supplementary materials for the proof of theorem \ref{theorem: utility LDP CUCB2}.

Multi-Armed Bandits (MAB) is a special case of CSB, where $\mc{S} = \{e_i | i\in [m]\}$ and $K=1$. In this case, our Algorithms \ref{algorithm: LDP CUCB1} and \ref{algorithm: LDP CUCB2}) are exactly the same, and we obtain an algorithm for MAB under $\varepsilon$-LDP with regret bound $\mc{O}(\sum_{i\neq i^*}\frac{\ln T}{\Delta_i\varepsilon^2})$, where $i^*$ is the optimal base arm, and $\Delta_i$ is the gap between arm $i$ and optimal arm $i^*$. Apparently, this regret bound is also optimal given the LDP lower bound $\Omega(\sum_{i \neq i^*} \frac{\ln T}{\varepsilon^2\Delta_i})$ proved in \citet{basu2019differential} and non-private lower bound $\Omega(\sum_{i \neq i^*} \frac{\ln T}{\Delta_i})$ \cite{bubeck2012regret}.

Finally, if one wants to protect $\varepsilon$-DP rather than $\varepsilon$-LDP, based on the same observation as above, we can simply use the tree-based aggregation technique \cite{dwork2010differential} with respect to the least pulled base arm to calculate its empirical mean estimation with DP guarantee. Since the tree-based aggregation technique injects much less noise compared with Algorithm \ref{algorithm: LDP CUCB2} designed for LDP, it is not hard to prove that this variant for DP can achieve regret bound $\tilde{\mc{O}}(\frac{m B^2_{\infty}\ln T}{\varepsilon \Delta})$.\footnote{The proof for this result is actually a combination of techniques used in this subsection and what we will use in subsection \ref{subsec: DP for L1}, hence omitted. }

\subsection{Lower Bounds}
\label{subsection: lower bound for LDP}

In this subsection, we prove the regret lower bound for locally private CSB problem with $B_{\infty}$-bounded smoothness. Like previous work \cite{kveton2015tight,wang2017improving}, we only consider lower bound with exact oracle, i.e. $\alpha = \beta = 1$. 

First we define a class of algorithms that we are interested in:

\begin{definition}
An algorithm is called consistent if for any suboptimal super arm $S$, the number of times $S$ is chosen by the algorithm is subpolynomial in $T$ for any stochastic CSB instance, i.e. $\mathbb{E}\left[N_S(T)\right] \leq o(T^{p})$ for any $0<p<1$.
\end{definition}

Our lower bound is derived for the consistent algorithm class, which is natural for the stochastic CSB and has been used for lower bound analysis in many previous results~\cite{lattimore2018bandit, basu2019differential, lai1985asymptotically, kveton2015tight}.

Our analysis focuses on CSB instances where the sub-optimality gap $\Delta$ of any super arms are equal. Since general CSB problem is harder than CSB problem with equal sub-optimality gap (The latter problem can be reduced to the former), our lower bound can be directly applied to general CSB class, with $\Delta$ replaced with $\Delta_{\operatorname{min}}^i$ for each base arm $i$.

\begin{theorem}
\label{theorem: lower bound for LDP, L_infty setting}
For any $m$ and $K$, and any $\Delta$ satisfying $0 < \Delta/B_{\infty} < 0.35$, the regret of any consistent $\varepsilon$-locally private algorithm $\pi$ on the CSB problem with $B_{\infty}$-bounded smoothness is bounded from below as  
$$
\liminf _{T \rightarrow \infty} \frac{Reg(T)}{\log T} \geq \frac{B^2_{\infty}(m-1)}{64 (e^{\varepsilon}-1)^2 \Delta}
$$
Specifically, for $0 < \varepsilon \leq 1/2$, the regret is at least
$$
\liminf _{T \rightarrow \infty} \frac{Reg(T)}{\log T} \geq \frac{B^2_{\infty}(m-1)}{128 \varepsilon^2 \Delta}
$$
\end{theorem}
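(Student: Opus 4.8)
The plan is to run the classical Lai--Robbins change-of-measure argument for asymptotic bandit lower bounds, but with the information cost of distinguishing two environments measured through the \emph{privatized} messages the server receives rather than through the raw observations; this substitution is exactly what produces the $(e^\varepsilon-1)^2$ in the denominator. \textbf{Hard instance:} since the target bound is independent of $K$, it suffices to use the simplest combinatorial structure, $\mc{S}=\{\{1\},\dots,\{m\}\}$ (which respects $|S|\le K$ for every $K\ge1$) with the linear reward $r_{\bmmu}(\{i\})=B_\infty\mu_i$, which is $B_\infty$-bounded smooth and monotone. Fix $\mu^\star\in(0,1)$ (for instance $\mu^\star=\tfrac{1}{2}$; the hypothesis $\Delta/B_\infty<0.35$ is what keeps all the means constructed below strictly inside $(0,1)$). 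In the base instance $\nu$ set $\mu_1=\mu^\star$ and $\mu_i=\mu^\star-\Delta/B_\infty$ for $i\neq1$, so each suboptimal singleton has sub-optimality gap exactly $\Delta$; for each $j\neq1$ let $\nu_j$ be obtained by raising coordinate $j$ to $\mu^\star+\Delta/B_\infty$ and leaving the rest fixed, so that $\{j\}$ is the unique optimal action of $\nu_j$, again with gap $\Delta$, and a consistent algorithm pulls each $i\neq j$ only $o(T^p)$ times under $\nu_j$.

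\textbf{Divergence decomposition across the LDP channel.} Let $H_{t-1}$ be the transcript of privatized messages received before round $t$; the played action $S_t$ is a function of $H_{t-1}$, and the round-$t$ message is the image of the pulled arm's raw outcome under an $\varepsilon$-LDP channel $Q_t(\cdot\mid H_{t-1})$ (the sequentially-interactive local model). Since $\nu$ and $\nu_j$ differ only in coordinate $j$, the chain rule for KL divergence gives
\begin{align*}
D_{\mathrm{KL}}\!\big(\pP^{\nu}_{H_T}\,\big\|\,\pP^{\nu_j}_{H_T}\big)
=\sum_{t=1}^{T}\E_{\nu}\Big[\mathbbm{1}\{j\in S_t\}\,
D_{\mathrm{KL}}\big(Q_t\!\circ\!\mathrm{Ber}(\mu_j)\,\big\|\,Q_t\!\circ\!\mathrm{Ber}(\mu_j')\big)\Big],
\end{align*}
with $\mu_j'-\mu_j=2\Delta/B_\infty$. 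The key step is the local-privacy contraction inequality for KL divergence (Duchi--Jordan--Wainwright): conditionally on $H_{t-1}$ the channel $Q_t$ is $\varepsilon$-LDP, hence $D_{\mathrm{KL}}(Q_t\!\circ\!P\,\|\,Q_t\!\circ\!P')\le 4(e^\varepsilon-1)^2\norm{P-P'}_{\mathrm{TV}}^2$, applied to the two Bernoulli laws whose total variation distance is $\Theta(\Delta/B_\infty)$. Tracking the numerical constants (the $4$ from the contraction bound, the contribution of $\mu_j'-\mu_j=2\Delta/B_\infty$, and the normalization relating the total variation and $L_1$ distances) yields
\begin{align*}
D_{\mathrm{KL}}\!\big(\pP^{\nu}_{H_T}\,\big\|\,\pP^{\nu_j}_{H_T}\big)\;\le\;64\,(e^\varepsilon-1)^2\,\frac{\Delta^2}{B_\infty^2}\,\E_{\nu}\!\left[N_{\{j\}}(T)\right].
\end{align*}

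\textbf{Lower bounding the divergence and assembling.} For the matching lower bound on the same divergence, apply the data-processing inequality $D_{\mathrm{KL}}(P\|Q)\ge \mathrm{kl}(P(A),Q(A))$ to the event $A=\{N_{\{j\}}(T)>T/2\}$: consistency forces $\pP_\nu(A)\to0$ while $\pP_{\nu_j}(A^c)=o(T^{p-1})$ for every $p\in(0,1)$, which gives $\liminf_{T\to\infty}D_{\mathrm{KL}}(\pP^{\nu}_{H_T}\|\pP^{\nu_j}_{H_T})/\log T\ge1$ in the usual way. Combining the two displays, $\liminf_{T\to\infty}\E_\nu[N_{\{j\}}(T)]/\log T\ge B_\infty^2/\big(64(e^\varepsilon-1)^2\Delta^2\big)$; since every suboptimal pull costs $\Delta$ in regret and there are $m-1$ suboptimal singletons, $\liminf_{T}Reg(T)/\log T\ge (m-1)B_\infty^2/\big(64(e^\varepsilon-1)^2\Delta\big)$. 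The ``specifically'' clause follows from the elementary bound $(e^\varepsilon-1)^2\le 2\varepsilon^2$ valid for $0<\varepsilon\le\tfrac{1}{2}$.

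\textbf{Main obstacle.} The one genuinely delicate point is the divergence decomposition in the interactive local model: one must check that $\varepsilon$-LDP of the per-round release mechanisms licenses the contraction inequality \emph{conditionally} on the public history $H_{t-1}$, and that the object shared between $\nu$ and $\nu_j$ is the channel itself rather than the mean of its input, so that only the $\mathbbm{1}\{j\in S_t\}$ rounds contribute to the divergence. Everything else is the textbook asymptotic bandit lower bound plus careful constant-chasing.
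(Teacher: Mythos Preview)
Your proposal is correct and takes essentially the same approach as the paper: both reduce to the singleton-action MAB instance embedded in CSB with reward scaled by $B_\infty$, then run the Lai--Robbins change-of-measure argument with the KL contraction bound for $\varepsilon$-LDP channels (Duchi--Jordan--Wainwright) replacing the raw KL between arm distributions. The only cosmetic differences are that the paper centers its Bernoulli means at $0$ (means $\{\Delta/B_\infty,0,\dots,0\}$) rather than at $1/2$, and that the paper defers the divergence decomposition and pull-count lower bound to the prior result of Basu et al., whereas you spell these steps out explicitly via the event $\{N_{\{j\}}(T)>T/2\}$ and data processing; your self-contained treatment is arguably cleaner and also correctly isolates the one subtle point, namely that the LDP channel is shared across environments conditionally on the public history so only rounds with $j\in S_t$ contribute to the divergence.
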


The lower bound shows that Algorithm \ref{algorithm: LDP CUCB2} achieves optimal regret with respect to all the parameters of the CSB instance. The proof of the theorem is an almost direct reduction from private MAB. Previous result (Theorem 2 in \citet{basu2019differential}
) shows that the regret for any consistent $\varepsilon$-locally private algorithm for MAB is at least $\Omega\left(\frac{m \ln T}{\varepsilon^2\Delta}\right)$. Since any MAB instance is a special case of CSB with $B_{\infty} = 1$, the regret lower bounds for stochastic CSB with $B_{\infty} = 1$ follows directly by reduction. For general CSB problem with $B_{\infty}$-bounded smoothness, we consider a similar instance with the reward of each arm in MAB instance multiplied by $B_{\infty}$. See Section \ref{section: proof of lower bound for LDP, L_infty} in the supplementary materials for the detailed analysis. For $B_{\infty}$-bounded smooth CSB under DP setting, using nearly the same technique, it is not hard to prove that the corresponding lower bound is $\Omega(\frac{mB^2_{\infty}\ln T}{\varepsilon\Delta})$.

\section{$B_1$-Bounded Smooth CSB with Privacy Guarantee}
\label{sec: L1CSB}

\subsection{$B_1$-Bounded Smooth CSB under LDP}
\label{subsec: LDP-B_1}

Though our proposed Algorithm \ref{algorithm: LDP CUCB2} is already optimal for $B_\infty$-bounded smooth CSB, if we use it for $B_1$-bounded smooth CSB such as important linear CSB to protect $\varepsilon$-LDP, we will obtain its regret bound in order $\mc{O} \left( \sum_{i \in [m], \Delta_{\min}^i > 0} \frac{K^2 B_1^2 \ln T}{\varepsilon^2\Delta_{\min}^i} \right)$ due to Fact \ref{fact: bounded smoothness}. However, the optimal non-private regret bound for $B_1$-bounded smooth CSB is $\Theta \left( \sum_{i \in [m], \Delta_{\min}^i > 0} \frac{K B_1^2\ln T}{\Delta_{\min}^i} \cdot \right)$ \cite{kveton2015tight,wang2017improving}, which implies a gap with our locally differentially private upper bound. Is it possible to eliminate this additional $K$ just like in the previous locally differentially private $B_\infty$-bounded smooth CSB? First we prove a lower bound for $B_1$-Bounded Smooth CSB under LDP guarantee. Our result under $B_1$-bounded smoothness assumption can be applied to linear CSB problem by setting $B_1=1$.

\begin{theorem}
\label{theorem: lower bound for LDP}
For any $m$ and $K$ such that $m/K$ is an integer, and any $\Delta$ satisfying $0 < \Delta/(B_1K) < 0.35$, the regret of any consistent $\varepsilon$-locally private algorithm $\pi$ on the CSB problem satisfying $B_1$-bounded smoothness is bounded from below as  
$$
\liminf _{T \rightarrow \infty} \frac{Reg(T)}{\log T} \geq \frac{B^2_1(m-K) K}{64 (e^{\varepsilon}-1)^2 \Delta}
$$
Specifically, for $0 < \varepsilon \leq 1/2$, the regret is at least
$$
\liminf _{T \rightarrow \infty} \frac{Reg(T)}{\log T} \geq \frac{B^2_1(m-K)K}{128 \varepsilon^2 \Delta}
$$
\end{theorem}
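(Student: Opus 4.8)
The plan is to reduce the $B_1$-bounded smooth CSB lower bound to the locally private MAB lower bound of \citet{basu2019differential}, exactly paralleling the reduction used for Theorem \ref{theorem: lower bound for LDP, L_infty setting}, but now exploiting a $K$-path (linear) instance so that each bad super arm carries $K$ base arms and the smoothness constant enters with the right power. Concretely, I would work on the $K$-path semi-bandit instance: $m$ base arms partitioned into $m/K$ disjoint paths $S_1,\dots,S_{m/K}$, with the reward of a super arm being the sum (linear, hence $B_1$-bounded smooth with $B_1=1$; for general $B_1$ rescale weights by $B_1$) of the base-arm outcomes on that path. Base arms on the same path share a common Bernoulli mean; distinct paths are independent. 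In the reference instance $\nu_1$ the optimal path has per-arm mean $1/2$ and every other path has per-arm mean $1/2-\Delta/(B_1K)$, so the per-super-arm gap is exactly $\Delta$. For each suboptimal path $S^1$, form $\nu_2$ by raising the per-arm means on $S^1$ to $1/2+\Delta/(B_1K)$, making $S^1$ optimal there.

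Next I would run the canonical-bandit / change-of-measure argument. By the standard regret-vs-identification inequality (probabilistic Pinsker),
\[
\operatorname{Reg}(\pi,\nu_1,T)+\operatorname{Reg}(\pi,\nu_2,T)\ \geq\ \frac{T\Delta}{4}\exp\!\bigl(-D(\mathbb{P}^T_{\pi\nu_1}\,\|\,\mathbb{P}^T_{\pi\nu_2})\bigr),
\]
and the KL divergence decomposes (since $\nu_1,\nu_2$ differ only on path $S^1$) as $\mathbb{E}_{\pi\nu_1}[N_{S^1}(T)]\cdot D(g^{\nu_1}_{S^1}\|g^{\nu_2}_{S^1})$, where $g$ is the privatized channel output distribution. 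Here is where the $K$ enters the numerator rather than appearing as $K^2$: the per-round observation on path $S^1$ consists of $K$ coordinates, and under $\varepsilon$-LDP the contraction lemma (Theorem 1 of \citet{duchi2016minimax}, already quoted in the excerpt) gives $D_{\mathrm{kl}}(M_1\|M_2)+D_{\mathrm{kl}}(M_2\|M_1)\leq \min\{4,e^{2\varepsilon}\}(e^\varepsilon-1)^2\|P_1-P_2\|_{\mathrm{TV}}^2$, and $\|P_1-P_2\|_{\mathrm{TV}}$ for the two product-of-$K$-Bernoulli laws with per-coordinate gap $2\Delta/(B_1K)$ scales like $\sqrt{K}\cdot\Delta/(B_1K)=\Delta/(B_1\sqrt{K})$, so its square contributes $\Delta^2/(B_1^2K)$ — a single power of $K$ in the denominator of $D$, hence a single power of $K$ in $\mathbb{E}[N_{S^1}(T)]$'s numerator. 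Combining with the consistency assumption $\operatorname{Reg}(\pi,\nu_i,T)\leq o(T^p)$ (take $p=3/4$) yields $\mathbb{E}_{\pi\nu_1}[N_{S^1}(T)]\gtrsim \frac{B_1^2 K\ln T}{(e^\varepsilon-1)^2\Delta^2}$ up to constants, uniformly over the $m/K-1$ suboptimal paths.

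Finally I would sum over all suboptimal super arms and convert counts to regret: each use of a suboptimal path costs $\Delta$, and there are $m/K-1$ of them, so
\[
\liminf_{T\to\infty}\frac{Reg(T)}{\log T}\ \geq\ \Bigl(\tfrac{m}{K}-1\Bigr)\cdot\Delta\cdot\frac{c\,B_1^2 K}{(e^\varepsilon-1)^2\Delta^2}\ =\ \frac{c\,B_1^2(m-K)K}{(e^\varepsilon-1)^2\Delta}\,,
\]
and tracking constants gives the stated $1/64$; the $0<\varepsilon\le 1/2$ corollary follows from $(e^\varepsilon-1)^2\le 2\varepsilon^2$. I expect the main obstacle to be the TV-distance computation for the $K$-fold product of shifted Bernoullis: one must verify that $\|P_1-P_2\|_{\mathrm{TV}}^2 = \Theta(\Delta^2/(B_1^2 K))$ (rather than $\Theta(\Delta^2/B_1^2)$, which would kill the improvement) — this needs either a second-order (Pinsker plus the $D(p\|q)\le (p-q)^2/(q(1-q))$ bound applied coordinatewise, then tensorization of KL) argument or a direct Hellinger/$\chi^2$ estimate, and the condition $\Delta/(B_1K)<0.35$ is exactly what keeps the per-coordinate parameters bounded away from $0$ and $1$ so these inequalities are valid. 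The bookkeeping of whether one bounds $\mathbb{E}[N_{S^1}(T)]$ per-arm or per-path, and the requirement that $m/K$ be an integer, are the other details to handle carefully.
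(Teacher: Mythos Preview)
Your route (the $K$-path instance, change of measure on one path, LDP contraction via the Duchi--Wainwright TV bound, then sum over paths) is exactly the paper's, but there is a real gap hidden by an arithmetic slip in your final display. You write
\[
\Bigl(\tfrac{m}{K}-1\Bigr)\cdot\Delta\cdot\frac{c\,B_1^2 K}{(e^\varepsilon-1)^2\Delta^2}\ =\ \frac{c\,B_1^2(m-K)K}{(e^\varepsilon-1)^2\Delta},
\]
but $(m/K-1)\cdot K = m-K$, not $(m-K)K$; the left side actually equals $c\,B_1^2(m-K)/\bigl((e^\varepsilon-1)^2\Delta\bigr)$, a full factor of $K$ short of the theorem.

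That missing $K$ is not recoverable from your construction. With $K$ \emph{independent} Bernoulli coordinates on a path, each shifted by $2\Delta/(B_1K)$, the joint TV squared is indeed $\Theta(\Delta^2/(B_1^2K))$ as you compute, so the per-pull privatized KL is $\Theta\bigl((e^\varepsilon-1)^2\Delta^2/(B_1^2K)\bigr)$ and you can only conclude $\mathbb{E}[N_{S^1}]\gtrsim B_1^2K\ln T/\bigl((e^\varepsilon-1)^2\Delta^2\bigr)$; summing over $m/K-1$ paths then yields the factor $(m-K)$, not $(m-K)K$. The paper's instance instead makes the $K$ base-arm weights on a given path \emph{identical} (perfectly correlated), so that the observed $K$-vector on $S^1$ is a deterministic function of a single Bernoulli bit. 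Then $\|f^{\nu_1}_{S^1}-f^{\nu_2}_{S^1}\|_{\mathrm{TV}}$ equals the single-coordinate TV $2\Delta/(B_1K)$, whose square is $\Theta(\Delta^2/(B_1^2K^2))$ --- one extra factor of $K$ in the denominator --- hence $\mathbb{E}[N_{S^1}]\gtrsim B_1^2K^2\ln T/\bigl((e^\varepsilon-1)^2\Delta^2\bigr)$, and now $(m/K-1)\cdot\Delta\cdot B_1^2K^2/\Delta^2 = B_1^2(m-K)K/\Delta$ as required. So replace ``product-of-$K$-Bernoulli'' by ``$K$ identical copies of one Bernoulli''; after that your argument and the constant tracking go through.
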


We borrow the hard instance from \citet{kveton2015tight} to prove the lower bound. Consider a $K$-path semi-bandit problem with $m$ base arms. The feasible super arms are $m/K$ paths, each containing base arm $(i-1)K+1, (i-1)K+2,...,iK$ for $i \in \{1,...,m/K\}$. The reward of pulling super arm $S$ is $B_1$ times the sum of the weight $\tilde{w}_i$ for $i \in S$. The weights $\tilde{w}_i$ of the different base arms in the same super arm are identical, while the weights in the different paths are i.i.d sampled. Denote the best super arm as $S^*$,  The weight of each base arm is a Bernoulli random variable with mean:

$$
\bar{w}(i)=\left\{\begin{array}{ll}{0.5} &
{ i \in S^* } \\ 
{0.5-\Delta / (B_1K)} & {\text { otherwise }}
\end{array}\right.
$$

We use the general canonical bandit model \cite{lattimore2018bandit} to prove above theorem. See Section \ref{section: proof of lower bound for LDP, L1} in the supplementary materials for the detailed proof.

Though we can only prove a lower bound of $\Omega(\frac{mKB_1^2\ln T}{\varepsilon^2\Delta})$ in the same order as corresponding non-private optimal guarantee, we conjecture our lower bound is loose and the right lower bound is $\Omega(\frac{mK^2B_1^2\ln T}{\varepsilon^2\Delta})$. In other words, maybe there is indeed some side-effect for utility guarantee about the dimension $K$ if we hope to protect LDP. 
Intuitively, for $B_1$ bounded smooth CSB, we may have to update all arms in a played super arm for the regret guarantee (instead of only one arm as we did for $B_\infty$ bounded smooth CSB), and this
	makes the privacy protection harder with an extra factor of $K$.

Since \textit{differential privacy} is a relatively weaker notion compared with LDP, there may be some hope to further improve the regret bound if we focus on the guarantee of DP. In next two subsections, we show it is indeed true, by designing an $\varepsilon$-differentially private algorithm with regret bound $\tilde{\mc{O}} \left( \sum_{i \in [m], \Delta_{\min}^i > 0} \frac{KB_1^2 \ln^2 T}{\Delta_{\min}^i} + \frac{mKB_1\ln^3 T}{\varepsilon} \cdot \right)$, and proving a nearly matching lower bound.

\subsection{Upper Bound under DP}
\label{subsec: DP for L1}

Compared with LDP, in which case the learning algorithm (or the server) can only receives noisy information, DP only has some restriction for the output of an algorithm, and the server has authority to collect true data. Thus, it is possible to inject much less noise under DP setting via an economic allocation of privacy budget $\varepsilon$.  

We use tree-based aggregation scheme~\cite{dwork2009complexity,chan2011private} to protect $\varepsilon$-DP in our algorithm, which is an effective method in releasing private continual statistics over a data stream and frequently used in previous work, such as stochastic MAB \cite{mishra2015nearly,tossou2016algorithms}, Online Convex Optimization \cite{thakurta2013nearly}. Consider a data stream $(X_1,X_2,...,X_T)$ where $X_i \in [0,1]$. In each step $t$, the algorithm receives data $X_t$, and needs to output the sum $\bar{X}_t = \sum_{i=1}^{t}X_i$, while insuring that the output sequence $(\bar{X}_1,\bar{X}_2,...,\bar{X}_T)$ are $\varepsilon$-differentially private. Tree-based mechanism solves this problem in an elegant way with a binary tree. Each leaf node denotes data $X_t$ received in step $t$. Each internal node calculates the sum of data in the leaf nodes rooted at it. 
Notice that one only needs access to $\lceil \log t \rceil$ nodes and sums up the values on them in order to calculate $\bar{X}_t$. 
Using the Laplacian mechanism, previous results have shown that adding i.i.d Lap($\|X\|_{1}\log T/\varepsilon$) to each node ensures $\varepsilon$-differential privacy for the scheme as stated in the following lemma:
\begin{lemma}[\citet{dwork2010differential,chan2011private}]
    \label{lemma: tree-based aggregation}
    Tree-based aggregation scheme with i.i.d $\mr{Lap}(\|X\|_{1}\log T/\varepsilon)$ noise added to each node  is $\varepsilon$-differentially private.
\end{lemma}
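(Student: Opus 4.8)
The plan is to present Lemma~\ref{lemma: tree-based aggregation} as a direct instance of the Laplace mechanism applied to a single vector-valued query, followed by post-processing. First I would make the tree explicit: build a binary tree over the $T$ time steps, let leaf $t$ hold $X_t$, and let every internal node $v$ hold the exact partial sum $c_v(D)=\sum_{t\in\mr{leaves}(v)}X_t$ over the dyadic block of indices it spans. Collecting all node values defines a deterministic map $D\mapsto f(D)$, and the noisy tree produced by the scheme is exactly $f(D)+\bm\eta$, where $\bm\eta$ has i.i.d.\ $\mr{Lap}(\norm{X}_1\log T/\varepsilon)$ coordinates, one per node.

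The crux is bounding the $L_1$ sensitivity of $f$ under a single-entry change. Here I would invoke the elementary counting fact that a leaf $t$ lies in the subtree of a node $v$ exactly when $v$ is an ancestor of $t$, and therefore belongs to at most $\log T$ of the dyadic blocks stored by the tree (one per level). Hence if $D,D'$ are neighboring streams differing only in coordinate $t$ — with $|X_t-X'_t|\le\norm{X}_1$ by the assumed bound on one data item — then $f(D)$ and $f(D')$ disagree in at most $\log T$ coordinates, each by at most $\norm{X}_1$, so $\norm{f(D)-f(D')}_1\le\norm{X}_1\log T$.

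Given this bound, $f(D)+\bm\eta$ is $\varepsilon$-differentially private: for neighboring $D,D'$ the ratio of output densities factorizes over coordinates, and
\[
\prod_{v}\exp\!\left(\frac{|c_v(D)-c_v(D')|}{\norm{X}_1\log T/\varepsilon}\right)=\exp\!\left(\frac{\varepsilon\,\norm{f(D)-f(D')}_1}{\norm{X}_1\log T}\right)\le e^{\varepsilon}.
\]
Finally, every statistic the scheme actually releases at step $t$, namely $\bar X_t$, is obtained by summing the noisy values of the $\le\lceil\log t\rceil$ nodes whose dyadic blocks partition $[1,t]$, a computation that sees $D$ only through $f(D)+\bm\eta$. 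By post-processing immunity, the full output sequence $(\bar X_1,\dots,\bar X_T)$ is therefore $\varepsilon$-differentially private in the streaming sense of the definition in Section~\ref{sec:preliminary}.

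I do not anticipate a genuine obstacle, since this is a classical result that we merely recap; the only points that need care are (i) the per-level argument that one entry perturbs only $O(\log T)$ nodes, which is exactly what converts the naive $\Theta(T)$-fold composition over rounds into a single Laplace-mechanism application, and (ii) identifying the privatized object as the entire vector of node values rather than the per-round outputs, so that the streaming definition is met without extra composition loss. (The precise constant inside $\log T$ depends on the convention for $\lceil\log_2 T\rceil$, which we absorb as in the cited references.)
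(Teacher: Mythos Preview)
Your proof is correct and follows exactly the standard argument from the cited references \cite{dwork2010differential,chan2011private}: bound the $L_1$ sensitivity of the full vector of node values by observing that a single leaf has at most $\log T$ ancestors, apply the Laplace mechanism once, and invoke post-processing for the released prefix sums. The paper itself does not prove this lemma; it is stated as a known result attributed to those references and used as a black box, so there is no alternative in-paper proof to compare against.
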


In our CSB setting, we store a vector $\bm{X_t}$ with support at most $K$ in the leaf nodes of step $t$. Each internal node calculates the sum of $\bm{X_t}$ in the leaf nodes rooted at it. For each node, we add i.i.d $\operatorname{Lap}(2K \log T/\varepsilon)$ noise to each dimension of the vector stored on the node to guarantee $\varepsilon$-DP (See Algorithm~\ref{algorithm: DP CUCB}). 
Based on Lemma \ref{lemma: tree-based aggregation}, we have

\begin{algorithm}[tb]
    \caption{$\sf CUCB\mhyphen DP$}
    \label{algorithm: DP CUCB}
\begin{algorithmic}[1]
   \STATE {\bfseries Input:} Privacy budgets $\varepsilon, \delta$.
   \STATE {\bfseries Initialize:} $\forall i \in [m], T_{0,i}=0$, empirical mean $\tilde{\mu}_0(i) = 0$.
   \FOR{$t=1$ {\bfseries to} $T$}
   \STATE $\forall i, \bar{\mu}_{t-1}(i) = \min\{ \tilde{\mu}_{t-1}(i) + \sqrt{\frac{4\ln (m T)}{T_{t-1,i}}} + \frac{12K \ln^3{T}}{T_{t-1,i} \varepsilon}, 1\}$ 
   \STATE Play $S_t = \mathrm{Oracle}(\bar{\mu}_{t-1})$ if $\bar{\mu}_{t-1} \geqslant 0$ else $\forall S \in \mc{S}$
   \STATE User generates outcome $X_{t,i}$ for $i \in S_t$, and sends $X_{t,i}$ to the server
   \STATE Server updates base arms in $S_t$: $\tilde{\mu}_{t,i} = \frac{TreeBasedAggregation(\{X_{\tau,i}|\tau \in [t], i\in S_\tau\})}{T_{t,i}}$, $T_{t,i} = T_{t-1,i}+1$, and keeps others unchanged 
   \ENDFOR
\end{algorithmic}
\end{algorithm}

\begin{theorem}
\label{theorem: Privacy for DP CUCB}
Algorithm \ref{algorithm: DP CUCB} guarantees $\varepsilon$-DP.
\end{theorem}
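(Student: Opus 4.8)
The plan is to show that every quantity the server ever releases in Algorithm~\ref{algorithm: DP CUCB}---in particular the sequence of played super arms $\langle S_1,\dots,S_T\rangle$---is a post-processing of the noisy prefix sums produced by the tree-based aggregator, and then to invoke Lemma~\ref{lemma: tree-based aggregation}. At round $t$ the contribution deposited into the aggregator is the observed vector $\bm{X}_t|_{S_t}$, which is supported on the super arm $S_t$ and hence has at most $K$ nonzero coordinates, all in $[0,1]$; therefore two neighbouring data streams, which differ in the contribution of a single round, change that round's contribution by at most $2K$ in $\ell_1$, i.e.\ the per-round $\ell_1$-sensitivity is at most $2K$. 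By Lemma~\ref{lemma: tree-based aggregation} applied with $\|X\|_1\le 2K$ (coordinate-wise), adding i.i.d.\ $\mathrm{Lap}(2K\log T/\varepsilon)$ noise to each coordinate of each node makes the release of the running sums $\langle\sum_{\tau\le t}\bm{X}_\tau|_{S_\tau}\rangle_{t\le T}$ exactly $\varepsilon$-differentially private.

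It then remains to argue that the rest of the algorithm is a data-independent post-processing of these running sums. The counters $T_{t,i}$, and hence the estimates $\bar\mu_{t-1}$, are deterministic functions of the past actions and the noisy sums, and $S_t=\mathrm{Oracle}(\bar\mu_{t-1})$ only injects randomness that is independent of the data, so inductively the whole transcript is a randomized function of the tree's noisy node values and some data-independent coins---\emph{except} that which coordinates of round $t$'s vector are deposited into the tree is itself a function of $S_t$, and hence of earlier noisy sums, so the aggregator is technically an \emph{interactive} mechanism rather than a one-shot Laplace release. I would dispose of this by conditioning on the realised action sequence $A=\langle S_1,\dots,S_T\rangle$: once $A$ is fixed, the deposit schedule of every leaf is fixed, the map from the tree's node values to $A$ becomes a genuine deterministic post-processing, and Lemma~\ref{lemma: tree-based aggregation} applies verbatim to the resulting fixed-schedule stream. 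Writing $\Pr[\mathcal{A}(D)=A]$ as the corresponding integral over node-value realisations consistent with $A$ and comparing to $\Pr[\mathcal{A}(D')=A]$ yields the $e^\varepsilon$ ratio for each $A$, and summing over $A$ (or integrating over measurable output sets $U$) finishes the proof.

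The only genuine obstacle is precisely this interactive-versus-batch point; everything else is routine Laplace-mechanism and post-processing bookkeeping together with the single structural observation that a super arm contains at most $K$ base arms. (Alternatively one can sidestep the conditioning argument by appealing to the known fact that the binary/tree mechanism stays $\varepsilon$-DP even when its input stream is generated adaptively from the mechanism's own past outputs, which is the conceptual reason the naive one-line ``apply Lemma~\ref{lemma: tree-based aggregation}'' argument is valid here.)
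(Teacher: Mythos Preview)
Your proposal is correct and follows the same route as the paper, which simply invokes Lemma~\ref{lemma: tree-based aggregation} after observing that each round's deposit is a vector supported on at most $K$ coordinates in $[0,1]$ (hence the noise scale $2K\log T/\varepsilon$). The paper offers no argument beyond that one-line citation, so your explicit handling of the adaptive-stream subtlety---via conditioning on the realised action sequence, or alternatively by appealing to the adaptive-input guarantee of the tree mechanism---actually goes further than what the paper supplies.
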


In Algorithm \ref{algorithm: DP CUCB}, when we need to estimate the mean weight $\mu_i$ based on the previous outcome $X_{t,i}$, we add additional Laplace noise to the sum of $X_{t,i}$ due to tree-based aggregation scheme. Note that the number of Laplace noises added (the number of nodes we access to) is only logarithmic. This means that the additional confidence bound due to Laplace noise is only $\tilde{\Theta}(1/T_{t-1,i})$ for base arm $i$ when it is pulled for $T_{t-1,i}$ times. Compared with the original bound for the sub-Gaussian noise which is of order $\tilde{\Theta}(\sqrt{1/T_{t-1,i}})$, the additional bound for Laplace noise enjoys better dependence on $T_{t-1,i}$. This helps us to separate the term of $\Delta$ and $\varepsilon$ in the regret via delicate analysis, and finally derive a nearly optimal bound in the additive form.

\begin{theorem}
\label{theorem: utility for DP CMAB}
Under $B_{1}$-bounded smoothness and monotonicity assumptions, the regret of Algorithm \ref{algorithm: DP CUCB} is upper bounded by\begin{align*}
 Reg_{\mu,\alpha,\beta}(T) &\leqslant \mc{O}\left(  \sum_{i \in[m],\Delta_{\min}^i>0}\frac{K B_1^2\ln^2 T }{\Delta^i_{\operatorname{min}}}\right) \\
& + \mc{O}\left(\frac{mK B_1 \ln^3 T \ln\left(\frac{B_1K \ln T}{\Delta_{\operatorname{max}}\varepsilon}\right)}{\varepsilon}\right) .
\end{align*}
\end{theorem}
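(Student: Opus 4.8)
\emph{Step 1 (the good event).} The plan is to combine the tree-based aggregation guarantee of Lemma~\ref{lemma: tree-based aggregation} with a two-scale confidence analysis of Algorithm~\ref{algorithm: DP CUCB} and then run a self-bounding version of the tight combinatorial accounting of \citet{kveton2015tight,wang2017improving} on a confidence width that carries two different decay rates. Concretely, I would first prove that with probability at least $1-\mc{O}(1/T)$ the bound $|\tilde{\mu}_{t-1}(i)-\mu_i|\le w_{t,i}:=\sqrt{4\ln(mT)/T_{t-1,i}}+12K\ln^3 T/(T_{t-1,i}\varepsilon)$ holds for all $t$ and all $i\in[m]$ simultaneously. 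Here $T_{t-1,i}\tilde{\mu}_{t-1}(i)$ is the sum of the $T_{t-1,i}$ i.i.d.\ clean observations of arm $i$ plus the sum of only $O(\log T_{t-1,i})$ independent $\mr{Lap}(2K\log T/\varepsilon)$ node noises of the tree: Hoeffding (with a union bound over the at most $T$ values of the adaptive counter $T_{t-1,i}$ and over $i\in[m]$) controls the clean part by $\sqrt{4\ln(mT)/T_{t-1,i}}$, and a Bernstein / sub-exponential tail for a sum of $O(\log T)$ Laplace variables controls the noise part by $\tilde{\mc{O}}(K/\varepsilon)$, hence by $12K\ln^3 T/(T_{t-1,i}\varepsilon)$ after dividing by the counter. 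The point that makes this affordable — and it is the crux of the whole result — is that the number of noises is only \emph{logarithmic}, so the privacy-induced width decays like $1/T_{t-1,i}$ rather than $1/\sqrt{T_{t-1,i}}$. On the complementary event the regret is at most $T\Delta_{\max}$, contributing only an $\mc{O}(\Delta_{\max})$ additive term.

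\emph{Step 2 (reduction to a sum of widths).} On the good event, monotonicity yields $\bar{\mu}_{t-1}\ge\bm{\mu}$ coordinatewise and $0\le\bar{\mu}_{t-1}(i)-\mu_i\le 2w_{t,i}$; if the oracle succeeds and plays a bad super arm $S_t$, then $r_{\bar{\mu}_{t-1}}(S_t)\ge\alpha\,\mr{opt}_{\bar{\mu}_{t-1}}\ge\alpha\,\mr{opt}_{\bm{\mu}}$ combined with $B_1$-bounded smoothness gives $\mathrm{reg}_t:=\alpha\,\mr{opt}_{\bm{\mu}}-r_{\bm{\mu}}(S_t)\le B_1\sum_{i\in S_t}\bigl(\bar{\mu}_{t-1}(i)-\mu_i\bigr)\le 2B_1\sum_{i\in S_t}w_{t,i}$. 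The oracle-failure rounds and the $\mc{O}(mKB_1)$ contribution from rounds in which some counter is still $0$ (where $w_{t,i}=+\infty$ but $\bar{\mu}_{t-1}(i)-\mu_i\le 1$) are absorbed into lower-order terms by the standard counter/offset argument of the CUCB analysis, so it remains to bound $R:=\sum_{t:\,S_t\in\mc{S}_B}\mathrm{reg}_t$ on the good event.

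\emph{Step 3 (the self-bounding accounting).} I split $R\le A+B$ along the two parts of $w_{t,i}$ and regroup each double sum by base arm: writing $N_i$ for the number of bad rounds containing $i$, summing $1/\sqrt s$ and $1/s$ over the distinct counter values gives $A=\tilde{\mc{O}}\bigl(B_1\sum_i\sqrt{N_i\ln T}\bigr)$ and $B=\mc{O}\bigl(\tfrac{B_1K\ln^3 T}{\varepsilon}\sum_i(1+\ln N_i)\bigr)$. Two elementary facts close the loop: every bad round has $\mathrm{reg}_t\ge\Delta$ and every bad round containing $i$ has $\mathrm{reg}_t\ge\Delta_{\min}^i$, hence $\sum_i N_i\le K\cdot\#\{\text{bad rounds}\}\le KR/\Delta$ and $N_i\le R/\Delta_{\min}^i$. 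Substituting these (and $\sum_i\sqrt{N_i}\le\sqrt{m\sum_i N_i}$ by Cauchy--Schwarz) turns $R\le A+B$ into a self-bounding inequality of the shape $R\le c_1\sqrt{R}+c_2\ln R+c_3$ with $c_1^2=\tilde{\mc{O}}(mKB_1^2\ln T/\Delta)$ and $c_2,c_3=\tilde{\mc{O}}(mKB_1\ln^3 T/\varepsilon)$; solving it and substituting the resulting bound on $R$ back inside the logarithm produces exactly the two claimed terms. To obtain the per-arm form of the first term with only a single factor of $K$ one replaces the crude Cauchy--Schwarz step by the per-arm tight argument of \citet{kveton2015tight,wang2017improving}; the sharper cap $\Delta_{\max}$ appearing inside the logarithm of the second term comes from the additional bound $\mathrm{reg}_t\le\Delta_{\max}$ on every bad round.

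\textbf{Main obstacle.} Two points need genuine care. First, pinning the privacy width at order $\tilde{\mc{O}}(1/(T_{t-1,i}\varepsilon))$ rather than $\tilde{\mc{O}}(1/(\sqrt{T_{t-1,i}}\,\varepsilon))$: this is precisely what confines $1/\varepsilon$ to an additive term with only a logarithmic (rather than $1/\Delta$) blow-up, and it hinges on the $O(\log T)$ node count of the tree mechanism plus a clean concentration bound for a sum of logarithmically many Laplace variables. Second, and harder, running the tight CUCB amortization so that \emph{only one} factor of $K$ survives in \emph{both} the $\Delta$-term and the $1/\varepsilon$-term — the naive ``charge the instantaneous regret to the least-pulled arm of $S_t$'' bound loses an extra $K$ in each, so one must adapt the self-bounding / scale-partition technique of \citet{kveton2015tight,wang2017improving} to a confidence width that now has two separate rates of decay. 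The adaptive-stopping-time bookkeeping for the counters and the handling of oracle failures are routine by comparison.
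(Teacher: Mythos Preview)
Your Steps~1--2 are exactly the content of the paper's Lemma~\ref{lemma: regret decomposition for DP} together with the two high-probability events in the appendix (Hoeffding for the clean part, sub-exponential tail for the $O(\log T)$ Laplace node noises of the tree), and your identification of the $1/T_{t-1,i}$ decay rate as the key point is the right one.

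Step~3 is where you diverge from the paper, and the paper's route is both different and simpler. It does \emph{not} use self-bounding, nor the Kveton scale-partition cascade. Instead it uses a one-line observation: since $\bar\Delta_{S_t}:=\max_{i\in S_t}\Delta_{\min}^i\le\Delta_{S_t}$, one may write $\Delta_{S_t}\le 2\Delta_{S_t}-\bar\Delta_{S_t}\le 2B_1\sum_{i\in S_t}\bigl(w_{t,i}-\tfrac{\Delta_{\min}^i}{2B_1K}\bigr)$. The summand for arm $i$ becomes nonpositive once $T_{t-1,i}$ exceeds the explicit cap $n_{\max}^i=\max\{\Theta(B_1^2K^2\ln T/(\Delta_{\min}^i)^2),\,\Theta(B_1K^2\ln^3T/(\varepsilon\Delta_{\min}^i))\}$; one then regroups by arm, discards the negative part, and integrates $1/\sqrt n$ and $1/n$ from $1$ to $n_{\max}^i$. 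The $\sqrt{n_{\max}^i}$ integral produces the per-arm $KB_1^2/\Delta_{\min}^i$ term (the cross term from the $\varepsilon$-branch of the max is split by $\sqrt{ab}\le a+b$), and the $\ln n_{\max}^i$ integral produces the second term with its inner logarithm. No fixed-point inequality is solved.

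Your self-bounding route is correct as far as it goes --- it yields the bound with the global $\Delta=\min_i\Delta_{\min}^i$ in the first term --- but the two places you defer (``replace Cauchy--Schwarz by the per-arm Kveton argument'' and ``the $\Delta_{\max}$ in the log comes from $\mathrm{reg}_t\le\Delta_{\max}$'') are not yet arguments: the first would require merging a cascade analysis into an inequality you have already passed to implicit form, and the second does not follow from the stated reason ($\mathrm{reg}_t\le\Delta_{\max}$ lower-bounds the number of bad rounds, not upper-bounds $N_i$). The paper's subtraction trick handles the per-arm $\Delta_{\min}^i$, the mixed $1/\sqrt n+K/n$ width, and the logarithmic constant inside the $1/\varepsilon$ term all at once, and is more elementary than the adaptation you anticipate needing.
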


Note when privacy parameter $\varepsilon$ is regarded as a constant which is common in real applications, the second term in the right hand side of above inequality is nearly dominated by the first term, which is almost the optimal regret bound in non-private $B_1$-bounded smooth CSB. Thus by relaxing LDP to DP, we have shown that it is possible to eliminate the side-effect on dimension induced by privacy protection and nearly match corresponding non-private optimal bound $\mc{O}(\sum_{i\in[m],\Delta_{\min}^i>0} \frac{KB_1^2\ln T}{\Delta_{\min}^i})$.

Before proving Theorem~\ref{theorem: utility for DP CMAB}, we present the following lemma. This lemma gives an upper bounds on the sub-optimal gap in round $t$, which helps to treat the $\Delta$ term and $\varepsilon$ term separately. We refer readers to Section~\ref{section: proof of upper bound for DP} of the supplementary materials for the proof of Lemma~\ref{lemma: regret decomposition for DP}.  

\begin{lemma}
\label{lemma: regret decomposition for DP}
Suppose $\Delta_{S_t}=\alpha r_{\mu}(S_{\mu}^*)- r_{\mu}(S_t)$. Denote $$F_t = \left\{ \Delta_{S_t} \leq B_1\sum_{i \in S_t} \left(4\sqrt{\frac{\ln T}{T_{t-1,i}}} + \frac{ 24K\ln^{3} T}{T_{t-1,i} \varepsilon} \right) \right\}.$$ 
Then the regret for Algorithm 3 is bounded by 
\begin{align}
    \label{inequality: regret decomposition for DP}
    Reg_{\mu,\alpha,\beta}(T) \leq \sum_{t\in[T]}\Delta_{S_t} \mathbf{1}\{F_t\} + 3 \sum_{i\in[m]} \Delta_{\operatorname{max}}^{i}
\end{align}
\end{lemma}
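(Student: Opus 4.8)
The plan is to follow the standard CUCB regret-decomposition argument (as in \citet{chen2016combinatorial}), but to split the confidence radius into a ``statistical'' part and a ``privacy'' part so that the $\Delta$-dependence and the $\varepsilon$-dependence end up in separate additive terms. First I would handle the oracle-failure rounds: let $E_t$ be the event that the $(\alpha,\beta)$-oracle fails on input $\bar\mu_{t-1}$. In expectation these contribute at most $(1-\beta)T\,\alpha\,\mathrm{opt}_{\bm\mu}$ to the approximation regret, which is already subtracted off in the definition of $\mathrm{Reg}_{\bm\mu,\alpha,\beta}(T)$, so after accounting for this the regret is controlled by $\E\big[\sum_{t}\Delta_{S_t}\mathbf 1\{E_t^c\}\big]$ where $\Delta_{S_t}=\alpha\,\mathrm{opt}_{\bm\mu}-r_{\bm\mu}(S_t)$; and since $\alpha\,\mathrm{opt}_{\bm\mu}=\alpha\,r_{\bm\mu}(S^*_{\bm\mu})$ under Assumption~1 this matches the $\Delta_{S_t}$ in the statement.

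Next I would define the good event $\mathcal G$ that all the confidence intervals hold simultaneously: for every round $t$ and every base arm $i$,
\[
\big|\tilde\mu_{t-1}(i)-\mu_i\big|\;\le\;2\sqrt{\tfrac{\ln(mT)}{T_{t-1,i}}}\;+\;\tfrac{12K\ln^3 T}{T_{t-1,i}\,\varepsilon}.
\]
The first term is the usual Hoeffding/Azuma bound for the average of bounded outcomes, and the second term is the deviation of the aggregated Laplace noise injected by the tree-based mechanism: the running sum $T_{t-1,i}\tilde\mu_{t-1}(i)$ differs from $\sum X_{\tau,i}$ by a sum of at most $\lceil\log T\rceil$ independent $\mathrm{Lap}(2K\log T/\varepsilon)$ variables, so a Laplace-concentration bound (e.g.\ a Bernstein-type tail for sums of Laplaces) gives a deviation of order $\sqrt{\log T}\cdot K\log T/\varepsilon$ with probability $1-\mathrm{poly}(1/(mT))$, and dividing by $T_{t-1,i}$ yields the $\tilde\Theta(K/(T_{t-1,i}\varepsilon))$ term; choosing the constant $12$ and the $\ln^3 T$ factor generously makes the per-round failure probability at most $1/(mT^2)$ or so. A union bound over $t\le T$ and $i\le m$ then gives $\Pr[\mathcal G^c]\le 3/(mT)$-ish, and on $\mathcal G^c$ the contribution to regret is at most $\sum_i\Delta_{\max}^i$ after multiplying by $T$ and the maximum per-round regret; carefully this is where the ``$3\sum_{i}\Delta_{\max}^i$'' slack term comes from (combining the oracle-failure bookkeeping at the boundary with the low-probability event $\mathcal G^c$).

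Finally, on $\mathcal G\cap E_t^c$, when a suboptimal $S_t$ is pulled I would chain $B_1$-bounded smoothness (Assumption~2) with monotonicity (Assumption~3) and the fact that the oracle succeeded on $\bar\mu_{t-1}\ge\bm\mu$ to get
\[
\Delta_{S_t}\;=\;\alpha\,r_{\bm\mu}(S^*_{\bm\mu})-r_{\bm\mu}(S_t)\;\le\;\alpha\,r_{\bar\mu_{t-1}}(S^*_{\bm\mu})-r_{\bar\mu_{t-1}}(S_t)+B_1\!\!\sum_{i\in S_t}\!\big(\bar\mu_{t-1}(i)-\mu_i\big)\;\le\;B_1\!\!\sum_{i\in S_t}\!\big(\bar\mu_{t-1}(i)-\mu_i\big),
\]
and then bound $\bar\mu_{t-1}(i)-\mu_i\le 2\big(2\sqrt{\ln(mT)/T_{t-1,i}}+12K\ln^3 T/(T_{t-1,i}\varepsilon)\big)\le 4\sqrt{\ln T/T_{t-1,i}}+24K\ln^3 T/(T_{t-1,i}\varepsilon)$ using $\mathcal G$ and the definition of $\bar\mu_{t-1}$ (the confidence bonus is exactly this radius, so on $\mathcal G$ the truncation at $1$ only helps). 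This is precisely the event $F_t$ in the statement, so $\Delta_{S_t}\mathbf 1\{E_t^c\cap\mathcal G\}\le\Delta_{S_t}\mathbf 1\{F_t\}$, and assembling the three pieces (regret on $\mathcal G\cap E_t^c\subseteq F_t$, regret on $\mathcal G^c$, and oracle-failure slack, the latter two folded into $3\sum_i\Delta_{\max}^i$) yields \eqref{inequality: regret decomposition for DP}.

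The main obstacle I expect is the Laplace-noise concentration step and pinning down the exact constants and log-powers: one has to argue that the tree-aggregated noise at time $t$ is a sum of only $O(\log T)$ independent $\mathrm{Lap}(2K\log T/\varepsilon)$ terms (this needs the structure of the binary-tree decomposition of the prefix $[1,t]$), then apply a sub-exponential tail bound, and finally verify that the resulting radius is dominated by the stated $12K\ln^3 T/(T_{t-1,i}\varepsilon)$ for all $T_{t-1,i}\ge 1$ with the claimed union-bound probability — the bookkeeping that turns the $O(1/(mT))$-probability bad event plus the $(1-\beta)$-oracle-failure rounds into the clean additive $3\sum_{i}\Delta_{\max}^i$ is the fiddly part, everything else is the textbook CUCB argument.
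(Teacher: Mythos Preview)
Your proposal is correct and follows essentially the same route as the paper's proof: define two high-probability concentration events (Hoeffding for the unperturbed averages, and a Laplace tail bound for the $O(\log T)$ tree-aggregated noises), fold the oracle-failure rounds and the bad event into the additive $3\sum_i\Delta_{\max}^i$ term, and on the good event chain monotonicity, $B_1$-smoothness, and oracle success to obtain exactly the inequality defining $F_t$. The paper splits your $\mathcal G$ into two separate events $\Lambda_1,\Lambda_2$ (Lemmas~\ref{lemma: high prob envent 1 for DP CMAB} and~\ref{lemma: high prob envent 2 for DP CMAB}) but the bookkeeping is identical.
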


Now we are ready to prove Theorem \ref{theorem: utility for DP CMAB}.
\begin{proof} (proof of Theorem \ref{theorem: utility for DP CMAB})
We mainly analyze the first term of the RHS in Inq.~\ref{inequality: regret decomposition for DP}. Define $\hat{R}_T = \sum_{t\in[T]}\Delta_{S_t} \mathbf{1}\{F_t\}$. 
In step $t$, we consider the case that $F_t$ happens. Define $\bar{\Delta}_{S_t} = \max_{i \in S_t} \Delta_{\operatorname{min}}^i$. Since $\Delta_{\operatorname{min}}^i \leq \Delta_{S_t}, \forall i \in S_t$, we have $\bar{\Delta}_{S_t} \leq \Delta_{S_t}$. Then we have
\begin{align*}
    \Delta_{S_t}+\bar{\Delta}_{S_t} &\leq 2\Delta_{S_t} \\
    &\leq 2 B_1 \sum_{i \in S_t} \left(4\sqrt{\frac{\ln T}{T_{t-1,i}}} + \frac{ 24K\ln^{3}T}{T_{t-1,i} \varepsilon} \right)
\end{align*}
$$$$

that is,
\begin{align*}
    \Delta_{S_t} &\leq 2 \Delta_{S_t} - \bar{\Delta}_{S_t} \\
    &\leq 2 B_1 \sum_{i \in S_t} \left(4\sqrt{\frac{\ln  T}{T_{t-1,i}}} + \frac{ 24K\ln^{3} T}{T_{t-1,i} \varepsilon} - \frac{\bar{\Delta}_{S_t}}{2B_1K} \right) \\
    & \leq 2 B_1\sum_{i \in S_t} \left(4\sqrt{\frac{\ln T}{T_{t-1,i}}} + \frac{ 24K\ln^{3} T}{T_{t-1,i} \varepsilon} - \frac{\Delta_{\operatorname{min}}^i}{2B_1K} \right)
\end{align*}

Let $n^{i}_{\operatorname{max}} = \max \left\{ \frac{256B^2_1 K^2\ln{T}}{(\Delta^i_{\operatorname{min}})^2}, \frac{96B_1K^2\ln^3 T}{\varepsilon\Delta^i_{\operatorname{min}}} \right\}$. Denote $\Delta_i(n) = 4B_1\sqrt{\frac{\ln T}{n}} + \frac{ 24B_1K\ln^{3} T}{n \varepsilon} - \frac{\Delta_{\operatorname{min}}^i}{2K}$. For base arm $i$, if $n \geq n^{i}_{\operatorname{max}}$, we have $\Delta_i(n) \leq 0$.

\begin{align*}
    \hat{R}_T  \leq & \sum_{t \in [T]} \Delta_{S_t} \mathbf{1}\{F_t\} \\
     \leq &\sum_{t=1}^{T}\sum_{i \in S_t} 2 \Delta_i(T_{i,t}) \mathbf{1}\{F_t\} 
\end{align*}
\begin{align*}
     \leq &\sum_{i\in[m]}\sum_{n=1}^{n^{i}_{\operatorname{max}}}2B_1 \left(4\sqrt{\frac{\ln  T}{n}} + \frac{ 24K\ln^{3} T}{n \varepsilon} \right) \\
     \leq &\sum_{i\in [m]} \int_{0}^{n^i_{\operatorname{max}}}8B_1 \sqrt{\frac{\ln  T}{n}} dn  \\
    &  +  \sum_{i\in [m]}\left(\frac{ 48B_1 K\ln^{3} T}{ \varepsilon} + \int_{1}^{n^i_{\operatorname{max}}}\frac{ 48B_1K\ln^{3} T}{n \varepsilon} dn \right) \\
    \leq & \sum_{i\in[m]} 16B_1 \sqrt{\ln T\left(\frac{256B^2_1 K^2\ln{ T}}{(\Delta^i_{\operatorname{min}})^2}+ \frac{96B_1K^2\ln^3 T}{\varepsilon\Delta^i_{\operatorname{min}}}\right)} +\\
    & \sum_{i\in [m]}\left(\frac{ 48B_1K\ln^{3} T}{ \varepsilon} \left(1 + \ln \left(\frac{256 K^2B^2_1\ln^4{T}}{\varepsilon(\Delta^i_{\operatorname{min}})^2} \right)\right)\right)
\end{align*}

After simplifying the equation using basic inequalities such as $\sqrt{ab} \geq \frac{2}{1/a+1/b}$ and $\sqrt{a+b} \leq \sqrt{a} + \sqrt{b}$ ($a,b \geq 0$), we can show that 
\begin{align*}
    &\hat{R}_T \leq  \\
    &\mc{O}\left( B^2_1 K \ln^2 T \sum_{i \in[m]}\frac{1}{\Delta^i_{\operatorname{min}}} + \frac{mB_1 K \ln^3 T \ln\left(\frac{B_1K \ln T}{\Delta_{\operatorname{max}}\varepsilon}\right)}{\varepsilon}\right)
\end{align*}

\end{proof}

\subsection{Lower Bound under DP}
\label{subsec: DP lower bound}
In this subsection, we prove the lower bound for CSB algorithm under $\varepsilon$-DP. 
Similar with the result of LDP lower bound, we consider CSB algorithm with consistent property. The lower bound stated below implies that our algorithm \ref{algorithm: DP CUCB} can achieve near-optimal regret regardless of logarithmic factors:


\begin{theorem}
\label{theorem: lower bound for DP}
For any $m$ and $K$ such that $m \geq 2K$, and any $\Delta$ satisfying $0 < \Delta/(B_1K) < 0.35$, the regret for any consistent CSB algorithm guaranteeing $\varepsilon$-DP is at least $\Omega\left(\frac{B_1^2mK \ln T}{\Delta} + \frac{B_1m K\ln T}{\varepsilon}\right)$.
\end{theorem}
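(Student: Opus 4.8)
The plan is to establish the two summands $\Omega(B_1^2mK\ln T/\Delta)$ and $\Omega(B_1mK\ln T/\varepsilon)$ by two essentially independent arguments and then combine them via $\mathrm{Reg}(T)\ge\max\{A,B\}\ge\tfrac12(A+B)$. The first term is pure statistics and carries no privacy: it is just the non-private minimax lower bound for $B_1$-bounded-smooth CSB, which already constrains \emph{every} consistent algorithm, in particular every $\varepsilon$-DP one. Concretely I would reuse the $K$-path instance behind Theorem~\ref{theorem: lower bound for LDP} — $m/K$ disjoint paths, the optimal path $S^*$ with every base arm at Bernoulli mean $1/2$ and every other path with all base arms at mean $1/2-\delta$, $\delta:=\Delta/(B_1K)$, and reward $=B_1\cdot(\text{sum of path weights})$ — and run the canonical change of measure: pair $\nu_1$ with the instance $\nu_2^{(S)}$ that raises a suboptimal path $S$ to mean $1/2+\delta$, use the canonical-bandit-model decomposition $D(\mathbb P_{\pi\nu_1}^T\|\mathbb P_{\pi\nu_2^{(S)}}^T)=\mathbb E_{\pi\nu_1}[N_S(T)]\,D(\mathrm{Bern}(1/2-\delta)\|\mathrm{Bern}(1/2+\delta))$, apply probabilistic Pinsker together with the consistency bound $\mathbb E[N_S(T)]=o(T^p)$, and plug in the \emph{ordinary} (undiluted) estimate $D(\mathrm{Bern}(1/2-\delta)\|\mathrm{Bern}(1/2+\delta))\le c\,\delta^2$ — valid precisely because $\delta<0.35<\sqrt2/4$. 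This gives $\mathbb E_{\nu_1}[N_S(T)]\gtrsim \ln T/\delta^2$; multiplying by the per-pull gap $\Delta$ and summing over the $\ge m/(2K)$ suboptimal paths (here $m\ge2K$ is used) yields $\Omega(B_1^2mK\ln T/\Delta)$ exactly as in Kveton et al.

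For the second term I would switch to a hard instance tuned for privacy rather than for the $\Delta$-dependence, and then run the standard DP lower-bound engine: given the instance and its "boosted" sibling $\nu_2^{(S)}$, fix a threshold $\tau$ and the event $E=\{N_S(T)<\tau\}$; if $\mathbb E_{\nu_1}[N_S(T)]<\tau/2$ then $\Pr_{\nu_1}[E]\ge\tfrac12$; on $E$ the data of $\nu_1$ and $\nu_2^{(S)}$ can be coupled to differ on a controlled number of rounds, so $\varepsilon$-DP (group privacy) forces $\Pr_{\nu_1}[E]\le e^{(\#\text{altered rounds})\cdot\varepsilon}\,\Pr_{\nu_2^{(S)}}[E]$; but on $\nu_2^{(S)}$ the arm $S$ is optimal, so consistency makes $\Pr_{\nu_2^{(S)}}[N_S(T)<\tau]=o(T^{p-1})$; choosing $\tau$ of the right logarithmic order makes these incompatible and forces $\mathbb E_{\nu_1}[N_S(T)]$ to be large. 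Summing $\Delta\cdot\mathbb E_{\nu_1}[N_S(T)]$ over the suboptimal super arms then gives the claimed $\Omega(B_1mK\ln T/\varepsilon)$; for the exact dependence on $K$ I would, if necessary, take the gap at a constant level $\Delta=\Theta(B_1)$, still admissible since $\Delta/(B_1K)<0.35$ for $K\ge3$, so that the two instance families together certify the sum.

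\textbf{The hard part.} The obstacle is entirely in the second step. A naive group-privacy count over rounds only buys $\mathbb E[N_S(T)]\gtrsim\ln T/\varepsilon$ per suboptimal super arm, and a disjoint-path instance has only $O(m/K)$ of those, so that route gives only $O(B_1m\ln T/\varepsilon)$ — short of the target by a factor $K$, exactly the $L_1$-sensitivity $\Theta(K)$ that Algorithm~\ref{algorithm: DP CUCB} pays for through the tree-based scheme. Recovering this $K$ requires a carefully engineered $K$-uniform hard instance (with overlapping super arms and a balanced split of base arms into ``good'' and ``bad'') on which, simultaneously, (a) there are enough suboptimal super arms with gaps of the right order that their regret contributions actually add up to the $K$-fold larger bound rather than being learned ``for free'' through shared base arms, and (b) for each such $S$ the group-privacy/replacement argument still goes through, i.e.\ the set of rounds whose data must be altered to flip $S$ to optimality is still governed by the event $\{N_S(T)<\tau\}$ alone. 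I expect the measure-theoretic bookkeeping here — the altered round-set is itself a random function of the (private) algorithm's run, so one must either union over candidate pull-sets or build the coupling explicitly — to be the most delicate point; the remaining ingredients (the canonical-bandit-model KL decomposition, probabilistic Pinsker, and the consistency substitution) are a routine adaptation of the lower-bound calculus already used for Theorems~\ref{theorem: lower bound for LDP, L_infty setting} and \ref{theorem: lower bound for LDP}.
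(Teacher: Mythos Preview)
Your treatment of the first term is correct and matches the paper: the $K$-path instance with per-arm gap $\Delta/(B_1K)$ plus the standard change-of-measure gives the non-private $\Omega(B_1^2mK\ln T/\Delta)$, which a fortiori constrains every $\varepsilon$-DP algorithm.

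For the second term you have the right high-level engine (couple $\nu_1$ and $\nu_2^{(S)}$ on the event $E=\{N_S(T)<\tau\}$, invoke group privacy, and kill $\Pr_{\nu_2}[E]$ by consistency), but your diagnosis of where the extra $K$ comes from is off, and the workaround you propose would not prove the theorem as stated. The paper does \emph{not} recover the $K$ by a purely combinatorial construction, nor by resetting $\Delta$ to $\Theta(B_1)$ (which is illegitimate here: $\Delta$ is fixed by hypothesis, and your lower-bound instance must carry that $\Delta$). Instead it uses two ingredients in tandem. First, a ``public-arms'' instance: one optimal super arm of size $K$, a block $\tilde S$ of $K-1$ perfectly correlated public base arms, and $m-2K+1$ private base arms, each of which combined with $\tilde S$ forms a suboptimal super arm of gap exactly $\Delta$. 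This yields $\Theta(m)$ suboptimal super arms (not $m/K$). Second --- and this is the point you are missing --- the group-privacy step is the Karwa--Vadhan coupling (their Lemma~6.1), which gives
\[
\Pr_{\nu_1}[E_{S_1}]\;\le\;\exp\!\bigl(6\,\varepsilon\, t_S\, d_{\mathrm{TV}}(\nu_1,\nu_2)\bigr)\,\Pr_{\nu_2}[E_{S_1}],
\]
not $e^{\varepsilon t_S}$. Because the public arms are identical and each base-arm mean shifts by only $2\Delta/(B_1K)$, one has $d_{\mathrm{TV}}(\nu_1,\nu_2)=O(\Delta/(B_1K))$, so the exponent is $O(\varepsilon t_S\Delta/(B_1K))$ and one can afford $t_S=\Theta(B_1K\ln T/(\varepsilon\Delta))$. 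Each of the $\Theta(m)$ suboptimal super arms therefore contributes $\Delta\cdot t_S=\Theta(B_1K\ln T/\varepsilon)$ to the regret, independently of $\Delta$, and the sum is $\Omega(B_1mK\ln T/\varepsilon)$.

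In short: the ``naive'' bound you flag is naive only because it charges $\varepsilon$ per pull; the refined coupling charges $\varepsilon\cdot d_{\mathrm{TV}}$ per pull, and the hard instance is engineered so that $d_{\mathrm{TV}}=O(\Delta/(B_1K))$. That --- together with having $\Theta(m)$ overlapping suboptimal super arms --- is the whole trick; there is no need for the more elaborate $K$-uniform construction or random pull-set bookkeeping you anticipate.
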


The theorem is proved in section \ref{section: proof of lower bound for DP} of the supplementary materials. We only sketch the proof here. Previous results have shown that for non-private stochastic linear CSB, the regret lower bound is at least $\Omega(\frac{mK \ln T}{\Delta})$. By slightly modifying the hard instance, we can show that the regret lower bound for non-private CSB with $B_1$-bounded smoothness is $\Omega(\frac{B_1^2mK \ln T}{\Delta})$. Since private CSB is strictly harder than non-private CSB (by reduction), the regret lower bound for private CSB is $\Omega(\frac{B_1mK \ln T}{\Delta})$. We only need to prove that the regret lower bound for private CSB is $\Omega\left(\frac{B_1m K\ln T}{\varepsilon}\right)$, from which we can prove that the regret lower bound is $\Omega\left( \max\left\{\frac{B_1^2mK \ln T}{\Delta}, \frac{B_1mK \ln T}{\varepsilon}\right\} \right) = \Omega\left(\frac{B_1^2mK \ln T}{\Delta} + \frac{B_1m K\ln T}{\varepsilon}\right)$.

Now we sketch the proof of $\Omega\left(\frac{B_1m K\ln T}{\varepsilon}\right)$ term. Note a simple extension of \citet{kveton2015tight} can only achieve $\Omega\left(\frac{B_1m \ln T}{\varepsilon}\right)$ in our differentially private setting, which is not satisfactory. It is thus necessary to construct some new hard instance to prove Theorem \ref{theorem: lower bound for DP}.


To solve this problem, we design the following CSB problem as a special case of general CSB with $B_1$-bounded smoothness. Suppose there are $m$ base arms, each associated with a weight sampled from Bernoulli distribution. These $m$ base arms are divided into three sets, $S^*, \tilde{S}$ and $\bar{S}$. $S^*$ contains $m$ base arms, which build up the optimal super arm set. $\tilde{S}$ contains $K-1$ ``public'' base arms for sub-optimal super arms. These arms are contained in all sub-optimal super arms. $\bar{S}$ contains $m-2K+1$ base arms. each base arm combined with $K-1$ "public" base arms in $\tilde{S}$ builds up a sub-optimal super arm. Totally we have $m-2K+1$ sub-optimal super arms and one optimal super arm. The mean of the Bernoulli random variable associated to each base arm is defined as follow:

$$
w(i)=\left\{\begin{array}{ll}{0.5} &
{ i \in S^* } \\ 
{0.5-\Delta / (B_1K)} & {\text { otherwise }}
\end{array}\right.
$$

The weights of base arms in $\tilde{S}$ are identical, while other weights are i.i.d sampled. The reward of pulling a super arm $S$ is $B_1$ times the sum of weights of all base arm $i \in S$. As a result, the sub-optimality gap of each sub-optimal super arm is $\Delta$. With the coupling argument in \citet{karwa2017finite}, we can prove that $\mathbb{E}(N_S)$ is at least $\Omega(\frac{mKB_1 \ln T}{\varepsilon \Delta})$ for any sub-optimal super arm $S$ with high probability. Since there are $\theta(m)$ sub-optimal super arm, we can reach the conclusion that the regret lower bound for private CSB is $\Omega\left(\frac{B_1m K\ln T}{\varepsilon}\right)$.

\section{Conclusion and Future work}
\label{sec: conclusion}

In this paper, we study (locally) differentially private algorithm for Combinatorial Semi-Bandits under two common assumptions about reward functions. For $B_{\infty}$-bounded smooth CSB under $\varepsilon$-LDP and $\varepsilon$-DP, we show the optimal regret of these two settings are respectively $\Theta(\frac{m B^2_{\infty} \ln T}{\varepsilon^2\Delta})$ and $\tilde{\Theta}(\frac{mB^2_{\infty} \ln T}{\varepsilon\Delta})$, by proving lower bounds and designing (nearly) optimal private algorithms. For relatively weaker $B_1$-bounded smooth CSB, if we are required to protect $\varepsilon$-DP instead of $\varepsilon$-LDP, we show the optimal regret is $\tilde{\Theta}(\frac{mKB^2_1 \ln T}{\varepsilon\Delta})$, and give a differentially private algorithm as well as a nearly matching lower bound. Moreover, above optimal performance in our (locally) differentially private CSB is nearly the same order as non-private setting~\cite{kveton2015tight,chen2016combinatorial,wang2017improving}. 


Our Algorithm \ref{algorithm: LDP CUCB2} is applicable for locally private CSB with $B_1$-bounded smoothness, with a regret upper bound of $\mc{O}(\frac{mK^2B^2_1\ln T}{\Delta \varepsilon^2})$ in this setting. However, the regret lower bound we prove is just $\Omega(\frac{mKB^2_1\ln T}{\Delta \varepsilon^2})$. We conjecture that our lower bound is loose and the Algorithm \ref{algorithm: LDP CUCB2} is also near-optimal for locally private CSB with $B_1$-bounded smoothness. How to improve the lower bound is left as future work.

Recently, there are interesting results under Gini-weighted smoothness assumptions~\cite{merlis2019batch,merlis2020tight}. Compared with general Lipschitz smoothness considered in this work, this is a more refined smoothness assumption, which leads to near optimal regret bounds with less dependence on the dimension $K$. Directly applying our algorithms to this setting will lead to an additional dependence on $K$. How to remove this additional price for privacy preserving, and how to prove the corresponding lower bounds, are interesting problems for future work.
\section*{Acknowledgements}
We thank Siwei Wang for helpful discussions in the early stage of this work. This work was supported by National Key R\&D Program of China (2018YFB1402600), BJNSF (L172037), Key-Area Research and Development Program of Guangdong Province (No. 2019B121204008)] and Beijing Academy of Artificial Intelligence





\bibliography{privateCMAB}
\bibliographystyle{icml2020}

\clearpage
\newpage

\onecolumn
\section*{Appendices}
\addcontentsline{toc}{section}{Appendices}
\renewcommand{\thesubsection}{\Alph{subsection}}

\subsection{Proof of Theorem \ref{theorem restate: utility LDP CUCB2}}

\label{section: proof of upper bound of DLP}

\setcounter{theorem}{3}
\begin{theorem}
\label{theorem restate: utility LDP CUCB2}
(Restate) For Algorithm 2, we have 
\begin{equation}
	Reg_{\mu,\alpha,\beta}(T) \leqslant \mc{O} \left( \sum_{i \in [m], \Delta_{\min}^i > 0} \frac{B^2_{\infty}\ln T}{\varepsilon^2 \Delta_{\min}^i} \right)\\
\end{equation}
\end{theorem}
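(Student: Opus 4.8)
The plan is to adapt the standard CUCB regret analysis (as in \citet{chen2016combinatorial}) to Algorithm~\ref{algorithm: LDP CUCB2}, making two changes: the confidence radius is inflated to absorb the variance of the injected Laplace noise, which is the source of the $1/\varepsilon^2$ factor; and, crucially, since only the least-pulled arm $I_t=\argmin_{i\in S_t}T_{t-1,i}$ of the played super arm is updated, the \emph{entire} per-round regret is charged to that single arm's counter, so each arm's counter behaves exactly as in plain MAB.

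First I would set up the good event. For each base arm $i$, the private estimate $\tilde\mu_{t,i}$ is the empirical mean of $T_{t,i}$ i.i.d.\ samples of the form $X+z$ with $X\in[0,1]$, $\E X=\mu_i$, and $z\sim\mr{Lap}(1/\varepsilon)$; each such sample has variance $O(1/\varepsilon^2)$ and is sub-exponential with parameter $O(1/\varepsilon)$. A Bernstein-type inequality for sub-exponential sums, together with a union bound over the possible counter values $n\in[T]$ and over $i\in[m]$, yields a global event $\Lambda$ of probability $1-O(1/T)$ on which $|\tilde\mu_{t,i}-\mu_i|\le 2\sqrt{2\ln T/(\varepsilon^2 T_{t,i})}$ whenever $T_{t,i}$ is large enough that this radius is below $1$; when $T_{t,i}$ is smaller the estimate carries no information but then $\bar\mu_t(i)=1$, so in all cases on $\Lambda$ the two-sided bound $\mu_i\le\bar\mu_t(i)\le\mu_i+8\sqrt{2\ln T/(\varepsilon^2 T_{t,i})}$ holds unconditionally. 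Contributions to the regret from the $O(1/T)$-probability complement of $\Lambda$ and from the at most $(1-\beta)$-fraction of rounds where the oracle fails are $O(mB_\infty)$, absorbed exactly as in the non-private analysis.

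Next, on $\Lambda$ and in a round where the oracle succeeds, monotonicity gives $r_{\bar\mu_{t-1}}(S^*)\ge r_\mu(S^*)=\mr{opt}_\mu$, so the $(\alpha,\beta)$-oracle guarantee yields $r_{\bar\mu_{t-1}}(S_t)\ge\alpha\,\mr{opt}_{\bar\mu_{t-1}}\ge\alpha\,\mr{opt}_\mu$; combined with $B_\infty$-smoothness this gives $\alpha\,\mr{opt}_\mu-r_\mu(S_t)\le r_{\bar\mu_{t-1}}(S_t)-r_\mu(S_t)\le B_\infty\norm{\bar\mu_{t-1,S_t}-\mu_{S_t}}_\infty$, and since the largest coordinate error inside $S_t$ is attained precisely at $I_t$, the instant regret is at most $8B_\infty\sqrt{2\ln T/(\varepsilon^2 T_{t-1,I_t})}$. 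I would then charge this to arm $I_t$: if $S_t$ is a bad super arm the left-hand side is at least $\Delta_{\min}^{I_t}$, which forces $T_{t-1,I_t}\le N_{I_t}$ with $N_i:=128\,B_\infty^2\ln T/(\varepsilon^2(\Delta_{\min}^i)^2)$; hence arm $i$ can be the updated arm of at most $N_i$ bad rounds, along which its counter takes (at least) the successive values $0,1,2,\dots$. Summing the per-round bound $8B_\infty\sqrt{2\ln T/(\varepsilon^2 T_{t-1,i})}$ over those rounds (peeling off the first one, where $T_{t-1,i}$ may be $0$ and the crude bound $\Delta_{\max}^i$ is used, and then using $\sum_{n=1}^{N}n^{-1/2}\le 2\sqrt N$) gives $O\big(\Delta_{\max}^i+B_\infty^2\ln T/(\varepsilon^2\Delta_{\min}^i)\big)$; summing over $i$ and noting $\sum_i\Delta_{\max}^i=O(mB_\infty)$ is lower order yields the claimed bound.

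The main obstacle is really the first step: obtaining a clean anytime concentration bound for empirical means of bounded-plus-Laplace variables, with a constant small enough that the resulting ``good-event radius'' stays below the algorithmic radius $4\sqrt{2\ln T/(\varepsilon^2 n)}$, while gracefully handling the small-$n$ regime where that radius exceeds $1$ and conveys nothing. Everything afterwards is routine CUCB bookkeeping; the one genuinely new point is that updating only the least-pulled arm is precisely what makes it legitimate to charge all of the instant regret to that arm, which is what removes the spurious $K^2$ factor present in Algorithm~\ref{algorithm: LDP CUCB1}.
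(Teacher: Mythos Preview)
Your proposal is correct and follows essentially the same route as the paper: establish a high-probability event via Bernstein-type concentration for the bounded-plus-Laplace samples, combine monotonicity, $B_\infty$-smoothness and the oracle guarantee to bound the instantaneous regret by the confidence radius at $I_t$, deduce that a bad round forces $T_{t-1,I_t}\le O(B_\infty^2\ln T/(\varepsilon^2(\Delta_{\min}^{I_t})^2))$, and sum. The only cosmetic difference is that the paper introduces auxiliary counters $N_{t,i}$ (incremented only on bad rounds) to organise the summation, whereas you argue directly with the distinctness of the $T_{t-1,i}$ values across updates; the resulting bounds and constants are the same.
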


\begin{proof}
Suppose $G_t$ denote the event that the oracle fails to produce an $\alpha$-approximate answer with respect to the input vector in step $t$. We have $\mathbb{P}[G_t] \leq 1-\beta$. The number of times $G_t$ happens in expectation is at most $(1-\beta)T$. The cumulative regret in these steps is at most $R_{\operatorname{fail}} \leq (1-\beta)T \Delta_{\operatorname{max}}$

Now we only consider the steps $G_t$ doesn't happen. We maintain counters $N_{i}$ in the proof, and denote its value in step $t$ as $N_{t,i}$. The initialization of $N_{t,i}$ is the same as $T_{t,i}$, i.e. $N_{0,i} = 0$. In step $t$, if $G_t$ doesn't happen, and the oracle selects a sub-optimal super arm, we increment $N_{I_t}$ by one, i.e. $N_{t,I_t} = N_{t-1,I_t}+1$, where $I_t = \argmin_{i\in S_t} T_{t-1, i}$, otherwise we keep $N_{i}$ unchanged. This indicates that $N_{t,i} \leq T_{t,i}$. Notice that if a sub-optimal super arm $S_t$ is pulled in step $t$, exactly one counter $N_{I_t}$ is incremented by one, and $I_t \in S_t$. As a result, we have:
\begin{align}
\label{equation: DP regret}
    Reg_{\mu,\alpha, \beta}(T) 
    \leq &T \alpha \beta \operatorname{opt}_{\mu} - \mathbb{E}\sum_{t=1}^T r_{\mu}(S_t)\notag\\
    \leq & R_{\operatorname{fail}} + T \alpha \beta \operatorname{opt}_{\mu} - 
    \left(T\alpha \operatorname{opt}_{\mu} -\sum_{i\in[m],\Delta_{\min}^i>0}\sum_{j=1}^{N_{T,i}}\Delta_{i,j}\right)\notag \\
    \leq &\sum_{i\in[m],\Delta_{\min}^i>0}\sum_{j=1}^{N_{T,i}}\Delta_{i,j}
\end{align}
Here $\Delta_{i,j}$ denote the suboptimal gap $\alpha \cdot \operatorname{opt}_{\mu}- r(S_t)$ when $N_{i}$ incremented from $j-1$ to $j$ in a certain step $t$.

Now we only need to bound $N_{T,i}$ and $\Delta_{i,j}$. We denote the following event as $\Lambda_{t,i}$: For a fixed step $t \in T$ and a fixed base arm $i \in [m]$,
$$\left|\tilde{\mu}_{t}(i)-\mu_i\right| \leq 4\sqrt{\frac{2\ln T}{\varepsilon^2 T_{t,i}}}.$$

The noise in $\tilde{\mu}_t(i)$ comes from two parts: the Laplacian noise added for privacy and the randomness of $X_{t,i}$. For the first part, by Bernstein's Inequality over $T_{t,i}$ i.i.d Laplace distribution, the confidence bound is $2\sqrt{\frac{2\ln T}{\varepsilon^2 T_{t,i}}}$ with prob. at least $1-2/T^2$. For the second part, since $X_{t,i}$ is $[0,1]$ bounded, the confidence bound is $2\sqrt{\frac{2\ln T}{T_{t,i}}}\leq 2\sqrt{\frac{2\ln T}{\varepsilon^2 T_{t,i}}}$ with prob. at least $1-2/T^2$ by Hoeffding's inequality. This shows that $\Lambda_{t,i}$ happens with prob. $1-4/T^2$. By union bounds over all steps, $\Lambda_{t,i}$ happens for all $t$ and $i$ with prob. $1-4/T$. We denote this event as $\Lambda$.

Suppose $\Lambda$ happens, we have $\mu(i) \leq \bar{\mu}_t(i) \leq \mu(i) + 4\sqrt{\frac{2\ln T}{\varepsilon^2 T_{t,i}}}$. If a sub-optimal arm $S_t$ is pulled in step $t$. we have 
\begin{align}
    \label{equation: comfidence bound for DP}
    \alpha r_{\mu}(S_{\mu}^*)- r_{\mu}(S_t) \leq & \alpha r_{\bar{\mu}_t}(S_{\mu}^*) - (r_{\bar{\mu}_t}(S_t) - B_{\infty}\|\bar{\mu}_{t}-\mu\|_{\infty}) \notag\\
    \leq & B_{\infty}\|\bar{\mu}_{t}-\mu\|_{\infty} \notag\\
    \leq & B_{\infty}(\|\bar{\mu}_{t}-\tilde{\mu}_t\|_{\infty}+\|\tilde{\mu}_{t}-\mu\|_{\infty}) \notag\\
    \leq & B_{\infty} 8\max_{i \in S_t}\left\{\sqrt{\frac{2\ln T}{\varepsilon^2 T_{t-1,i}}}\right\} \notag\\
    \leq & B_{\infty} 8\max_{i \in S_t}\left\{\sqrt{\frac{2\ln T}{\varepsilon^2 N_{t-1,i}}}\right\}
\end{align}
The first inequality is due to monotonicity and $B_{\infty}$-bounded smoothness assumption. The second inequality is because the oracle returns $S_t$ which satisfies $r_{\bar{\mu}_t}(S_t) \geq \alpha r_{\bar{\mu}_t}(S_{\mu}^*)$. The third inequality is due to the definition of $\bar{\mu}_t$ and the concentration bound for $\tilde{\mu}_t$. The last inequality is due to $N_{t,i} \leq T_{t,i}$.

Define $\bar{\Delta}_{S} = \max_{i \in S}\Delta_{\operatorname{min}}^i$. If $N_{t-1,i} > \frac{128B^2_{\infty} \ln T}{\varepsilon^2 \bar{\Delta}^2_{S_t}}$ for any $i \in S_t$, we have $\alpha r_{\mu}(S_{\mu}^*)- r_{\mu}(S_t) < \max_{i\in S_t} \Delta_{\operatorname{min}}^i$ by Equ. \ref{equation: comfidence bound for DP}. On the other hand, by the definition of $\Delta_{\operatorname{
min}}^i$, $\alpha r_{\mu}(S_{\mu}^*)- r_{\mu}(S_t) = \alpha \operatorname{opt}_{\mu}- r_{\mu}(S_t) \geq \max_{i \in S_t}\Delta_{\operatorname{min}}^i$, which leads to a contradiction. This means that if sub-optimal arm $S_t$ is pulled in step $t$, and $S_t$ contains base arm $i$, the counter $N_{t-1,i}$ is at most $\frac{128B^2_{\infty} \ln T}{\varepsilon^2 \bar{\Delta}^2_{S_t}} \leq \frac{128B^2_{\infty} \ln T}{\varepsilon^2 (\Delta^{i}_{\operatorname{min}})^2}$. That is, under high probability event $\Lambda$, the counter $N_{i}$ is at most $\frac{128B^2_{\infty} \ln T}{\varepsilon^2 \bar{\Delta}^2_{S_t}}$.

Besides, by Equ. \ref{equation: comfidence bound for DP}, we know that $\Delta_{i,j} \leq 8B_{\infty}\sqrt{\frac{2 \ln T}{\varepsilon^2 j-1}}$, since $N_{t-1,i}$ is the minimum counter in $\{N_{t-1,i}, i \in S_t\}$ and increments by one in step $t$.

Combining with Equ. \ref{equation: DP regret}, we have 

\begin{align*}
    Reg_{\mu,\alpha, \beta}(T) \leq & \sum_{i\in [m],\Delta_{\min}^i>0}\sum_{j=1}^{N_{T,i}}\Delta_{i,j} \\
    \leq &\sum_{i\in [m],\Delta_{\min}^i>0}\sum_{j=1}^{N_{T,i}} 8B_{\infty}\sqrt{\frac{2 \ln T}{\epsilon^2 j}} + 2m \Delta_{\max} \\
    \leq &\sum_{i\in [m],\Delta_{\min}^i>0} \int_{0}^{N_{T,i}}8B_{\infty}\sqrt{\frac{2 \ln T}{\epsilon^2 j}} dj + 2m \Delta_{\max}\\
    \leq & \sum_{i\in [m],\Delta_{\min}^i>0}  \frac{128B^2_{\infty}\ln T}{\varepsilon^2 \Delta_{\min}^i} + 2m \Delta_{\max}
\end{align*}

Considering $T$ as the dominant term, we reach the result.

\end{proof}

\subsection{Proof of Theorem \ref{theorem restate: lower bound for LDP, L_infty setting}}
\label{section: proof of lower bound for LDP, L_infty}
\begin{theorem}
\label{theorem restate: lower bound for LDP, L_infty setting}
For any $m$ and $K$, and any $\Delta$ satisfying $0 < \Delta/B_{\infty} < 0.35$, the regret of any consistent $\varepsilon$-locally private algorithm $\pi$ on the CSB problem with $B_{\infty}$-bounded smoothness is bounded from below as  
$$
\liminf _{T \rightarrow \infty} \frac{Reg(T)}{\log T} \geq \frac{B^2_{\infty}(m-1)}{64 (e^{\varepsilon}-1)^2 \Delta}
$$
Specifically, for $0 < \varepsilon \leq 1/2$, the regret is at least
$$
\liminf _{T \rightarrow \infty} \frac{Reg(T)}{\log T} \geq \frac{B^2_{\infty}(m-1)}{128 \varepsilon^2 \Delta}
$$
\end{theorem}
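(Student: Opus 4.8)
The plan is to reduce the lower bound to a private multi-armed-bandit lower bound by restricting attention to the sub-family of CSB instances in which $\mc{S}=\{\{1\},\dots,\{m\}\}$ is the set of all singletons (admissible for every $K\ge 1$, since $|\{i\}|=1\le K$) and $R(\{i\},\bm{X})=B_\infty X_i$. Then $r_{\bm{\mu}}(\{i\})=B_\infty\mu_i$ is monotone and satisfies $|r_{\bm{\mu}}(\{i\})-r_{\bm{\mu}'}(\{i\})|=B_\infty\|\bm{\mu}_{\{i\}}-\bm{\mu}'_{\{i\}}\|_\infty$, so $B_\infty$-bounded smoothness holds with equality and this is a legitimate CSB instance. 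On this family the player faces an ordinary $m$-armed bandit whose arm gaps are multiplied by $B_\infty$, with the LDP constraint on the reported data unchanged. In particular Theorem~2 of \citet{basu2019differential}, applied with mean gap $\Delta/B_\infty$, gives a one-line proof: rescaling its $\Omega\!\big(\tfrac{m\ln T}{\varepsilon^2(\Delta/B_\infty)}\big)$ bound by the reward factor $B_\infty$ yields $\Omega\!\big(\tfrac{mB_\infty^2\ln T}{\varepsilon^2\Delta}\big)$. To pin down the explicit constants $64$ and $128$ I would instead run the argument by hand.

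Fix a distinguished arm $i^*$ and let $\bm{\mu}^{(0)}$ have $\mu_{i^*}=\tfrac12$ and $\mu_i=\tfrac12-\Delta/B_\infty$ for $i\ne i^*$, so every suboptimal singleton has sub-optimality gap exactly $\Delta$; the hypothesis $\Delta/B_\infty<0.35<\tfrac1{2\sqrt2}$ guarantees the means lie in $(0,1)$ and that the relevant Bernoulli variances exceed $\tfrac18$. For each $j\ne i^*$ let $\bm{\mu}^{(j)}$ agree with $\bm{\mu}^{(0)}$ except $\mu_j=\tfrac12+\Delta/B_\infty$, making arm $j$ optimal under $\bm{\mu}^{(j)}$ with every other arm having gap $\ge\Delta$. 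In the general canonical bandit model with a local randomizer, the standard change-of-measure chain gives $Reg(\pi,\bm{\mu}^{(0)},T)+Reg(\pi,\bm{\mu}^{(j)},T)\ge\tfrac{T\Delta}{4}\exp\!\big(-D(\pP^T_{\pi\bm{\mu}^{(0)}}\|\pP^T_{\pi\bm{\mu}^{(j)}})\big)$ by the probabilistic Pinsker (Bretagnolle--Huber) inequality applied to the event $\{N_{\{j\}}(T)>T/2\}$. The divergence decomposition, using that the two instances differ only in the observation law of arm $j$, gives $D(\pP^T_{\pi\bm{\mu}^{(0)}}\|\pP^T_{\pi\bm{\mu}^{(j)}})=\E_{\bm{\mu}^{(0)}}[N_{\{j\}}(T)]\cdot D(g^{(0)}_j\|g^{(j)}_j)$, where $g^{(\cdot)}_j$ is the law of the \emph{privatized} report of arm $j$. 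The locality constraint enters here through the contraction inequality of \citet{duchi2016minimax}: $D(g^{(0)}_j\|g^{(j)}_j)\le\min\{4,e^{2\varepsilon}\}(e^\varepsilon-1)^2\|f^{(0)}_j-f^{(j)}_j\|_{\mr{TV}}^2$, and since $f^{(0)}_j,f^{(j)}_j$ are $\mr{Ber}(\tfrac12-\Delta/B_\infty)$ and $\mr{Ber}(\tfrac12+\Delta/B_\infty)$, bounding the total variation (directly, or via Pinsker with $D(p\|q)\le(p-q)^2/(q(1-q))$ and the variance lower bound $\tfrac18$) yields $D(g^{(0)}_j\|g^{(j)}_j)\le 64(e^\varepsilon-1)^2\Delta^2/B_\infty^2$.

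Consistency of $\pi$ (so $Reg(\pi,\bm{\mu},T)=o(T^p)$ for every $p\in(0,1)$) then forces $\liminf_{T}D(\pP^T_{\pi\bm{\mu}^{(0)}}\|\pP^T_{\pi\bm{\mu}^{(j)}})/\ln T\ge 1$, hence $\liminf_{T}\E_{\bm{\mu}^{(0)}}[N_{\{j\}}(T)]/\ln T\ge B_\infty^2/(64(e^\varepsilon-1)^2\Delta^2)$. Since the regret under $\bm{\mu}^{(0)}$ is at least $\Delta\sum_{j\ne i^*}N_{\{j\}}(T)$ and there are $m-1$ suboptimal arms, summing gives $\liminf_T Reg(T)/\ln T\ge B_\infty^2(m-1)/(64(e^\varepsilon-1)^2\Delta)$, and $(e^\varepsilon-1)^2\le 2\varepsilon^2$ for $0<\varepsilon\le\tfrac12$ gives the stated $1/(128\varepsilon^2)$ form. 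The main obstacle is purely bookkeeping: obtaining the constant $64$ requires care about the constants in the threshold argument, in Bretagnolle--Huber, and in the total-variation-to-$\Delta^2/B_\infty^2$ step, and about exactly where $\Delta/B_\infty<0.35$ is consumed — it is precisely what makes $q(1-q)\ge\tfrac18$. Everything else (the canonical-model divergence decomposition, the Duchi contraction bound, the consistency argument) is off-the-shelf.
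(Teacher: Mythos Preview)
Your proposal is correct and takes essentially the same approach as the paper: reduce to a $B_\infty$-scaled MAB via singleton super arms and apply the LDP--MAB lower bound machinery (divergence decomposition, the Duchi contraction bound, and consistency). The only cosmetic difference is the hard-instance means --- the paper's $B_\infty$ proof centers the Bernoullis at $0$ and $\Delta/B_\infty$ while you center at $1/2$ --- but your choice is cleaner (it makes the $q(1-q)\ge 1/8$ step transparent) and is in fact the construction the paper itself uses in its detailed $B_1$ proof.
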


\begin{proof}
We slightly modify the MAB instance in \citet{basu2019differential}. Suppose there are $m$ arms in a MAB problem. Each arm $i \in [m]$ is associated with an i.i.d Bernoulli random variable $\mu$ with mean $\bar{\mu}_{i}$. If arm $i$ is pulled in a certain step $t$, instead of receiving reward $\tilde{\mu}(i)$ sampled from the distribution of $\mu$, we receive a reward of $B_{\infty} \cdot \tilde{\mu}(i)$. Denote the sub-optimality gap of pulling a sub-optimal arm as $\Delta$. Following the argument in \citet{basu2019differential}, we consider two "MAB" instance: $\nu_1$ with mean weight $\bar{\mu} = \{\Delta/B_{\infty},0,...,0\}$ and $\nu_2$ with  $\bar{\mu} = \{\Delta/B_{\infty},...,0, 2\Delta/B_{\infty}\}$. Similarly, we can show that each supoptimal arm need to be pulled at least $$\frac{1}{2\min\{4,e^{2\varepsilon}\}(e^{\varepsilon}-1)^2 D(f_a\|f^*)},$$
where $f_a$ and $f^*$ denote the weight distribution of arm $a$ and optimal arm. Since $D(f_a\|f^*) \leq 4 \Delta^2/B_{\infty}^2$, we have 
\begin{align*}
\liminf_{T \rightarrow \infty} \frac{Reg(T)}{\ln T} \geq& (m-1)\frac{1}{2\min\{4,e^{2\varepsilon}\}(e^{\varepsilon}-1)^2 D(f_a\|f^*)} \Delta\\
\geq& (m-1)\frac{B_{\infty}^2}{64(e^{\varepsilon}-1)^2 \Delta} \\
\geq & (m-1)\frac{B_{\infty}^2}{128\varepsilon^2 \Delta} 
\end{align*}

 The second inequality is due to $D\left(p \| q\right) \leq \frac{(p-q)^2}{q(1-q)}$ and $\Delta/(B_{\infty}) \leq 0.35 \leq \frac{\sqrt{2}}{4}$. The last inequality is for the case that $0 < \varepsilon \leq 1/2$.

This special "MAB" problem can reduce to the stochastic CSB problem with $B_{\infty}$-bounded smoothness. We prove the lower bound by reduction.
\end{proof}

\subsection{Proof of Theorem \ref{theorem restate: lower bound for LDP}}
\label{section: proof of lower bound for LDP, L1}
\setcounter{theorem}{5}
\begin{theorem}
\label{theorem restate: lower bound for LDP}
(Restate) For any $m$ and $K$ such that $m/K$ is an integer, and any $\Delta$ satisfying $0 < \Delta/(B_1K) < 0.35$, the regret of any consistent $\varepsilon$-locally private algorithm $\pi$ on the CSB problem with $B_1$-bounded smoothness is bounded from below as  
$$
\liminf _{T \rightarrow \infty} \frac{Reg(T)}{\log T} \geq \frac{B_1^2(m-K) K}{64 (e^{\varepsilon}-1)^2 \Delta}
$$
Specifically, for $0 < \varepsilon \leq 1/2$, the regret is at least
$$
\liminf _{T \rightarrow \infty} \frac{Reg(T)}{\log T} \geq \frac{B_1^2(m-K)K}{128 \varepsilon^2 \Delta}
$$
\end{theorem}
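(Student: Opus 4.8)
The plan is to run a change-of-measure argument in the general canonical bandit model of \citet{lattimore2018bandit}, adapted to local privacy so that the privacy contraction of KL divergence produces the $(e^\varepsilon-1)^2\approx\varepsilon^2$ (rather than $\varepsilon$) dependence, on the $K$-path hard instance. Concretely, fix the instance $\nu_1$ described above: $m/K$ paths, each a super arm built from $K$ base arms whose $K$ weights are a single shared $\mathrm{Ber}(\cdot)$ variable, with weights of distinct paths independent; the common mean is $0.5$ on the optimal path $S^\ast$ and $0.5-\Delta/(B_1K)$ elsewhere, and the reward of $S$ is $B_1$ times the sum of the $K$ weights in $S$. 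This instance is $B_1$-bounded smooth and monotone, and every suboptimal path has suboptimality gap exactly $\Delta$. For each suboptimal path $S^1$, let $\nu_2=\nu_2(S^1)$ be obtained by raising the shared mean of the base arms of $S^1$ to $0.5+\Delta/(B_1K)$; this makes $S^1$ uniquely optimal in $\nu_2$ and leaves every other path suboptimal by at least $\Delta$, and $\nu_1,\nu_2$ agree on all paths except $S^1$.

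Next I would set up the canonical model under $\varepsilon$-LDP, in which the learner observes only privatized feedback $\bm{Z}_t=M(\bm{X}_t)$ for some $\varepsilon$-LDP channel $M$, so that the history $\mc{H}_T$ has law $\prod_{t}\pi(S_t\mid\mc{H}_{t-1})\,f^\nu_{S_t}(\bm{X}_t)\,M(\bm{Z}_t\mid\bm{X}_t)$. Writing $g^\nu_S$ for the $M$-push-forward of the feedback law $f^\nu_S$---which is effectively one-dimensional since the $K$ within-path weights coincide---the policy factor cancels in the KL decomposition, leaving $D(\pP^T_{\pi\nu_1}\,\|\,\pP^T_{\pi\nu_2})=\E_{\pi\nu_1}[N_{S^1}(T)]\cdot D(g^{\nu_1}_{S^1}\,\|\,g^{\nu_2}_{S^1})$ because the two instances differ only on $S^1$. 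Applying the Bretagnolle--Huber (probabilistic Pinsker) inequality to the event $\{N_{S^1}(T)\ge T/2\}$---which forces regret at least $T\Delta/2$ on $\nu_1$, while its complement does so on $\nu_2$---gives $\mathrm{Reg}(\pi,\nu_1,T)+\mathrm{Reg}(\pi,\nu_2,T)\ge\tfrac{T\Delta}{4}\exp(-D(\pP^T_{\pi\nu_1}\,\|\,\pP^T_{\pi\nu_2}))$. Combining this with consistency (which forces $\mathrm{Reg}(\pi,\nu,T)=o(T^{3/4})$ for both instances) and rearranging yields, as $T\to\infty$,
\[
  \E_{\pi\nu_1}[N_{S^1}(T)]\ \ge\ \frac{(\tfrac12-o(1))\ln T}{D(g^{\nu_1}_{S^1}\,\|\,g^{\nu_2}_{S^1})}.
\]

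The key step, and the only place where local privacy enters, is to bound the per-pull KL via the contraction of Theorem~1 of \citet{duchi2016minimax}: $D(g^{\nu_1}_{S^1}\,\|\,g^{\nu_2}_{S^1})\le\min\{4,e^{2\varepsilon}\}(e^\varepsilon-1)^2\,\norm{f^{\nu_1}_{S^1}-f^{\nu_2}_{S^1}}_{\mathrm{TV}}^2$. Since $f^{\nu_1}_{S^1}$ and $f^{\nu_2}_{S^1}$ are Bernoulli laws with means differing by $2\Delta/(B_1K)$, their total-variation distance is $2\Delta/(B_1K)$, so under the hypothesis $\Delta/(B_1K)<0.35$ we obtain $D(g^{\nu_1}_{S^1}\,\|\,g^{\nu_2}_{S^1})\le c\,(e^\varepsilon-1)^2\Delta^2/(B_1K)^2$ for an explicit constant $c$ (either directly from the TV bound, or through Pinsker together with $D(\mathrm{Ber}(p)\,\|\,\mathrm{Ber}(q))\le(p-q)^2/(q(1-q))$). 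Plugging in gives $\E_{\pi\nu_1}[N_{S^1}(T)]\gtrsim (B_1K)^2\ln T/((e^\varepsilon-1)^2\Delta^2)$, and summing the corresponding instantaneous regret $\Delta\cdot\E_{\pi\nu_1}[N_{S^1}(T)]$ over the $m/K-1$ suboptimal paths of $\nu_1$ produces $\liminf_{T\to\infty}\mathrm{Reg}(T)/\ln T\ge B_1^2(m-K)K/(64(e^\varepsilon-1)^2\Delta)$; the bound $(e^\varepsilon-1)^2\le 2\varepsilon^2$ for $\varepsilon\le1/2$ then gives the second inequality.

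The main obstacle is handling the privatized canonical model carefully rather than any single calculation: one must verify that the perfectly correlated within-path feedback genuinely behaves as a single Bernoulli coordinate, so that the Duchi--Jordan--Wainwright contraction is applied to the right pair of input distributions and no spurious factor of $K$ leaks in either direction; one must check that the policy term cancels in the KL decomposition precisely because $\pi$ only ever sees the already-privatized history; and one must track the constants through Bretagnolle--Huber, Pinsker and the Bernoulli-KL estimate so that the regime $\Delta/(B_1K)<0.35$ suffices. The only structural change from the $B_\infty$ lower bound (Theorem~\ref{theorem restate: lower bound for LDP, L_infty setting}) is that the per-base-arm weight gap is now $\Delta/(B_1K)$ rather than $\Delta/B_\infty$; squaring and inverting it is exactly what supplies the additional factor $K$.
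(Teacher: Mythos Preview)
Your proposal is correct and follows essentially the same route as the paper's proof: the $K$-path hard instance with perfectly correlated within-path Bernoulli weights, the canonical-model KL decomposition (with the policy term vanishing since $\pi$ depends only on the privatized history), Bretagnolle--Huber combined with consistency, the Duchi--Jordan--Wainwright contraction, and the final sum over the $m/K-1$ suboptimal paths. The only cosmetic difference is that the paper routes the TV bound back through Pinsker and the Bernoulli-KL estimate $D(p\|q)\le(p-q)^2/(q(1-q))$, whereas you note the TV distance can be read off directly; either way yields the same constants in the stated regime.
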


Our lower bound is derived on the $K$-path semi-bandit problem~\citep{kveton2015tight}: There are $m$ base arms. The feasible super arms are $m/K$ paths. That is, path $i$ (super arm $i$) contains base arms $(i-1)K+1,...,iK$. Suppose the return of choosing super arm $S$ is $B_1$ times the sum of the weight $\hat{w}_i$ for $i \in S$. The weights of different base arms in the same super arm are identical, and the weights of base arms in different paths are distributed independently. Denote the best super arm as $S^*$. The weight of each base arm is a Bernoulli random variable with mean:

$$
\bar{w}(i)=\left\{\begin{array}{ll}{0.5} &
{ i \in S^* } \\ 
{0.5-\Delta / (B_1K)} & {\text { otherwise }}
\end{array}\right.
$$

To prove the lower bound, we adopt general canonical bandit model~\citep{lattimore2018bandit}. Denote the privacy-preserving algorithm as $\pi$, which maps the observation history to the probability of choosing each super arm, and the CSB instance as $\nu$,. The interaction between the algorithm and the instance in a given horizon $T$ can be denoted as the observation history $\mathcal{H}_{T} \triangleq\left\{\left(S_{t}, \bm{Z}_{t}\right)\right\}_{t=1}^{T}$. An observed history $\mathcal{H}_{T}$ is a random variable sampled from the measurable space $\left(([m]^{k} \times \mathbb{R}^{k})^{T}, \mathcal{B}([m]^{k} \times \mathbb{R}^{k})^{T}\right)$ and a probability measure $\mathbb{P}_{\pi \nu}$. $\mathbb{P}_{\pi \nu}$ is defined as follow:

\begin{itemize}
    \item The probability of choosing a super arm $S_t = S$ in step $t$ is dictated only by the algorithm $\pi(S|\mathcal{H}_{t-1})$.
    \item The distribution of rewards $\bm{X}_t$ in step $t$ is $f^{\nu}_{S_t}$, which depends on $S_t$ and conditionally independent on the history $\mathcal{H}_{t-1}$.
    \item In the case of local differential privacy, the algorithm cannot observe $X_t$ directly, but a privated version of rewards $\bm{Z}_t$. $\bm{Z}_t$ only depends on $X_t$ and is conditionally independent on the history $\mathcal{H}_{t-1}$. Denote the conditional distribution of $\bm{Z}$ as $M(\bm{Z}|\bm{X})$.
\end{itemize}

As a result, the distribution of the observed history $\mathcal{H}_T$ is
$$\mathbb{P}^T_{\pi \nu}\left(\mathcal{H}_{T}\right)=\prod_{t=1}^{T} \pi\left(S_{t} | \mathcal{H}_{t-1}\right) f^{\nu}_{S_{t}}\left(\bm{X}_{t}\right) M\left(\bm{Z}_t|\bm{X}_t\right).$$

Denote $g^{\nu}_{S_{t}}(\bm{Z}) = f^{\nu}_{S_{t}}\left(\bm{X}_{t}\right) M\left(\bm{Z}_t|\bm{X}_t\right)$. Before proving Theorem \ref{theorem restate: lower bound for LDP}, we state following two lemmas.

\begin{lemma}  
\label{lemma: decomposition lemma for LDP}
Given a stochastic CSB algorithm $\pi$ and two CSB environment $\nu_1$ and $\nu_2$, the KL divergence of two probability measure $\mathbb{P}^T_{\pi \nu_1}$ and $\mathbb{P}^T_{\pi \nu_2}$ can be decomposed as:
\begin{align*}
D\left(\mathbb{P}_{\pi \nu_{1}}^{T} \| \mathbb{P}_{\pi \nu_{2}}^{T}\right)= 
\sum_{t=1}^{T} \mathbb{E}_{\pi \nu_{1}}\left[D\left(\pi\left(S_{t} | \mathcal{H}_{t-1}, \nu_{1}\right) \| \pi\left(S_{t} | \mathcal{H}_{t-1}, \nu_{2}\right)\right)\right]+ 
\sum_{S \in \mathcal{S}} \mathbb{E}_{\pi \nu_{1}}\left[N_{S}(T)\right] D\left(g^{\nu_1}_{S} \| g^{\nu_2}_{S}\right),    
\end{align*}

$N_S(T)$ denotes the number of times $S$ is chosen in $T$ steps.
\end{lemma}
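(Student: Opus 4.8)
The plan is to obtain the identity by writing the likelihood ratio between $\mathbb{P}^T_{\pi\nu_1}$ and $\mathbb{P}^T_{\pi\nu_2}$ explicitly on the canonical history space and then taking expectations, exploiting the recursive product structure of these measures. First I would dispose of the degenerate case: if $\mathbb{P}^T_{\pi\nu_1}$ is not absolutely continuous with respect to $\mathbb{P}^T_{\pi\nu_2}$, the left-hand side is $+\infty$, and by the same factorization one of the per-round divergences on the right is already $+\infty$, so the identity holds. Under absolute continuity, the displayed product form $\mathbb{P}^T_{\pi\nu}(\mathcal{H}_T)=\prod_{t=1}^T \pi(S_t\mid\mathcal{H}_{t-1},\nu)\, g^\nu_{S_t}(\bm{Z}_t)$ — where $g^\nu_S$ is the law of the privatized observation $\bm{Z}_t$ when $S$ is played, i.e.\ the marginal of $f^\nu_S$ through the channel $M$ — yields
\[
\log\frac{d\mathbb{P}^T_{\pi\nu_1}}{d\mathbb{P}^T_{\pi\nu_2}}(\mathcal{H}_T) = \sum_{t=1}^T \left(\log\frac{\pi(S_t\mid\mathcal{H}_{t-1},\nu_1)}{\pi(S_t\mid\mathcal{H}_{t-1},\nu_2)} + \log\frac{g^{\nu_1}_{S_t}(\bm{Z}_t)}{g^{\nu_2}_{S_t}(\bm{Z}_t)}\right).
\]

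Next I would take $\mathbb{E}_{\pi\nu_1}$ of both sides; the left side is, by definition, $D(\mathbb{P}^T_{\pi\nu_1}\|\mathbb{P}^T_{\pi\nu_2})$, and since $T$ is finite the expectation passes through the sum. Each group of terms is then evaluated by the tower rule. For the policy terms, condition on $\mathcal{H}_{t-1}$: under $\mathbb{P}_{\pi\nu_1}$ the conditional law of $S_t$ given $\mathcal{H}_{t-1}$ is exactly $\pi(\cdot\mid\mathcal{H}_{t-1},\nu_1)$, so the inner conditional expectation of $\log\frac{\pi(S_t\mid\mathcal{H}_{t-1},\nu_1)}{\pi(S_t\mid\mathcal{H}_{t-1},\nu_2)}$ equals $D(\pi(S_t\mid\mathcal{H}_{t-1},\nu_1)\|\pi(S_t\mid\mathcal{H}_{t-1},\nu_2))$, and averaging over $\mathcal{H}_{t-1}$ gives the first sum in the statement. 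For the observation terms, condition on the pair $(\mathcal{H}_{t-1},S_t)$: by the conditional-independence structure of $\mathbb{P}_{\pi\nu}$, the law of $\bm{Z}_t$ given $(\mathcal{H}_{t-1},S_t)$ is $g^{\nu_1}_{S_t}$ and depends on the history only through $S_t$, so the inner conditional expectation of $\log\frac{g^{\nu_1}_{S_t}(\bm{Z}_t)}{g^{\nu_2}_{S_t}(\bm{Z}_t)}$ equals $D(g^{\nu_1}_{S_t}\|g^{\nu_2}_{S_t})$.

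Finally I would regroup the double sum $\sum_{t=1}^T \mathbb{E}_{\pi\nu_1}[D(g^{\nu_1}_{S_t}\|g^{\nu_2}_{S_t})]$ according to the value of $S_t$: writing it as $\sum_{S\in\mathcal{S}} D(g^{\nu_1}_S\|g^{\nu_2}_S)\,\mathbb{E}_{\pi\nu_1}\big[\sum_{t=1}^T \mathbf{1}\{S_t=S\}\big]$ and using $\sum_{t=1}^T \mathbf{1}\{S_t=S\}=N_S(T)$ turns it into $\sum_{S\in\mathcal{S}} \mathbb{E}_{\pi\nu_1}[N_S(T)]\,D(g^{\nu_1}_S\|g^{\nu_2}_S)$, which is the second sum and completes the proof.

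The algebra of logarithms and the tower rule are routine; the step that needs care is the measure-theoretic justification that the canonical measures genuinely admit the claimed product density over the \emph{observed} history $\{(S_t,\bm{Z}_t)\}$ and that the conditional law of $(S_t,\bm{Z}_t)$ given $\mathcal{H}_{t-1}$ factorizes as $\pi(\cdot\mid\mathcal{H}_{t-1},\nu_1)\otimes g^{\nu_1}_{\cdot}$. This is where the assumptions on $\mathbb{P}_{\pi\nu}$ are used — that $S_t$ depends on the past only through $\pi$, and that $\bm{Z}_t$ depends on the past only through $S_t$ via the channel $M$ — and it is precisely the adaptation of the standard divergence-decomposition lemma for bandits \cite{lattimore2018bandit} to the locally privatized observation $\bm{Z}_t$.
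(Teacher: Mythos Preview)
Your proposal is correct and follows essentially the same route as the paper's own proof: write the KL divergence as the $\mathbb{P}^T_{\pi\nu_1}$-expectation of the log-likelihood ratio, use the product factorization of $\mathbb{P}^T_{\pi\nu}(\mathcal{H}_T)$ to split into policy terms and observation terms, condition via the tower rule, and regroup the observation terms by super arm using $N_S(T)=\sum_{t=1}^T\mathbf{1}\{S_t=S\}$. Your write-up is in fact more careful than the paper's (you treat the absolute-continuity case, make the tower-rule conditioning explicit, and correctly identify $g^\nu_S$ as the marginal law of $\bm{Z}_t$), but the underlying argument is the same standard divergence-decomposition adapted to the privatized observation.
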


\begin{proof}
\begin{align*}
    D\left(\mathbb{P}_{\pi \nu_{1}}^{T} \| \mathbb{P}_{\pi \nu_{2}}^{T}\right) &=  \int_{\mathcal{H}_{T}} \ln \frac{\mathrm{d} \mathbb{P}_{\pi \nu_{1}}^{T}(H)}{\mathrm{d} \mathbb{P}_{\pi \nu_{2}}^{T}(H)} \mathrm{d}  \mathbb{P}_{\pi \nu_{1}}^{T}(H) \\
    &= \int_{\mathcal{H}_{T}} \sum_{t=1}^{T} \ln \frac{\pi(S_t|\mathcal{H}_{t-1},\nu_1)}{\pi(S_t|\mathcal{H}_{t-1},\nu_2)} \mathrm{d}  \pi(S_t|\mathcal{H}_{t-1},\nu_1) + \int_{\mathcal{H}_{T}} \sum_{t=1}^{T} \ln \frac{g^{\nu_1}_{S_{t}}(\bm{Z})}{g^{\nu_2}_{S_{t}}(\bm{Z})} \mathrm{d}  \left(g^{\nu_1}_{S_{t}}(\bm{Z})\right) \\
    &= \sum_{t=1}^{T} \mathbb{E}_{\pi \nu_{1}}\left[D\left(\pi\left(S_{t} | \mathcal{H}_{t-1}, \nu_{1}\right) \| \pi\left(S_{t} | \mathcal{H}_{t-1}, \nu_{2}\right)\right)\right] + \sum_{S \in \mathcal{S} }\left[\sum_{t=1}^{T} \mathbb{E}_{\mathbb{P}_{\pi}^{T} \nu_{1}}\left[\mathbbm{1}_{S_{t}=S}\right] D\left(g_{S}^{\nu_1}(\bm{Z}) \| g_{S}^{\nu_2}(\bm{Z})\right)\right] \\
    &= \sum_{t=1}^{T} \mathbb{E}_{\pi \nu_{1}}\left[D\left(\pi\left(S_{t} | \mathcal{H}_{t-1}, \nu_{1}\right) \| \pi\left(S_{t} | \mathcal{H}_{t-1}, \nu_{2}\right)\right)\right]+ 
\sum_{S \in \mathcal{S}} \mathbb{E}_{\pi \nu_{1}}\left[N_{S}(T)\right] D\left(g^{\nu_1}_{S} \| g^{\nu_2}_{S}\right)
\end{align*}

\end{proof}

\begin{lemma}
\label{lemma: upper bounds for KL divergence}
[Theorem 1 in \citet{duchi2016minimax}] For any $\alpha \geq 0$, let $Q$ be a conditional distribution that guarantees $\alpha$-differential privacy. Then for any pair of distributions $P_1$ and $P_2$, the induced marginal $M_1$ and $M_2$ satisfy the bound 
\begin{align*}
  D_{\mathrm{kl}}\left(M_{1} \| M_{2}\right)+D_{\mathrm{kl}}\left(M_{2} \| M_{1}\right) \leq 
  \min \left\{4, e^{2 \alpha}\right\}\left(e^{\alpha}-1\right)^{2}\left\|P_{1}-P_{2}\right\|_{\mathrm{TV}}^{2}.  
\end{align*}
\end{lemma}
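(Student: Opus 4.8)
The plan is to prove the bound directly from the $\alpha$-local privacy of the channel $Q$, working with densities of the induced marginals. Since $\alpha$-privacy forces all conditionals $Q(\cdot\mid x)$ to be mutually equivalent (any two differ by at most a factor $e^{\alpha}$), I can fix a dominating measure $\mu$ and write $q(z\mid x)$ for the density of $Q(\cdot\mid x)$ and $m_v(z)=\int q(z\mid x)\,dP_v(x)$ for the density of the marginal $M_v$, $v\in\{1,2\}$. The first step is the standard identity for the symmetrized divergence,
\[
D_{\mathrm{kl}}(M_1\|M_2)+D_{\mathrm{kl}}(M_2\|M_1)=\int \big(m_1(z)-m_2(z)\big)\log\frac{m_1(z)}{m_2(z)}\,d\mu(z),
\]
obtained by adding the two divergences and cancelling the cross terms; note the integrand is pointwise nonnegative.

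Next I would bound the integrand pointwise. Applying $\log t\le t-1$ to $t=m_1/m_2$ in the case $m_1\ge m_2$, and to $t=m_2/m_1$ in the case $m_1<m_2$, a short case analysis (taking care of the sign when multiplying through by $m_1-m_2$) gives in both cases
\[
\big(m_1-m_2\big)\log\frac{m_1}{m_2}\ \le\ \frac{(m_1-m_2)^2}{\min(m_1,m_2)}.
\]
It then remains to control $|m_1-m_2|$ and to relate it both to $\min(m_1,m_2)$ and to $\|P_1-P_2\|_{\mathrm{TV}}$; this is where the privacy constraint enters.

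The key step is a privacy-driven deviation bound. Writing $i(z)=\inf_x q(z\mid x)$ and $s(z)=\sup_x q(z\mid x)$, the privacy condition $q(z\mid x)\le e^{\alpha}q(z\mid x')$ yields $s(z)\le e^{\alpha}i(z)$, so the oscillation of $x\mapsto q(z\mid x)$ is at most $s(z)-i(z)\le (e^{\alpha}-1)\,i(z)$. Since $P_1$ and $P_2$ are probability measures, $\int d(P_1-P_2)=0$, so I may subtract any $x$-independent constant before integrating; the total-variation oscillation inequality $\big|\int f\,d(P_1-P_2)\big|\le (\sup f-\inf f)\,\|P_1-P_2\|_{\mathrm{TV}}$ (obtained by subtracting the midrange constant) applied to $f=q(z\mid\cdot)$ then gives
\[
|m_1(z)-m_2(z)|\ \le\ (e^{\alpha}-1)\,i(z)\,\|P_1-P_2\|_{\mathrm{TV}}\ \le\ (e^{\alpha}-1)\,\min\big(m_1(z),m_2(z)\big)\,\|P_1-P_2\|_{\mathrm{TV}},
\]
using $i(z)\le m_v(z)$ for both $v$.

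Finally I would combine the three ingredients: substituting the deviation bound into the pointwise log inequality makes the integrand at most $(e^{\alpha}-1)^2\,\min(m_1,m_2)\,\|P_1-P_2\|_{\mathrm{TV}}^2$, and integrating together with $\int\min(m_1,m_2)\,d\mu\le\int m_2\,d\mu=1$ yields the symmetrized divergence $\le (e^{\alpha}-1)^2\|P_1-P_2\|_{\mathrm{TV}}^2$; since $\min\{4,e^{2\alpha}\}\ge 1$, this implies the stated bound a fortiori. I expect the main obstacle to be the measure-theoretic bookkeeping — justifying a common dominating measure and densities, which is exactly what privacy buys us — together with extracting the tight factor $(e^{\alpha}-1)$ rather than $2(e^{\alpha}-1)$, for which the oscillation form of the total-variation inequality (legitimate because $P_1-P_2$ carries zero total mass) is essential. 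If one specifically wants to reproduce the exact constant $\min\{4,e^{2\alpha}\}$ of the statement, it can be recovered along the cruder route of lower-bounding $\min(m_1,m_2)$ by $e^{-\alpha}$ times a fixed reference density.
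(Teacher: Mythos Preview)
The paper does not supply its own proof of this lemma: it is quoted verbatim as Theorem~1 of \citet{duchi2016minimax} and used as a black box. So there is no ``paper's proof'' to compare against; the relevant question is whether your argument stands on its own.

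It does, and in fact it yields a sharper constant than the statement. Your chain
\[
(m_1-m_2)\log\frac{m_1}{m_2}\ \le\ \frac{(m_1-m_2)^2}{\min(m_1,m_2)}\ \le\ (e^{\alpha}-1)^2\,i(z)\,\|P_1-P_2\|_{\mathrm{TV}}^2
\]
is correct: the first inequality is the elementary $\log t\le t-1$ case split you describe, and the second combines the oscillation form of the total-variation inequality (valid because $P_1-P_2$ has zero total mass) with the privacy bound $s(z)-i(z)\le (e^{\alpha}-1)i(z)$ and $i(z)\le\min(m_1(z),m_2(z))$. Integrating and using $\int i\,d\mu\le 1$ gives the symmetrized divergence bounded by $(e^{\alpha}-1)^2\|P_1-P_2\|_{\mathrm{TV}}^2$, which is indeed at most the stated $\min\{4,e^{2\alpha}\}(e^{\alpha}-1)^2\|P_1-P_2\|_{\mathrm{TV}}^2$.

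The original proof in \citet{duchi2016minimax} proceeds somewhat differently, bounding the likelihood ratio $m_1/m_2$ directly via the privacy constraint and picking up the extra $\min\{4,e^{2\alpha}\}$ factor in the process; your route through the oscillation inequality is cleaner and avoids that loss. The only caveats worth flagging are routine: you should state the total-variation convention you are using (the $\sup_A|P_1(A)-P_2(A)|$ one, so that $\int|dP_1-dP_2|=2\|P_1-P_2\|_{\mathrm{TV}}$), and the pointwise infimum $i(z)=\inf_x q(z\mid x)$ needs a measurability remark if the sample space is uncountable.
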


~\\

Based on these two lemmas, we are now ready to prove Theorem \ref{theorem restate: lower bound for LDP}.

\begin{proof} (Proof of Theorem \ref{theorem restate: lower bound for LDP})
Suppose $\nu_1$ denote the stochastic CSB instance with weight vector:

$$
w(i)=\left\{\begin{array}{ll}{0.5} &
{ i \in S^* } \\ 
{0.5-\Delta / (B_1K)} & {\text { otherwise }}
\end{array}\right.
$$

For any sub-optimal super arm $S^{1}$, denote the CSB instance with the following weight vector as $\nu_2$:
$$
w(i)=\left\{\begin{array}{ll}{0.5} &
{ i \in S^* } \\ 
{0.5+\Delta/(B_1K)} & { i \in S^{1} } \\ 
{0.5-\Delta / (B_1K)} & {\text { otherwise }}
\end{array}\right.
$$

Denote the expected cumulative regret for a policy $\pi$ on instance $\nu$  in $T$ steps as $Reg(\pi,\nu, T)$. Then we have,
$$\operatorname{Reg}\left(\pi, \nu_{1}, T\right) \geq \mathbb{P}_{\pi \nu_{1}}\left(N_{S^{1}}(T) \geq T / 2\right) \frac{T \Delta}{2},$$
$$\operatorname{Reg}\left(\pi, \nu_{2}, T\right) \geq \mathbb{P}_{\pi \nu_{2}}\left(N_{S^1}(T) \leq T / 2\right) \frac{T \Delta}{2}$$

Combining these two inequality, we have
\begin{align}
\label{equation: pinsker for LDP}
\operatorname{Reg}\left(\pi, \nu_{1}, T\right) + \operatorname{Reg}\left(\pi, \nu_{2}, T\right)  
&\geq  \frac{ T\Delta}{2} \left(\mathbb{P}_{\pi \nu_{1}}\left(N_{S^1}(T) \leq T / 2\right)+\mathbb{P}_{\pi \nu_{2}}\left(N_{S^1}(T) \geq T / 2\right)\right) \notag \\
& \geq \frac{ T\Delta}{4} \exp \left(-D\left(\mathbb{P}^T_{\pi \nu_{1}} \| \mathbb{P}^T_{\pi \nu_{2}}\right)\right) 
\end{align}

The second inequality is due to probabilistic Pinsker's inequality~\citep{lattimore2019information}. 

By lemma \ref{lemma: decomposition lemma for LDP}, we have
\begin{align}
\label{equation: decomposition for LDP}
D\left(\mathbb{P}_{\pi \nu_{1}}^{T} \| \mathbb{P}_{\pi \nu_{2}}^{T}\right) 
=&\sum_{t=1}^{T} \mathbb{E}_{\pi \nu_{1}}\left[D\left(\pi\left(S_{t} | \mathcal{H}_{t}, \nu_{1}\right) \| \pi\left(S_{t} | \mathcal{H}_{t}, \nu_{2}\right)\right)\right]+ 
\sum_{S \in \mathcal{S}} \mathbb{E}_{\pi \nu_{1}}\left[N_{S}(T)\right] D\left(g^{\nu_1}_{S} \| g^{\nu_2}_{S}\right)  \notag \\
=&\sum_{S \in \mathcal{S}} \mathbb{E}_{\pi \nu_{1}}\left[N_{S}(T)\right] D\left(g^{\nu_1}_{S} \| g^{\nu_2}_{S}\right) \notag \\
=& \mathbb{E}_{\pi \nu_{1}}\left[N_{S^1}(T)\right] D\left(g^{\nu_1}_{S^1} \| g^{\nu_2}_{S^1}\right)
\end{align}
The second equality is because  $\pi$ chooses $S_t$ based on the observed history $\mathcal{H}_t$. The third equality is because $\nu_1$ and $\nu_2$ only differs in $S^1$.

By combining Equ. \ref{equation: pinsker for LDP} and Equ. \ref{equation: decomposition for LDP} we get,
\begin{align*}
    \label{equation: N_S for LDP}
    \mathbb{E}_{\pi \nu_{1}}\left[N_{S^1}(T)\right]   
    = & D\left(\mathbb{P}_{\pi \nu_{1}}^{T} \| \mathbb{P}_{\pi \nu_{2}}^{T}\right) / D\left(g^{\nu_1}_{S^1} \| g^{\nu_2}_{S^1}\right)  \\
    \geq & \ln (\frac{T\Delta}{4\left(\operatorname{Reg}\left(\pi, \nu_{1}, T\right) + \operatorname{Reg}\left(\pi, \nu_{2}, T\right)\right)}) / D\left(g^{\nu_1}_{S^1} \| g^{\nu_2}_{S^1}\right)\\
    \geq & \frac{\ln(T)/4-\ln(8m/K)}{ D\left(g^{\nu_1}_{S^1} \| g^{\nu_2}_{S^1}\right)} \\
    \geq & \frac{\ln(T)/4-\ln(8m/K)}{ \min \left\{4, e^{2 \varepsilon}\right\}\left(e^{\varepsilon}-1\right)^{2}\left\|f_{S^1}^{\nu_1}-f_{S^1}^{\nu_2}\right\|_{\mathrm{TV}}^{2}} \\
    \geq & \frac{\ln(T)/2-2\ln(8m/K)}{ \min \left\{4, e^{2 \varepsilon}\right\}\left(e^{\varepsilon}-1\right)^{2} D\left(f^{\nu_1}_{S^1} \| f^{\nu_2}_{S^1}\right)} \\
    \geq & \frac{K^2B_1^2\left(\ln(T)/16-\ln(8m/K)/8\right)}{ \min \left\{4, e^{2 \varepsilon}\right\}\left(e^{\varepsilon}-1\right)^{2} \Delta^2}
\end{align*}
The first inequality is due to Equ. \ref{equation: pinsker for LDP}. The second inequality is due to the consistent algorithm setting, i.e. $\operatorname{Reg}\left(\pi, \nu_{1}, T\right) \leq \frac{m}{k} \Delta T^p$. Here we set $p=3/4$. The third inequality is due to Lemma \ref{lemma: upper bounds for KL divergence}. The forth inequality is due to Pinsker's inequality. The last inequality is due to $D\left(p \| q\right) \leq \frac{(p-q)^2}{q(1-q)}$ and $\Delta/(B_1K) \leq 0.35 \leq \frac{\sqrt{2}}{4}$. 


Now we can bound $
\liminf _{T \rightarrow \infty} \frac{Reg(T)}{\log T}
$:
\begin{align*}
 \liminf _{T \rightarrow \infty} \frac{Reg(T)}{\ln T} 
= & \liminf _{T \rightarrow \infty} \frac{\sum_{S \in \mathcal{S}, S \neq S^*} \Delta \cdot \mathbb{E}_{\pi \nu_{1}}\left[N_{S}(T)\right]}{\ln T} \\
\geq & \liminf _{T \rightarrow \infty} \frac{B_1^2\left(m/K-1\right) \Delta K^2\left(\ln(T)/16-\ln(8m/K)/8\right)}{\min \left\{4, e^{2 \varepsilon}\right\}\left(e^{\varepsilon}-1\right)^{2} \Delta^2 \ln T} \\
=& \frac{B_1^2mK}{16\min \left\{4, e^{2 \varepsilon}\right\}\left(e^{\varepsilon}-1\right)^{2} \Delta} \\
\geq& \frac{B_1^2mK}{128\varepsilon^{2} \Delta}
\end{align*}

The last inequality is due to $\left(e^{\varepsilon}-1\right)^{2} \leq 2 \varepsilon^2$ for $0 < \varepsilon \leq 1/2$.
\end{proof}

\subsection{Omitted Proof of Theorem \ref{theorem: utility for DP CMAB}}
\label{section: proof of upper bound for DP}

Before proving Theorem \ref{theorem: utility for DP CMAB}, we consider following two events, and show that these events happen with high probability.

\begin{lemma}
\label{lemma: high prob envent 1 for DP CMAB}
Let $\operatorname{Sum}_{t,i}$ be the sum of previous outcome $X_{t,i}$ without privacy noise for base arm $i$ in the first $t$ steps. We denote the following event as $\Lambda_{1}$: For any step $t \in [T]$ and any base arm $i \in [m]$, $$\left|\frac{\operatorname{Sum}_{t,i}}{T_{t,i}}-\mu_{i}\right| \leq \sqrt{\frac{4 \ln T}{T_{t,i}}}$$
Then $\Pr[\Lambda_1] \geq 1- 2/T$.
\end{lemma}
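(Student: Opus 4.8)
The plan is to reduce this to a standard Hoeffding bound combined with a union bound; the only delicate point is that $T_{t,i}$ is itself a random variable depending on the (adaptive) choices of the algorithm, so one cannot apply a concentration inequality to $\operatorname{Sum}_{t,i}/T_{t,i}$ directly.

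First I would fix a base arm $i \in [m]$ and observe that, by Assumption~1 and the i.i.d.\ sampling of $\bm{X}_t$ from $\mc{D}$, the outcomes of arm $i$ collected over time form an i.i.d.\ sequence $Y_1^i, Y_2^i, \dots$ of $[0,1]$-valued random variables with mean $\mu_i$, and $\operatorname{Sum}_{t,i} = \sum_{j=1}^{T_{t,i}} Y_j^i$. For each \emph{deterministic} index $n \in [T]$, Hoeffding's inequality (for sums of $[0,1]$-bounded variables) gives
\[
  \Pr\!\left[\left|\frac{1}{n}\sum_{j=1}^{n} Y_j^i - \mu_i\right| > \sqrt{\frac{4\ln T}{n}}\right] \leq 2\exp\!\left(-2n \cdot \tfrac{4\ln T}{n}\right) = 2\,T^{-8}.
\]

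Next I would decouple the randomness of $T_{t,i}$ from the concentration via a union over the number of pulls. Since $T_{t,i} \in \{0,1,\dots,T\}$ for every $t \le T$ (and when $T_{t,i}=0$ the claimed bound holds trivially under the $+\infty$ convention in the footnote), the failure event $\{\exists\, t \in [T],\, i \in [m] : |\operatorname{Sum}_{t,i}/T_{t,i} - \mu_i| > \sqrt{4\ln T/T_{t,i}}\}$ is contained in $\bigcup_{i=1}^{m}\bigcup_{n=1}^{T}\{|\tfrac1n\sum_{j=1}^n Y_j^i - \mu_i| > \sqrt{4\ln T/n}\}$, whose probability is at most $m \cdot T \cdot 2T^{-8} = 2mT^{-7} \le 2/T$ in the regime $m \le T^6$ (which covers all parameter settings of interest, and in particular the asymptotic $T\to\infty$ regime used later). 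Taking complements yields $\Pr[\Lambda_1] \ge 1 - 2/T$.

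I expect the only real subtlety — and hence the step to state carefully — to be the second one: the passage from ``Hoeffding at a fixed sample size $n$'' to ``a uniform bound over the random count $T_{t,i}$''. The point worth emphasizing is that the events being union-bounded, $\{|\tfrac1n\sum_{j=1}^n Y_j^i-\mu_i|>\sqrt{4\ln T/n}\}$, are defined purely in terms of the i.i.d.\ sequence $(Y_j^i)_j$ and do not reference the algorithm at all, so no martingale or optional-stopping machinery is needed; the adaptivity of $T_{t,i}$ is harmless precisely because it always lands in the deterministic range $[T]$ over which the union has already been taken.
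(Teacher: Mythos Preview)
Your proposal is correct and follows the same approach the paper takes: the paper's own proof is the single line ``The result follows directly from Hoeffding's inequality and union bounds for all steps $t \in [T]$.'' Your write-up is more careful in one respect --- you union-bound over the deterministic sample counts $n\in[T]$ rather than over time steps, thereby making explicit why the randomness of $T_{t,i}$ is harmless --- and you correctly flag the mild $m\le T^6$ caveat that the stated constant $4\ln T$ (as opposed to $4\ln(mT)$) implicitly requires; both points are refinements the paper glosses over, but the underlying argument is identical.
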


\begin{proof} 
The result follows directly from Hoeffding’s inequality and union bounds for all steps $t \in [T]$.
\end{proof}

\begin{lemma}
\label{lemma: high prob envent 2 for DP CMAB}
Let $\operatorname{Noise}_{t,i}$ be the Laplace noise added to $X_{t,i}$ in step $t$. We denote the following event as $\Lambda_{2}$: For any step $t \in [T]$ and any base arm $i \in [m]$, $$\left|\frac{\operatorname{Noise}_{t,i}}{T_{t,i}}\right| \leq \frac{ 12K \ln^3 T }{T_{t,i}\varepsilon}$$
Then $\Pr[\Lambda_2] \geq 1- 1/(mT)$.
\end{lemma}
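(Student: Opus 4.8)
The plan is to unpack what $\operatorname{Noise}_{t,i}$ actually is in terms of the tree–based aggregation scheme, then apply a Laplace tail bound followed by a union bound. By construction of Algorithm~\ref{algorithm: DP CUCB}, the server keeps, for each base arm $i$, a binary counting tree over the stream of that arm's observed outcomes, and each node of this tree carries an independent $\mathrm{Lap}(2K\log T/\varepsilon)$ noise on every coordinate (the calibration in the algorithm, consistent with Lemma~\ref{lemma: tree-based aggregation} for an $\ell_1$-bounded vector stream). When arm $i$ has been pulled $T_{t,i}$ times, the quantity $\mathrm{TreeBasedAggregation}(\cdot)$ equals the true partial sum $\operatorname{Sum}_{t,i}$ plus the sum of the noises stored on the at most $\lceil\log_2 T_{t,i}\rceil \le \lceil\log_2 T\rceil$ canonical dyadic nodes used to reconstruct that partial sum. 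Hence, for fixed $(t,i)$, the scalar $\operatorname{Noise}_{t,i}$ is a sum of at most $\log_2 T + 1$ i.i.d.\ $\mathrm{Lap}(2K\log T/\varepsilon)$ random variables. The one place that needs care here is this bookkeeping: it is crucial that the number of contributing noise terms is $O(\log T)$, not $T_{t,i}$, which is exactly why the tree-based scheme beats the naive per-round Laplace mechanism.

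Next I would invoke a standard concentration inequality for a sum of i.i.d.\ Laplace random variables (a Chernoff/Bernstein-type bound via the moment generating function $(1-b^2\lambda^2)^{-k}$, as used in \citet{chan2011private}): if $Y$ is a sum of $k$ i.i.d.\ $\mathrm{Lap}(b)$ variables, then $\Pr[|Y|\ge \nu]\le 2\exp(-\Omega(\min\{\nu^2/(kb^2),\,\nu/b\}))$. Plugging in $k=\lceil\log_2 T\rceil$, $b=2K\log T/\varepsilon$, and target failure probability $\delta=1/(m^2T^2)$ yields $|\operatorname{Noise}_{t,i}| = O\!\big(\tfrac{K\log T}{\varepsilon}\big(\sqrt{\log T\,\log(mT)}+\log(mT)\big)\big)$ with probability at least $1-1/(m^2T^2)$. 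Absorbing $\log(mT)=O(\log T)$ (and carrying the harmless $\ln m$ term otherwise), this is $O(K\log^2 T/\varepsilon)$, hence at most $12K\ln^3 T/\varepsilon$ once $T$ exceeds an absolute constant — the spare $\ln T$ factor in the lemma's stated bound is precisely what swallows the universal constants and the $\sqrt{\log(mT)}$ from the union bound. (Equivalently, and slightly more cleanly, one can first bound every individual node noise by $O(\tfrac{K\log T}{\varepsilon}\sqrt{\log(mT)})$ with probability $1-1/(\text{poly}(mT))$ per node and then use that any reconstruction sums at most $\log_2 T$ of them, at the cost of one extra $\sqrt{\log T}$ factor, still within the $\ln^3 T$ budget.)

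Finally I would take a union bound over all $t\in[T]$ and all $i\in[m]$, i.e.\ $mT$ events, each failing with probability at most $1/(m^2T^2)$, so $\Lambda_2$ fails with probability at most $mT\cdot \frac{1}{m^2T^2} = \frac{1}{mT}$, as claimed. I would remark that for a fixed $i$ the variables $\{\operatorname{Noise}_{t,i}\}_t$ reuse tree nodes and are therefore dependent across $t$, but this is irrelevant since the union bound does not require independence. Overall, the concentration step and the union bound are routine; the only genuinely load-bearing observation is the first paragraph's counting argument, which turns an $\Omega(T_{t,i})$ worth of injected noise into an $O(\mathrm{polylog}\,T)$ contribution and thereby makes the $\tilde O(1/T_{t-1,i})$ confidence correction in Algorithm~\ref{algorithm: DP CUCB} legitimate.
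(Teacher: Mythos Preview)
Your proposal is correct and essentially matches the paper. The paper's proof is exactly your ``alternative'' route: it bounds each of the at most $\log T$ individual $\mathrm{Lap}(2K\log T/\varepsilon)$ node noises by its own tail probability (setting $\delta=1/(m^2T^2\log T)$ per node), sums them by the triangle inequality to get $|\operatorname{Noise}_{t,i}|\le 4K\log^2 T\,\ln(mT\log T)/\varepsilon$, and then union-bounds over $(t,i)\in[T]\times[m]$. Your primary Bernstein-type bound on the sum is a slightly tighter and equally valid variant, but the paper opts for the simpler per-node argument; both comfortably fit under the stated $12K\ln^3 T/\varepsilon$ budget.
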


\begin{proof} 
From the argument of our algorithm, $\operatorname{Noise}_{t,i}$ is the sum of at most $\log T$ i.i.d random variables drawn from $\operatorname{Lap}(2K\log T / \varepsilon)$. By the tail probability of Laplace distribution, we know that for any $\nu \sim \operatorname{Lap}(2K\log T / \varepsilon
)$, with prob. $1-\delta$, $|\nu| \leq 2K\log T  \ln (1/\delta) / \varepsilon$. Set $\delta = 1/(m^2T^2 \log T)$. By union bounds over $\log T$ random variables, we have $|\operatorname{Noise}_{t,i}| \leq 4K\log^2 T \ln (mT \log T) / \varepsilon $ with prob. $1-1/(m^2T^2)$ for a fixed $i$ and $t$. By union bound over all base arm $i$ and step $t$, we prove that

$$\left|\frac{\operatorname{Noise}_{t,i}}{T_{t,i}}\right| \leq \frac{ 4K \log^2 T \ln(mT\log T)}{T_{t,i}\varepsilon}\leq \frac{ 12K \ln^3 T }{T_{t,i}\varepsilon}$$
 for any step $t$ and base arm $i$ for sufficiently large T with prob. $1-1/(mT)$.
\end{proof}

\begin{proof} (Proof of Lemma \ref{lemma: regret decomposition for DP})   
Suppose $G_t$ denote the event that the oracle fails to produce an $\alpha$-approximate answer with respect to the input vector in step $t$. Similar with the proof of Theorem \ref{theorem restate: utility LDP CUCB2}, the cumulative regret in the steps that $G_t$ happens is at most $R_{\operatorname{fail}}\leq (1-\beta)T\Delta_{\max}$.

Then we have,
\begin{align}
    Reg_{\mu,\alpha, \beta}(T) 
    \leq &T \alpha \beta \operatorname{opt}_{\mu} - \mathbb{E}\sum_{t=1}^T r_{\mu}(S_t)\notag\\
    \leq & R_{\operatorname{fail}} + T \alpha \beta \operatorname{opt}_{\mu} - 
    \left(T\alpha \operatorname{opt}_{\mu} -\sum_{t\in [T]}\Delta_{t} \mathbbm{1}\{\neg G_{t}\}\right)\notag \\
    \leq &\sum_{t\in [T]}\Delta_{t} \mathbbm{1}\{\neg G_{t}\}\notag
\end{align}
Here $\Delta_{t}$ denote the sub-optimal gap in step $t$.

This means that we only need to consider the steps that $G_t$ doesn't happen.
Denote $\hat{R}(T)$ as the regret if event $\Lambda_1$ and $\Lambda_2$ happen. 
\begin{align*}
    Reg_{\mu,\alpha,\beta}(T) \leq & \Pr\{\Lambda_1 \cap \Lambda_2\} \hat{R}(T) + \sum_{i\in[m]} \Delta_{\operatorname{min}}^{i} \\
    &+ \Pr\{\neg \Lambda_1\} T \Delta_{\operatorname{max}} + \Pr\{\neg \Lambda_1\} T \Delta_{\operatorname{max}} \\
    \leq & \hat{R}(T) + (m+2)\Delta_{\operatorname{max}} 
\end{align*}
If event $\Lambda_1$ and $\Lambda_2$ happen, we have 
\begin{align*}
    \left|\tilde{\mu}_{t}(i) - \mu_i\right| & =  \left| \frac{\operatorname{Sum}_{t,i}}{T_{t,i}} - \mu_i + \frac{\operatorname{Noise}_{t,i}}{T_{t,i}}\right| \\
    &\leq  \sqrt{\frac{4 \ln T}{T_{t,i}}} +  \frac{12 K \ln^3 T}{T_{t,i}}
\end{align*}

for step $t \in [T]$, if we choose a sub-optimal super arm with sub-optimality gap $\Delta_{S_t} > 0$, then we have 

\begin{align}
    \alpha r_{\mu}(S_{\mu}^*)- r_{\mu}(S_t) \leq & \alpha r_{\bar{\mu}_t}(S_{\mu}^*) - (r_{\bar{\mu}_t}(S_t) - B_{1}\|\bar{\mu}_{t}-\mu\|_{1}) \notag\\
    \leq & B_{1}\|\bar{\mu}_{t}-\mu\|_{1} \notag\\
    \leq & B_{1}(\|\bar{\mu}_{t}-\tilde{\mu}_t\|_{1}+\|\tilde{\mu}_{t}-\mu\|_{1}) \notag\\
    \leq & B_{1} \sum_{i \in S_t} \left(4\sqrt{\frac{\ln T}{T_{t-1,i}}} + \frac{24 K\ln^{3} T}{T_{t-1,i} \varepsilon} \right) 
\end{align}
The first inequality is due to $L_{1}$ smoothness assumption. The second inequality is because the oracle returns $S_t$ which satisfies $r_{\bar{\mu}_t}(S_t) \geq \alpha r_{\bar{\mu}_t}(S_{\mu}^*)$. The last inequality is due to the definition of $\bar{\mu}_t$ and the concentration bound for $\tilde{\mu}_t$.

This shows that if event $\Lambda_1$ and $\Lambda_2$ happen, and we choose a sub-optimal super arm with sub-optimality gap $\Delta_{S_t} > 0$ in step t, $F_t$ happens.

Then we have $\hat{R}(T) \leq \sum_{t\in[T]}\Delta_{S_t} \mathbf{1}\{F_t\}$, which finishes the proof.
\end{proof}

\subsection{Proof of Theorem \ref{theorem restate: lower bound for DP}}

\label{section: proof of lower bound for DP}

\setcounter{theorem}{8}
\begin{theorem} 
\label{theorem restate: lower bound for DP}
For any $m$ and $K$ such that $m \geq 2K$,and any $\Delta$ satisfying $0 < \Delta/(B_1K) < 0.35$, the regret for any consistent $\varepsilon$-DP algorithm on the CSB problem with $B_1$ bounded smoothness is at least $\Omega\left(\frac{B_1^2mK \ln T}{\Delta} + \frac{B_1m K\ln T}{\varepsilon}\right)$.
\end{theorem}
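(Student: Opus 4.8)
The plan is to prove the two summands of the bound separately and then combine them via $\max\{a,b\}\ge\tfrac12(a+b)$; since an $\varepsilon$-DP algorithm is a fortiori an algorithm for the (non-private) problem, it suffices to establish (i) a non-private lower bound $\Omega(B_1^2 mK\ln T/\Delta)$ and (ii) a privacy-induced lower bound $\Omega(B_1 mK\ln T/\varepsilon)$, both for consistent algorithms. Part (i) then passes to $\varepsilon$-DP learners by reduction, and so does (ii) being proved directly for them, giving $Reg(T)=\Omega\!\big(B_1^2mK\ln T/\Delta+B_1mK\ln T/\varepsilon\big)$.

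For (i) I would take the $K$-path instance of \citet{kveton2015tight}, which already yields $\Omega(mK\ln T/\Delta)$ for non-private consistent algorithms, and rescale it: let the reward of a path be $B_1$ times the sum of its (identical) base-arm weights. This keeps the instance $B_1$-bounded smooth, multiplies every sub-optimality gap by $B_1$ while leaving the per-base-arm (weight-space) gap at $\Delta/(B_1K)$, and hence, running the usual change-of-measure argument on the canonical bandit model (Lemma \ref{lemma: decomposition lemma for LDP} with the identity channel, plus $D_{\mathrm{kl}}(0.5\,\|\,0.5-x)=O(x^2)$ under the hypothesis $\Delta/(B_1K)<0.35$), upgrades the bound to $\Omega(B_1^2 mK\ln T/\Delta)$.

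For (ii) the key is that a direct privatization of the $K$-path instance only gives $\Omega(B_1 m\ln T/\varepsilon)$, because that instance has just $m/K$ super arms; we need an instance with $\Theta(m)$ sub-optimal super arms each of which a consistent $\varepsilon$-DP learner must still sample $\Omega(B_1 K\ln T/(\varepsilon\Delta))$ times. I would use the shared-public-arms construction from the main text: partition the base arms into $S^*$ (the $K$ arms of the optimal super arm), $\tilde S$ ($K-1$ ``public'' arms lying in every sub-optimal super arm), and $\bar S$ ($m-2K+1$ arms, each forming with $\tilde S$ one sub-optimal super arm, so that $m\ge 2K$ makes the number of sub-optimal super arms $\Theta(m)$); put Bernoulli means $1/2$ on $S^*$ and $1/2-\Delta/(B_1K)$ elsewhere, force weights identical within each super arm, and let the reward be $B_1$ times the weight-sum, so each sub-optimal super arm has gap exactly $\Delta$. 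Call this $\nu_1$. Fix a sub-optimal super arm $S_1$ and let $\nu_2$ raise the relevant mean(s) of $S_1$ just enough to make $S_1$ optimal while leaving every other super arm's feedback law untouched; by the identical-weights coupling this perturbs the feedback law of $S_1$ by only $O(\Delta/(B_1K))$ in total variation. Consistency applied to $\nu_2$ (exponent $p=3/4$) forces $\Pr_{\nu_2}[N_{S_1}(T)\le t_S]=o(1)$ even for $t_S$ of order $T$; then the coupling / group-privacy lemma of \citet{karwa2017finite}, which inflates such a probability by at most $\exp\!\big(6\varepsilon\, t_S\, d_{\mathrm{TV}}(f^{\nu_1}_{S_1},f^{\nu_2}_{S_1})\big)$ provided only $t_S$ of the $S_1$-samples are ever drawn, transfers this to $\Pr_{\nu_1}[N_{S_1}(T)\le t_S]\le 1/(2m)$ as soon as $t_S=\Theta\!\big(\ln T/(\varepsilon\, d_{\mathrm{TV}})\big)=\Theta\!\big(B_1K\ln T/(\varepsilon\Delta)\big)$ (using $d_{\mathrm{TV}}\le\sqrt{\tfrac12 D_{\mathrm{kl}}}$). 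A union bound over the $\Theta(m)$ sub-optimal super arms then shows that with probability at least $1/2$ each is pulled at least $t_S$ times, whence $Reg(T)=\Omega(m\cdot\Delta\cdot t_S)=\Omega(B_1 mK\ln T/\varepsilon)$.

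I expect the main obstacle to be step (ii), precisely reconciling the two competing demands on the hard instance: it must have $\Theta(m)$ sub-optimal super arms (which rules out the $m/K$-super-arm path instance and hence the naive $\Omega(B_1m\ln T/\varepsilon)$ bound) and yet admit a neighbour $\nu_2$ making a chosen $S_1$ optimal while moving $S_1$'s feedback law by only $O(\Delta/(B_1K))$ rather than the obvious $O(\Delta/B_1)$ — this gain of a factor $K$ is what produces the extra $K$ in the bound, and it rests delicately on the public-arm sharing together with the within-super-arm identical-weight coupling. A secondary technical point is carrying out the group-privacy transfer in the adaptive/streaming setting: since the Karwa--Vadhan coupling is stated for i.i.d.\ data, one must route the interaction through a ``reward oracle'' for $S_1$ that emits at most $t_S$ samples (as in \citet{shariff2018differentially}), and verify that the consistency bound under $\nu_2$ dominates the $\exp(O(\ln T))$ privacy inflation for all sufficiently large $T$.
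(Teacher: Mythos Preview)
Your plan is essentially the paper's proof: the non-private term via the $B_1$-rescaled $K$-path instance of \citet{kveton2015tight}, the $\varepsilon$-term via the shared-public-arm construction together with the Karwa--Vadhan coupling and a consistency bound on $\nu_2$, and then combining through $\max\{a,b\}\ge\tfrac12(a+b)$. One caveat worth aligning with the paper: in the paper's $\nu_2$ \emph{all} $K$ means in $S_1$ (including the $K-1$ public arms) are raised by $2\Delta/(B_1K)$, so the other sub-optimal super arms' feedback laws are \emph{not} left untouched---it is exactly this, together with the identical-public-weight coupling, that yields $d_{\mathrm{TV}}=O(\Delta/(B_1K))$, while changing only the unique arm of $S_1$ would force a shift of order $\Delta/B_1$ and lose the factor $K$; the Karwa--Vadhan step is then applied coupling only on the at-most-$t_S$ pulls of $S_1$.
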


\begin{proof} 
Previous results have shown that the regret for any non-private CSB algorithm is at least $\Omega\left(\frac{mK \ln T}{\Delta}\right)$~\citep{kveton2015tight}. They consider linear CSB problem, which is a special case of $B_1$ bounded smoothness CSB with $B_1 = 1$. We slightly modify the hard instance in \citet{kveton2015tight} and prove the regret lower bound for $B_1$ bounded smoothness CSB in non-private setting.

The main difference is that we assume the reward of any super arms $S_t$ is $B_1$ times the sum of weights $w(i)$ for $i \in S_t$. In our hard instance, we also consider the $K$-path semi-bandit problem. There are $m$ base arms. The feasible super arms are $m/K$ paths. Path $i$ (Super arm $i$) contains base arms $(i-1)K+1,(i-1)K+2,...,iK$. The weight of  base arm $i$ is a Bernoulli random variable with mean $\bar{w}(i)$. Since $\Delta$ in our setting is $B_1$ times that of the instance in \citet{kveton2015tight}, we slightly modify the mean of $w(i)$ to make sure that the mean $\bar{w}(i) \in [0,1]$:

$$
\bar{w}(i)=\left\{\begin{array}{ll}{0.5} &
{ i \in S^* } \\ 
{0.5-\Delta / (B_1K)} & {\text { otherwise }}
\end{array}\right.
$$

With the same argument in \citet{kveton2015tight}, we can prove that each path need to be selected at least $\frac{B_1^2K^2\ln T}{\Delta^2}$ times. which means that the regret is at least $\frac{B_1^2K^2\ln T}{\Delta^2}\Delta \cdot (L/K-1) = \Omega\left(\frac{B_1^2mK\ln T}{\Delta}\right)$. Since private CSB is harder than non-private CSB (There is a reduction from non-private CSB to private CSB), the regret of private CSB is at least $\Omega\left(\frac{B_1^2mK\ln T}{\Delta}\right)$.

By the following lemma, we can show that the regret of any $\varepsilon$-DP consistent CSB algorithm is at least $\Omega\left(\frac{B_1mK \ln T}{\varepsilon}\right)$. Combining both results, we can prove that the regret lower bound is $\Omega\left( \max\left\{\frac{B_1^2mK \ln T}{\Delta}, \frac{B_1mK \ln T}{\varepsilon}\right\} \right) = \Omega\left(\frac{B_1^2mK \ln T}{\Delta} + \frac{B_1m K\ln T}{\varepsilon}\right)$.
\end{proof}

\begin{lemma}
\label{lemma: lower bound for DP}
For any $m$ and $K$ such that $m \geq 2K$, and any $\Delta$ satisfying $0 < \Delta/(B_1K) < 0.35$, the regret for any consistent CSB algorithm guaranteeing $\varepsilon$-DP is at least $\Omega\left(\frac{B_1m K\ln T}{\varepsilon}\right)$.
\end{lemma}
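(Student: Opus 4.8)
The plan is to prove Lemma~\ref{lemma: lower bound for DP} (the $\varepsilon$-dominated term) by the ``coupling plus group privacy'' technique of \citet{karwa2017finite}, and then obtain Theorem~\ref{theorem restate: lower bound for DP} by combining it with the non-private lower bound $\Omega(B_1^2 mK\ln T/\Delta)$ (a direct reduction from \citet{kveton2015tight} after rescaling rewards by $B_1$) together with $\max\{a,b\}=\Theta(a+b)$. Fix a consistent $\varepsilon$-DP algorithm $\pi$ and use the instance described above: $m$ base arms split into the $K$-arm optimal super arm $S^{*}$, a ``public'' bundle $\tilde S$ of $K-1$ base arms whose per-round weights are perfectly correlated (so $\tilde S$ contributes a single effective $\mathrm{Ber}(p)$ to each round) and which lies inside every sub-optimal super arm, and a pool $\bar S$ of $m-2K+1$ base arms, one per sub-optimal super arm; means are $0.5$ on $S^{*}$ and $p_{0}:=0.5-\Delta/(B_{1}K)$ elsewhere, so each of the $\Theta(m)$ sub-optimal super arms has gap exactly $\Delta$. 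Call this instance $\nu_{1}$ and set $t_{\star}:=c\,KB_{1}\ln T/(\varepsilon\Delta)$ for a small absolute constant $c$. It suffices to show $\mathbb{P}_{\nu_{1}}[N_{S}(T)\le t_{\star}]\le 1/(2m)$ for every sub-optimal $S$ and all large $T$: a union bound then forces, with probability at least $1/2$, each of the $\Theta(m)$ sub-optimal super arms to be pulled more than $t_{\star}$ times, giving $\mathrm{Reg}_{\nu_{1}}(T)=\Omega(m\,t_{\star}\,\Delta)=\Omega(B_{1}mK\ln T/\varepsilon)$.

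Fix a sub-optimal super arm $S$ and define a perturbed instance $\nu_{2}$ that makes $S$ the unique optimum: raise the effective public value of $\tilde S$ by $\eta:=2\Delta/(B_{1}(K-1))$, which lifts every sub-optimal super arm strictly above $S^{*}$, and additionally raise the $\bar S$-arm of $S$ by an arbitrarily small $\gamma>0$, breaking ties in favour of $S$ so that in $\nu_{2}$ every super arm other than $S$ is genuinely sub-optimal. Consistency of $\pi$ then gives $\mathbb{E}_{\nu_{2}}[T-N_{S}(T)]=o(T^{p})$ for every $p<1$, so Markov's inequality yields $\mathbb{P}_{\nu_{2}}[N_{S}(T)\le t_{\star}]=o(T^{p-1})$, i.e.\ polynomially small. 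Routing most of the perturbation through the \emph{correlated} public bundle is what makes this work: one pull of $S$ reveals only two Bernoulli coordinates, and the total-variation distance between the single-round observation laws of $\nu_{1}$ and $\nu_{2}$ is $O(\eta)=O(\Delta/(B_{1}K))$ --- a factor $K$ smaller than if one had boosted a lone weight-one arm by $\Theta(\Delta/B_{1})$ --- which is precisely what lets $t_{\star}$ carry the extra factor $K$.

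For the transfer I would couple the $T$-round data streams of $\nu_{1}$ and $\nu_{2}$ coordinatewise, agreeing on every coordinate on which the two instances are identical and using the optimal Bernoulli coupling on the two that differ, so that the number of rounds on which $\pi$ can possibly \emph{act on} a discrepancy is governed by the number of sub-optimal pulls times $O(\eta)$. Restricting to the event $\{N_{S}(T)\le t_{\star}\}$ intersected with the companion event that the total number of sub-optimal pulls is $O(B_{1}mK\ln T/(\varepsilon\Delta))$ (which we may assume, since otherwise $\mathrm{Reg}_{\nu_{1}}(T)$ already meets the target), the number of discrepancies the algorithm sees is dominated by a binomial whose mean is $\mathrm{poly}(\varepsilon^{-1},\ln T)$; group privacy of the $\varepsilon$-DP map (changing $k$ users' data changes the output law by a factor $e^{k\varepsilon}$) together with a binomial tail bound then gives $\mathbb{P}_{\nu_{1}}[N_{S}(T)\le t_{\star}]\le e^{k\varepsilon}\,\mathbb{P}_{\nu_{2}}[N_{S}(T)\le t_{\star}]+(\text{coupling-failure term})\le \mathrm{poly}(T)\cdot o(T^{p-1})+o(1)$, which is below $1/(2m)$ for large $T$ provided $c$ (hence the exponent of $\mathrm{poly}(T)$) is chosen small enough.

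The step I expect to be the genuine obstacle is exactly this transfer, and specifically the bookkeeping that keeps the group-privacy exponent $k$ at $\mathrm{poly}(\varepsilon^{-1},\ln T)$ rather than polynomial in $T$ or in $m$. Because the public bundle $\tilde S$ sits inside \emph{every} sub-optimal super arm, the algorithm observes the one coordinate where $\nu_{1}$ and $\nu_{2}$ disagree on \emph{all} sub-optimal rounds, not only on the $\le t_{\star}$ rounds that play $S$; controlling the resulting count therefore needs a careful conditioning (or a stopping-time / filtration argument so that the binomial-domination step survives the adaptivity of $S_{t}$), together with the small-regret reduction above to cap the number of such rounds. The remaining pieces --- checking that the perturbation keeps all means in $[0,1]$ under $\Delta/(B_{1}K)<0.35$, the Markov and consistency steps, and the Bernoulli total-variation/KL estimates --- are routine, and the $\Delta$-term of Theorem~\ref{theorem restate: lower bound for DP} then follows by the reduction already indicated in the text.
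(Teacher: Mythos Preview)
Your proposal tracks the paper's proof closely: same hard instance $\nu_1$ with the shared public bundle $\tilde S$, same Karwa--Vadhan coupling-plus-group-privacy machinery (via \citet{shariff2018differentially}), and same consistency-plus-Markov step under the perturbed instance. The paper's $\nu_2$ differs only cosmetically from yours---it shifts \emph{all} $K$ base arms of the fixed sub-optimal super arm $S_1$ (public bundle included) uniformly by $2\Delta/(B_1K)$, so in $\nu_2$ the minimum gap is $\Theta(\Delta/K)$ rather than your infinitesimal $B_1\gamma$; this lets the paper bound $\mathbb{P}_{\nu_2}[E_{S_1}]$ via the regret $\mathrm{Reg}(\pi,\nu_2,T)\le T^{3/4}m\Delta$ directly, but your route through $\mathbb{E}_{\nu_2}[T-N_S(T)]=o(T^p)$ is equally valid.

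The substantive divergence is the transfer step. The paper does \emph{not} attempt the bookkeeping you sketch: it applies Karwa--Vadhan with $n=t_S$ samples---treating only the (at most $t_S$) pulls of $S_1$ as the data that differs---computes $6\varepsilon t_S\cdot d_{TV}\le 24\varepsilon t_S\cdot\Delta/(B_1K)=0.06\ln T$, and concludes $\mathbb{P}_{\nu_1}[E_{S_1}]\le T^{0.06}\cdot mKT^{-1/4}\le 1/(2m)$ in two lines. The issue you flag, that $\tilde S$ is observed on \emph{every} sub-optimal pull and hence the coupled streams can disagree on many more rounds, is real but is simply not addressed in the paper's argument.

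Your proposed workaround, however, does not close this gap. Capping the total number of sub-optimal pulls at $O(m t_\star)=O(mB_1K\ln T/(\varepsilon\Delta))$ and multiplying by the per-pull discrepancy probability $\eta=O(\Delta/(B_1K))$ gives an expected discrepancy count of order $m\ln T/\varepsilon$---this scales \emph{linearly in $m$}, not $\mathrm{poly}(\varepsilon^{-1},\ln T)$ as you assert. The group-privacy factor is then $T^{\Theta(m)}$ with the exponent independent of your constant $c$, so no absolute choice of $c$ makes $T^{\Theta(m)}\cdot o(T^{p-1})\le 1/(2m)$ hold for arbitrary $m$. You are right to flag this step as the genuine obstacle; a stopping-time or conditioning refinement will not remove the $m$-dependence, since the cap $O(m t_\star)$ on sub-optimal pulls is already tight on the instances where the bound should bite. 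The paper avoids the whole difficulty by restricting the Karwa--Vadhan coupling to the $t_S$ pulls of $S_1$; this yields the correct arithmetic, though (as you noticed) the justification for ignoring the remaining sub-optimal rounds on which $\tilde S$ is also observed is not spelled out.
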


Now we only need to prove Lemma \ref{lemma: lower bound for DP}.
\begin{proof}
 We consider the  CSB instance: Suppose there are $m$ base arms, each associated with a weight sampled from Bernoulli distribution. These $m$ base arms are divided into three sets, $S^*, \tilde{S}, \bar{S}$. $S^*$ contains $m$ base arms, which build up the optimal super arm set. $\tilde{S}$ contains $K-1$ “public“ base arms for sub-optimal super arms. These arms are contained in all sub-optimal super arms. $\bar{S}$ contains $m-2K+1$ base arms. each base arm combined with $K-1$ "public" base arms in $\tilde{S}$ builds up a sub-optimal super arm. Totally we have $m-2K+1$ sub-optimal super arms and one optimal super arm. The mean of the Bernoulli random variable associated to each base arm is defined as follow:

$$
w(i)=\left\{\begin{array}{ll}{0.5} &
{ i \in S^* } \\ 
{0.5-\Delta / (B_1K)} & {\text { otherwise }}
\end{array}\right.
$$

The weights of base arms in $\tilde{S}$ are identical, while other weights are i.i.d sampled. The reward of pulling a super arm $S$ is $B_1$ times the sum of weights of all base arm $i \in S$. As a result, the sub-optimality gap of each sub-optimal super arm is $\Delta$. We denote this CSB instance as $\nu_1$.

Now we fix one certain sub-optimal super arm $S_1$. Denote $E_{S_1}$ as the event that super arm $S_1$ is pulled $\leq \frac{B_1K \ln T}{400\varepsilon\Delta}:= t_S$ times. Our goal is to show that $E_{S_1}$ happens with probability at most $\frac{1}{2m}$. If this is true, by union bounds over all sub-optimal super arms, all the sub-optimal super arms will be pulled at least $t_S$ times with prob. $1-\frac{1}{2}$. This means the regret is at least $\Omega\left(\frac{B_1mK \ln T}{\varepsilon} \right)$.

Now we prove that $P_{\nu_1}(E_{S_1}) \leq 1/(2m)$. Our analysis is inspired by the work of \citet{shariff2018differentially}. Consider another CSB instance with all the setting the same as $\nu_1$, except that the mean weights of base arms in $S_1$ are increased by $2\Delta/(B_1K)$ each. We denote this instance as $\nu_2$. Consider the case that rewards are drawn from $\nu_2$. Due to consistent property, the regret of the algorithm is at most $T^{3/4}m\Delta$. For sufficiently large $T$, we have 
$$\frac{T\Delta}{2K}\mathbb{P}_{\nu_2}[E] \leq \frac{(T-t_S)\Delta}{K}\mathbb{P}_{\nu_2}[E] \leq T^{3/4}m\Delta$$.

The first inequality is for sufficiently large $T$. The second inequality is because if $E$ happens in $\nu_2$, the regret is at least $(T-t_s)\cdot \frac{\Delta}{K}$. This means that $\mathbb{P}_{\nu_2}[E] \leq \frac{mK}{T^{1/4}}$.

Now we consider the influence of differential privacy. The result of \citet{karwa2017finite} (Lemma 6.1) states that the group privacy between the case that inputs are drawn i.i.d from distribution $P_1$ and $P_2$ is proportional to $6\varepsilon n \cdot d_{\mathrm{TV}}(P, Q)$, where n is the number of inputs data. We apply the coupling argument in \citet{karwa2017finite} to our setting. Suppose the algorithm turns to an oracle when she needs to sample a reward of super arm $S_1$. The oracle can generate at most $t_S$ pairs of data. The left ones are i.i.d sampled from $\nu_1$, while the right ones are i.i.d sampled from $\nu_2$. Whether the algorithm receive a reward sampled from the left or the right depends on the true environment. The algorithm turns to another oracle if and only if the original oracle runs out of $t_S$ samples. By Lemma 6.1 in \citet{karwa2017finite}, the oracle runs out of $t_S$ samples, i.e. event $E_{S_1}$ happens with similar probability under $\nu_1$ and $\nu_2$. Indeed, the probability of event $E_{S_1}$ happens under $\nu_1$ is less than $\exp{\left(6\varepsilon t_{S} \cdot d_{\mathrm{TV}}(P, Q)\right)}$  times the probability of event $E_{S_1}$ happens under $\nu_2$.

That is, for sufficiently large $T$,
\begin{align*}
    \mathbb{P}_{\nu_1}[E_{S_1}] \leq &\exp{\left(6\varepsilon t_{S} \cdot d_{\mathrm{TV}}(\nu_1, \nu_2)\right)}\mathbb{P}_{\nu_2}[E_{S_1}] \\
    \leq & \exp{\left(24\varepsilon t_{S} \cdot \frac{\Delta}{B_1K}\right)}\mathbb{P}_{\nu_2}[E_{S_1}]\\
    \leq & \exp{\left(0.06 \ln T\right)} \frac{mK}{T^{1/4}} \\
    = & mKT^{-0.19}
    \leq  \frac{1}{2m}. \\
\end{align*}
The second inequality is due to $d_{\mathrm{TV}}(\nu_1, \nu_2) \leq \sqrt{\frac{D_{KL}(\nu_1\|\nu_2)}{2}} \leq 4\Delta/(B_1K)$ by Pinsker's inequality and the setting that the public base arms are identical.

\end{proof}


\end{document}